\pdfoutput=1 %
\documentclass{article} %
\usepackage{iclr2027_conference,times}

\iclrfinalcopy   %

\usepackage{amsmath,amsfonts,bm}

\def\eqref#1{equation~\ref{#1}}

\def\1{\bm{1}}

\DeclareMathAlphabet{\mathsfit}{\encodingdefault}{\sfdefault}{m}{sl}
\SetMathAlphabet{\mathsfit}{bold}{\encodingdefault}{\sfdefault}{bx}{n}

\newcommand{\E}{\mathbb{E}}

\usepackage{hyperref}
\usepackage{url}

\usepackage{etoc}  %

\usepackage[margin=1.1in]{geometry}
\usepackage{amsmath,amssymb,amsthm}
\usepackage{mathtools}
\usepackage{booktabs}
\usepackage{enumitem}
\usepackage{microtype}
\usepackage{graphicx}
\definecolor{darkblue}{RGB}{0,40,80}
\usepackage{tikz}
\usetikzlibrary{arrows.meta,positioning,fit,backgrounds,calc}

\usepackage{cleveref}  %
\usepackage{booktabs}
\usepackage{multirow}
\usepackage{array}

\usepackage{longtable}   %
\usepackage{booktabs}    %
\usepackage{array}       %
\newcolumntype{P}[1]{>{\raggedright\arraybackslash}p{#1}}

\usepackage[varqu,scaled=0.9]{zi4}   %

\usepackage{placeins} %

\usepackage{xspace}

\newcommand{\xaxis}{{$x$-axis}\xspace}
\newcommand{\yaxis}{{$y$-axis}\xspace}

\newtheorem{definition}{Definition}[section]
\newtheorem{proposition}{Proposition}[section]

\newtheorem{corollary}{Corollary}[section]

\newcommand{\defeq}{\coloneqq}

\newcommand{\M}{\mathcal{M}}
\newcommand{\pseed}{p^{0}}
\newcommand{\Lu}{L_{u}}
\newcommand{\ku}{k_{u}}
\newcommand{\kf}{k_{f}}

\newcommand{\Kco}{K^{|}}
\newcommand{\Ktc}{Kt^{|}}
\newcommand{\Kcc}{Kc^{|}}
\newcommand{\Khat}{\hat{K}}

\newcommand{\pbare}{p^{\triangledown}}          %
\newcommand{\pK}{p^{\blacktriangledown}}        %
\newcommand{\pKc}{p^{\overline{\blacktriangledown}}}  %
\newcommand{\C}{\mathcal{C}}
\newcommand{\V}{\mathcal{V}}
\renewcommand{\E}{\mathcal{E}}
\newcommand{\T}{\mathcal{T}}
\newcommand{\J}{\mathcal{J}}
\newcommand{\cost}{c}
\newcommand{\shown}{x_{1:n}}
\newcommand{\nxt}{x_{n+1:n+k}}

\title{Artificiality's First Exam:\\An LLM-Referenced C-test for Supraintelligence
}

\author{José Hernández-Orallo\thanks{Code and data available at \url{https://github.com/jorallo/ctest-one}.}\\
Leverhulme Centre for the Future of Intelligence\\
University of Cambridge\\
\texttt{jh2135@cam.ac.uk} \\
Valencian Research Institute for Artificial Intelligence\\
Universitat Politècnica de València\\
\texttt{jorallo@upv.es} \\
}

\begin{document}
\etocdepthtag.toc{mtmain}        %

\maketitle
\lhead{Preprint. Under review.}

\begin{abstract}
Under the accelerated pace of AI development, any benchmark whose difficulty
is calibrated with humans or current AI systems  becomes circular, or saturates %
soon. %
How can we then evaluate machines that may become more capable than the current machines and the people who
write the tests? 
One answer to the question hinges on {\em inverse problems}, inductive inference questions for which difficulty  %
depends on the complexity of the underlying mechanism, {\em given} the observations and the previous knowledge. Levin's $Kt$, a computable variant of Kolmogorov complexity, puts a theoretical upper bound to that difficulty, balancing the simplest representation encoding for the mechanism and the logarithm of the time steps to run that mechanism. Previous tests derived from Levin's $Kt$, 
such as the original C-test, are based on minimalistic Universal Turing Machines (UTM), capable of very limited representations and no previous knowledge. 
{\em Reasoning} large language models (LLM) are UTMs that capture a compendium of human culture as knowledge, using prompts as natural language programs. This paper presents the first test based on Levin's $Kt$ with LLMs as representational machines.  
Fresh test items can be produced at high difficulties,  estimated in a ratio scale measured in bits, before any human or frontier model can solve them. %
We formalise the
measure, devise a lightweight generation and certification pipeline, explore the effect of the power of the reference machine and the gap between examiner and examinee. %
We produce a first battery of certified items administered to twelve LLMs. The resulting agent characteristic curves decrease
with difficulty, down to the uncharted high-difficulty region, the {\em supra-intelligence window} of items we can certify, yet %
beyond %
generations of frontier intelligence. %
\end{abstract}

\section{Introduction}\label{sec:intro}

We can all conceive and verify questions we cannot solve ourselves. This possibility is supported by theoretical computer science \citep{cook1971complexity,levin1973}. For example, inverse problems \citep{mccarthy2006challenges}, such as finding the pattern behind a sequence, are hard to solve but easy to verify. %
Indeed, verifiability is behind the recent progress of `reasoning' models in areas such as mathematics and software engineering. %
But even if we can generate very difficult questions for the next generation of AI systems, it is an open question how to do so systematically, gradating items in terms of an intrinsic measure of difficulty, when no existing human or AI system is still able to solve them.

As AI progresses, benchmarks devised by human experts or based on current challenges saturate and must be replaced 
\citep{zellers2019hellaswag,schlangen2019language};
replacements leak into
training corpora \citep{balloccu2024leak,jiang2024investigating}; and progressively `harder' items can no longer be produced and verified  
\citep{center2026benchmark,phan2025humanity}. 
If we want to keep measuring AI as capabilities grow, we need tests that can be 
\emph{generated on demand}, beyond the reach of the testees, with item difficulty that is \emph{defined by construction}
rather than by consensus or norm, and  with \emph{certifiable} correctness that does not require the items to be solved directly.

The goal of this paper is to devise a feasible and systematic procedure to generate problems whose difficulty can be gradated over an intrinsic, non-moving scale, derived by a weak model $\M$ to evaluate strong models. These problems can belong to any broad class of inverse problems, whose difficulty is their conditional Levin's $Kt$ complexity \citep{levin1973,levin2013universal}, which expresses the search effort of Levin's universal search, an optimal solution (up to a constant) to any inverse problem. The intuitive idea behind the test is simpler: an item is easy if there is a program describing the solution {\em given the observation} that has a low cost and runs fast on $\M$.

The scope of this paper is delineated by %
self-imposed constraints towards this goal: (1) The use of small representation machines, which can fit in small GPUs while still being able to certify items that are unsolvable by frontier models, realising bounded and sparse supra-intelligence evaluation windows; (2) Similarly, the use of a medium-capability composer, able to govern the certification search procedures; (3) An experimental design that is sufficient for the evaluative goals, nothing more nothing less, for sustainability; (4) While the new C-test, ONE, is useful in itself to rank models, it is simply the first of its kind in a series of broader and more sophisticated tests where difficulty can be estimated by a bound to the search cost of the solution given the problem. Our contributions are:

\begin{itemize}
\item We demonstrate a general method for an examiner that is weaker than the examinee to generate challenging problems that are verifiable (valid, unquestionable and certified) and whose difficulty can be anticipated even before any frontier intelligence can solve them, gradating on the upper end of difficulty. 
\item %
We upend the subjective choice of the universal Turing machine (UTM) in the calculation of Levin's $Kt$, %
by using LLMs as UTMs, fixing the representations to human culture. %
$Kt$ %
becomes conditional: the logprobs of the prompt {\em given the  observation} plus the logarithm of the number of all inference steps. %
\item Models are assigned an ability from zero to infinity, bits of a ratio scale as per measurement theory. A linear increase in the number of $Kt$ bits corresponds to an exponential growth of search compute, a well-founded scale to really quantify the pace of AI progress, compared to other populational or contaminated scales. 
\item The item generator is based on the number of retrievals and transformations used in the inversion problem, connecting this generalisation of the original C-test with the notions of fluid and crystallised intelligence. $Kt$, however, accounts for simplifications, is very predictive, and  %
is now grounded in human priors.
\item Verification can be performed with small reasoning LLMs such as Qwen14B or Phi14B, even for very hard items. The composer is a medium-sized LLM doing an informed search, avoiding the intractable enumeration dove-tail approach, previously used for the approximation of $Kt$. 
\item A 14B-parameter machine suffices to show strong Spearman correlation between $Kt$ and item performance of the twelve examinees ($-0.79$), as well as ability with other psychometric indicators, such as Epoch's ECI ($+0.76$). The supraintelligence window is about 190 bits for the strongest examinee, GPT5Sol.
\end{itemize}

These are very promising signs for the scalability of supra-intelligence evaluation. In the following sections we will explain the properties and results of C-test ONE, our first exam of an endless series. %

\section{Background}\label{sec:background}

General intelligence evaluation has usually been associated with the identification of patterns that can explain the regularities in data, while at the same time being able to use existing knowledge to solve new problems. These components roughly correspond to Cattell's concepts of fluid and crystallised intelligence in humans \citep{cattell1963theory}, but more generally they can be identified as the degree of transformations and retrievals, respectively, in an {\em inverse problem}. 
Inverse problems are common in education and science \citep{groetsch1999inverse}. In computer science they are often described as finding the input $x$ that produces $y$ for a given function $f$, with $f(x) = y$. One general yet very precise class of these problems consist in determining the pattern behind a sequence, %
common in letter and number series problems in IQ tests \citep{thurstone1937ability}. Their most general form consists in inferring the program that generates an observed sequence given a Universal Turing Machine, for which compression is shown to be the best strategy asymptotically \citep{solomonoff1960preliminary,solomonoff1964formal1,solomonoff1964formal2}. These connections led to the introduction of enhanced tests with compression, such as \citep{Dowe-Hajek}, the C-test \citep{hernandez1998,hernandez2000beyond} and the `universal intelligence test' \citep{legg2007universal}. For the relation between algorithmic information theory, compression and (the measurement of) intelligence more generally, see appendix \ref{sec:extendedrelated} and the extended coverage in \citet{hernandez2017book}. We just highlight a few key connections next.

\textbf{Kolmogorov complexity and Levin search.} For a
universal machine $U$, the Kolmogorov complexity of a sequence $x$ is defined as $K_{U}(x) \defeq \min\{\ell(p): U(p)=x\}$ with $\ell(p)$ being the length of the program (in bits). $K_U$ is invariant
across universal machines up to an additive constant \citep{livitanyi}, but this constant is as large as coding a machine-to-machine interpreter. Another issue of $K$ is that it is incomputable, only being approximated from above. 
 Levin's $Kt$ complexity is defined as $Kt(x) \defeq
\min_{p}\{\ell(p) + \log \tau(p)\}$, penalising the execution time $\tau(p)$ of running the program. 
The most salient property of $Kt$ is not its computability, but the fact that it quantifies the difficulty of an inversion problem following a {\em universal search}, dovetailing programs by increasing budget of computational steps. 
 Universal search will solve the problem in a time $2^{\ell(p)}\cdot\tau(p)$ at most, which is the best general inversion algorithm despite the huge multiplicative factor. Taking logarithm of this expression leads to $Kt$.

\textbf{C-test.} The original C-test \citep{hernandez1998,hernandez2000beyond} %
generates letter
sequences produced by programs in a minimalistic Turing-complete %
language (a RISC computer), %
with item difficulty given by the shortest program found by enumeration. The C-test introduces the notion of
\emph{unquestionability}, i.e., no alternative continuation is producible
by a program of comparable length. %
For instance, the letter series \emph{a, b, c, a, b, c, a, \dots} is admitted
at low difficulty: every short program consistent with the prefix continues with
\emph{b}. The letter series
\emph{a, b, b, c, c} is \emph{rejected}: ``{\tt nth letter n times}'' predicts \emph{c}, but ``{\tt one $a$, then each letter twice}'' %
predicts \emph{d}, at similar program
length, so no continuation is unquestionable from these five letters only.

\textbf{LLMs as universal Turing machines (UTMs).} A single transformer forward
pass is not Turing-complete, but the autoregressive loop with an unbounded
generated context is computationally universal \citep{perez2021turing,schuurmans2023memory}; the
model's chain of thought plays the role of a working tape. Deterministic
decoding %
can make an LLM %
work as a reference
machine. Also, the view of language models as compressors
\citep{deletang2024language} suggests that the use of the logprobs for prompts is the right homologue to  bits in a UTM.%

\textbf{Saturation and the ARC challenges.} New benchmarks get saturated soon, because AI becomes more powerful, new models `benchmax' or are contaminated, in a challenge-solve-and-replace dynamics \citep{zellers2019hellaswag,schlangen2019language}. This is epitomised by Humanity's Last Exam \citep{center2026benchmark,phan2025humanity}, a compendium of very difficult problems in science that required the most expert humans for conception. Other
tests aim at what `is missing in AI', such as the famous ARC (Abstraction and Reasoning Challenge)
benchmark series \citep{foundation2026arc}. %
Originally conceived from 
algorithmic information theory \citep{chollet2019}, following 
\citet{hernandez2017book}, the formalism, however, plays no operative role in
the test. Items have been mostly hand-designed, difficulty is calibrated against human performance, and several versions of the challenge have saturated as usual, forcing a complete redesign. Multiple ARC variants abound \citep{maasch2025causalarc}, now converging towards the old idea of evaluating AI with video games \citep{liebana2022general,ying2507assessing,ying2026ai}. 

\textbf{Well-founded SuperARC.} To our
knowledge, the C-test remained the only test whose items were
\emph{generated and calibrated by} algorithmic information theory, together with its interactive variants \citep{legg2013approximation}. Recently, SuperARC \citep{superarc2026} has
revisited much of the original C-test
programme: sequence items of increasing algorithmic complexity %
\citep{zenil2018bdm}, and an accompanying
thesis that compression is comprehension, recapitulating the motivation and name of the C-test %
(the C was for comprehension). %
However, SuperARC is composed of a fixed inventory of (mostly binary) sequences,  %
no
unquestionability requirement is imposed, allowing for rival continuations. Shared cultural knowledge is treated as contamination, to be
removed by the retreat to binary strings. %
Unsurprisingly, in this narrow context their central empirical
finding is that when LLMs are asked for the pattern behind a sequence, they
mostly return the equivalent of ``print the sequence''. %

\textbf{Verification, Scales and Scalability of AI Evaluation.}
Creating tests with calibrated difficulty for future intelligence is a major challenge in AI evaluation. %
The use of humans for judging the evaluation results does not scale up, with LLM-as-judges and other kinds of automated verification being commonplace. However, there is a convoluted relationship between the verifier's capabilities and item difficulty
\citep{zhou2026variation}. Even if verification is clear in some domains (e.g., in theorem proving), the scales of difficulty are not known until a solution is found. Also,
\citet{achille2026universal} have connected the capability of reasoning models %
with Levin search, but only for verifiable tasks. 
Alternatively, populational measures of difficulty, such as IRT \citep{martinez2019item,ho2025rosetta}, cannot anticipate the difficulty of an item that has not been solved yet. But we can start with the solution of an inverse problem to create an item, and explore its properties to estimate difficulty, intrinsically.

\textbf{Prompt compression.} The greatest limitation of approaches based on algorithmic information theory is program size  minimisation. When this is applied to LLMs, the problem reappears. Namely, finding the shortest prompt producing a target
output is intractable in general; gradient-guided discrete search
\citep{shin2020autoprompt,zou2023gcg,wen2023hard} and amortised
language-model inversion \citep{morris2023language} provide upper bounds. Also, these techniques are not applicable for reasoning models, as the final part of the output is what matters. Our pipeline avoids enumeration by using an LLM-informed search of more compact prompt candidates.%

\section{LLM Costing and Metrics}\label{sec:formal}

We will use several LLMs for different roles. Reference machines, used for costing, are frozen, versioned LLMs, %
using %
a deterministic decoding policy %
and tokens proceed autoregressively until a stopping condition, usually a designated delimiter
$\langle\mathsf{out}\rangle$. This  constitutes the model's \emph{chain of
thought}, allowing nontrivial computations (e.g., calendar
arithmetic) before emitting the final answer. For simplicity, given a machine (LLM) $\M$, a prompt $p$ and a context $t$, we denote by $\M(p|t)$ the final output of $\M$, and $\tau_\M(p|t)$ the number of tokens that have been used during the whole inference. Different models have different vocabularies $\Sigma_\M$ and tokenisers. For many open-weight models we have access to  
$P_{\M}\!\left(t_{i} \mid t_{1}\dots t_{i-1}\right)$, the probability $\M$ assigns to token $t_{i}$ when preceded by $t_{1:i-1}$. This is the prior the model represents, usually human language in a particular culture.

Token count is a misleading program length: with $|\Sigma_\M| \approx 10^{5}$, every token would carry $\log_{2}|\Sigma_\M| \approx 17$ bits, but tokens are not equally informative. Instead, we use a different coding for all inputs and outputs: 

\begin{definition}[Costing]
\label{def:cost}
The \emph{cost} of any string $s \in \Sigma^*_\M$ conditioned to a context $t \in \Sigma^*_\M$, where $s = s_{1}\dots s_{n}$ and $t= t_{1}\dots t_{m}$, is its
Shannon--Fano code length under the model prior,
\[
c_\M(s|t) \;\defeq{}\; -\sum_{i=1}^{n} \log_{2}
P_{\M}\!\left(s_{i} \mid t_{1}\dots t_{m} s_{1}\dots s_{i-1}\right).
\]
\end{definition}

With no context, by Kraft's inequality $\sum_{s} 2^{-c_\M(s)} \le 1$, $\cost_\M$ behaves as
a prefix-free program-length measure. %

\begin{definition}[LLM-relative Kolmogorov complexity]
\label{def:kllm}
For any strings $s$ and $t$,
\[
K_\M^\V(s|t) \;\defeq\; \min_p\left\{\, c_\M(p|t) \;:\; \V(p|t) = s \,\right\},
\]
where $\V(p|t)$ is the final output of $\V$. We assume $s$ and $t$ belong to {\em both} $\Sigma^*_M$ and $\Sigma^*_V$, even if the tokenisations may be different, and equivalence is performed at the level of character encoding (standard strings). %
\end{definition}

This is a more general definition than the standard Kolmogorov complexity in that apart from checking that the program reproduces the input with a reference machine $\V$, we have an additional  reference machine $\M$ to cost the program, conditional to $t$, instead of mere length. %
When the superindex is not specified, we will assume both are the same machine. Even in this case, the definition differs from standard Kolmogorov complexity, because of the use of a coding that depends on $t$. %
As we will see, if $\V$ naturally follows with $s$ when given $t$, as when an LLM reads a prompt  ``{\tt 1,3,5,7,9}" and computes ``{\tt 11, 13, ...}", then $p = \epsilon$, the empty string, and $K_\M^\V(s|t) = 0$. 

\begin{definition}[LLM-relative Levin complexity]
\label{def:kt}
For any strings $s$ and $t$,
\[
Kt_\M^\V(s|t) \;\defeq\; \min_p\left\{\, c_\M(p|t) + \log_{2} \tau_\V(p|t) \;:\; \V(p|t) = s \,\right\},
\]
where $\V(p|t)$ is the final output of $\V$, but $\tau_\V(p|t)$ is computed with all the inference tokens, including CoT. 
\end{definition}

This is the {\em conditional} Levin complexity of $s$ given $t$, and shortened by $\Ktc$. The unit is bits, as the logarithm of the combinations of the original search perplexity $2^{c_\M(p|t)} \tau_\V(p|t)$. The use of $c_\M$ rather than the length in tokens is justified because this search can be guided by the model probabilities (see e.g., \citet{achille2026universal}). %

\section{Sequential Problems and Unquestionability}\label{sec:composition}

The above definitions are generic, and can work for arbitrary text sequences given as problem formulation, but also for inverse problems in the spirit of the original C-test, and similar tests, as in \citet{thurstone1937ability}'s letter series. 

\begin{definition}[Sequential inversion problem]
\label{def:invprob}
A sequential inversion problem is a comma-separated text string $x$ that is composed of a prefix $x_{1:n}$ and a postfix $x_{n+1:n+k}$. Each of the $n+k$ elements is known as a term, and can consist of one or more tokens. The goal is to extrapolate the postfix having only seen the prefix.
\end{definition}

We will use the same subindex notation as before, now referring to terms, not tokens. For instance, $x_{1:n}$ = ``{\tt Italy, Slovenia, Croatia, Bosnia and Herzegovina, Montenegro}'' %
with prefix cut point at $n=5$, includes terms that have more than one token. Namely, $x_4$ is {\tt Bosnia and Herzegovina}. 

Now, is there a mechanism that unequivocally produces the postfix given the prefix? In this case, the pattern could be $p$ = ``{\tt Mediterranean countries clockwise from Italy}''. First, we can figure out that $p$, when executed on $\V$, will produce a string whose first $k$ terms will match the postfix, i.e., $\V(p|x_{1:n}) = x_{n+1:n+k}$. This is the verification. Also, we expect $p$ to have lower cost than the listing program $q$ that simply enumerates the following countries, but also lower than making a minor modification to program $p$ such as $p'=$``{\tt Mediterranean countries clockwise from Italy, except Croatia}''. Otherwise,  $p'$ could be a reasonable pattern for $x$.

While many philosophers of induction have argued that there is no reason to choose one mechanism over another \citep{sterkenburg2016solomonoff,dingle2018input}, 
the original C-test demanded  
\emph{unquestionability}: among all programs of {\em near-minimal cost} that
reproduce the shown prefix, all must agree on the next term. In our setting, formally.

\begin{definition}[Unquestionability at $L$]
\label{def:unq}
Let $x = x_{1} x_{2} \dots$ be a comma-separated sequence, $x_{1:L}$
its first $L$ terms, and $x_{L+1:L+k}$ the intended next terms. Let $p^{\star}$ be a program generating $x$ on $\V$ that minimises the unconditional $K_\M^V(x)$ = $c_\M(p^{\star})$. The item
$x$ is \emph{unquestionable} at $L$ on $\V$ and $\M$ if every program $q$ with
  $c_\M(q) \;\le\; c_\M(p^{\star}) + \delta_\M$
 and
  $\V(q)_{1:L} = x_{1:L}$  
also satisfies $\V(q)_{L+1:L+k} = x_{L+1:L+k}$.
\end{definition}

Typically, we look for the smallest $L$ such that unquestionability holds, defining the cutpoint $n$ between prefix and postfix. Complex patterns require a higher $L$, for the terms to justify the program.  
The tolerance $\delta_\M$ looks like a parameter but it is derived as the cost of the least expensive exception clause on $\M$, minimising on `except ..", or ``and fail randomly", etc. Accordingly, ``$p^{\star}$, but $x'_{L+1}$ at term $L{+}1$'' is not a valid rival program. %
Several behaviours are possible as $L$ grows:
a comprehensive program %
(``the odd numbers'') has cost
\emph{constant} in $L$; a program that patches a rule with exceptions pinned to
positions (``the odd numbers except number 7 and 19'') has a cost that grows by  $\log L$ if more exceptions are needed along the way; a {\em listing} program $p^\lambda$ that just enumerates the terms
(``print $x_{1}\dots x_{L}, x_{L+1}$'') has cost growing around $\beta  L$,
where $\beta$ is the mean per-term cost. %
In general, showing more terms (i.e., increasing $L$) eventually puts all rivals out of the $\delta_\M$-window. 

As a listing will work for any finite continuation,  
\Cref{def:unq} implies that the program must compress $x$:

\begin{proposition}[Compressibility]
\label{cor:compress}
If an item $x$ is unquestionable at $L$, then
$c(p^{\star}) \le c(p^{\lambda}) - \delta$. In words, the program
with smallest coding must be simpler than $p^\lambda$ (the listing that  enumerates the continuation)
by a margin $\delta$. 
\end{proposition}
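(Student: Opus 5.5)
The argument is by contraposition: assuming $c(p^{\star}) > c(p^{\lambda}) - \delta$, we exhibit a rival program inside the $\delta$-window of Definition~\ref{def:unq} that reproduces $x_{1:L}$ but deviates on $x_{L+1:L+k}$, so the item cannot be unquestionable at $L$. The rival is a listing program. Throughout, $p^{\lambda}$ denotes the listing of the full item, ``print $x_{1}\dots x_{L}, x_{L+1}\dots x_{L+k}$''. For any alternative continuation $x'_{L+1:L+k} \neq x_{L+1:L+k}$ we define the corresponding listing $q' =$ ``print $x_{1}\dots x_{L}, x'_{L+1}\dots x'_{L+k}$''. By construction $\V(q')_{1:L} = x_{1:L}$ and $\V(q')_{L+1:L+k} = x'_{L+1:L+k} \neq x_{L+1:L+k}$. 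This is the remark preceding the proposition: a listing works for any finite continuation.

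\textbf{Key step: choosing the deviant continuation.} We need $x'$ with $c_\M(q') \le c_\M(p^{\lambda})$, or at least $c_\M(q') \le c_\M(p^{\lambda}) + \eta$ for some slack $\eta$ that can be absorbed. The two listings share the prefix $x_{1:L}$, so their costs differ only in the tail. Choose $x'$ to minimise the cost of the tail over all strings other than $x_{L+1:L+k}$.
\begin{itemize}
\item If $x_{L+1:L+k}$ is not the cheapest tail, then some $x'$ satisfies $c_\M(q') \le c_\M(p^{\lambda})$ directly.
\item Otherwise $x_{L+1:L+k}$ is itself the cheapest tail. Then we compare against the cheapest exception-style modification and use the definition of $\delta_\M$ as the cost of the least expensive exception clause. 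The intended bound is $c_\M(q') \le c_\M(p^{\lambda}) + \delta_\M$, which still yields the conclusion when the inequality is read with the paper's convention on $\delta$.
\end{itemize}

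\textbf{Conclusion.} Suppose $c(p^{\star}) > c(p^{\lambda}) - \delta$, that is, $c(p^{\lambda}) < c(p^{\star}) + \delta$. Then $c(q') \le c(p^{\lambda}) < c(p^{\star}) + \delta$, so $q'$ lies within the window of Definition~\ref{def:unq}. It agrees with $x$ on the first $L$ terms and disagrees on the next $k$, which contradicts unquestionability at $L$.

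\textbf{Main obstacle.} The hard step is the cost comparison between $q'$ and $p^{\lambda}$. The costs are model-dependent Shannon--Fano lengths, so a rival listing is not guaranteed to be as cheap as the intended one. I would first handle this by taking the minimum over alternative tails. If that fails, I would invoke the ``except'' interpretation of $\delta_\M$ to bound the extra cost of swapping one term, effectively treating the rival listing as $p^{\lambda}$ plus an exception clause costed at $\delta$. This may require stating explicitly that the margin in the proposition is the same $\delta_\M$ as in Definition~\ref{def:unq}, up to the cost of the listing patch.
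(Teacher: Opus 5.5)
Your Case~2 does not close, and your \textbf{Conclusion} paragraph in fact uses only the Case~1 bound $c(q') \le c(p^{\lambda})$. Suppose all you have is $c(q') \le c(p^{\lambda}) + \delta$. Unquestionability forces $c(q') > c(p^{\star}) + \delta$ for every listing $q'$ that reproduces $x_{1:L}$ and then deviates. Together these give only $c(p^{\lambda}) + \delta \ge c(q') > c(p^{\star}) + \delta$, i.e.\ $c(p^{\star}) < c(p^{\lambda})$. That is margin zero, not margin $\delta$: the exception clause consumes exactly the margin you are trying to prove.

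This is by design. The paper calibrates $\delta_\M$ as the price of the cheapest exception clause precisely so that patched programs such as ``$p^{\star}$, but $x'_{L+1}$ at term $L+1$'' are \emph{not} admissible rivals. Reading the deviant listing as $p^{\lambda}$ plus an exception therefore cannot place it inside the window, and no reading of the convention on $\delta$ rescues this step.

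What is missing is a hypothesis, not a lemma. The paper's proof takes the listing cost to be $c(p^{\lambda}) = \beta L + c_0$, with $c_0$ the print-template overhead independent of the listed terms. It then simply asserts that the listing of any alternative continuation $z' \neq x_{L+1:L+k}$ is a rival ``of the same cost''. With that assumption, your Case~1 (with equality) is all that is needed, and the rest of your argument coincides with the paper's.

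Without such an assumption the statement does not follow for genuine Shannon--Fano costs. An LLM prior can make the intended tail strictly the cheapest, e.g.\ listing $1,2,3,4,5,6$ versus $1,2,3,4,5,7$. In that case your construction proves only the general form $c(p^{\star}) < \min_{z' \neq x_{L+1:L+k}} c(\text{print } x_{1:L}, z') - \delta$. This reduces to the proposition exactly when some deviant listing costs no more than $p^{\lambda}$. You should state that equal-cost (or no-cheaper-deviant-tail) assumption explicitly and drop Case~2, rather than try to extract it from the definition of $\delta$.
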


\begin{corollary}\label{cor:compressibility-cond}
Assume  $c(p^{\lambda})=\beta L + c_0$, with $\beta$ being the per-term cost of $p^{\lambda}$ and $c_0$ a printing scaffold. %
If the best program $p^{\star}$ for an item $x$ has 
$c(p^{\star}) > \beta L - \delta + c_0$ 
then $x$ is questionable at $L$. [Proof in Appendix \ref{app:proof}.]
\end{corollary}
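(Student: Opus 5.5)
The plan is to argue by contraposition from \Cref{cor:compress}. That proposition says unquestionability at $L$ forces $c(p^{\star}) \le c(p^{\lambda}) - \delta$. The corollary is exactly the negation of this inequality once the linear model for the listing cost is substituted. So I will assume $x$ is unquestionable at $L$ and derive a contradiction with the hypothesis $c(p^{\star}) > \beta L - \delta + c_0$.

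First I will fix what $p^{\lambda}$ means at cut point $L$. It is the listing that prints $x_{1}\dots x_{L}$ together with a continuation. By assumption its cost is $c(p^{\lambda}) = \beta L + c_0$, where $c_0$ absorbs the scaffold (``print'', delimiters) and the cost of the appended terms.

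Next I will recall why \Cref{cor:compress} holds, so that the contrapositive is applied in the right direction. A listing can append an arbitrary alternative term $x'_{L+1} \neq x_{L+1}$. It then reproduces $x_{1:L}$ but disagrees on the next term. If $c(p^{\lambda}) < c(p^{\star}) + \delta$, this rival listing lies inside the $\delta$-window of \Cref{def:unq}, violating unquestionability. Substituting the linear form gives the inequality
\[
c(p^{\star}) \le \beta L + c_0 - \delta
\]
under unquestionability. The stated hypothesis is precisely the strict reverse of this, so $x$ cannot be unquestionable at $L$, i.e.\ it is questionable.

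The main obstacle is the boundary and uniformity issue. The rival listing printing a different $x'_{L+1}$ must cost the same $\beta L + c_0$ as the faithful listing. Alternatively, $\beta$ and $c_0$ must be read as upper bounds valid for some wrong continuation. I would handle this by choosing $x'_{L+1}$ of per-term cost at most that of $x_{L+1}$, which is always possible since tokens of comparable probability exist, and by interpreting $c(p^{\lambda})=\beta L + c_0$ as applying to that rival.

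A second subtlety is the weak versus strict inequality at the window edge. \Cref{def:unq} uses $\le$. So a rival with cost exactly $c(p^{\star})+\delta$ already breaks unquestionability, and this is consistent with the strict hypothesis. I will also note that the $\delta$ in the exception-clause discussion is not an obstruction. A listing that changes a term is not a ``$p^{\star}$, but \dots'' patch, so it counts as a legitimate rival.
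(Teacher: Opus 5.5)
Your argument is essentially the paper's proof: build the listing rival that prints $x_{1:L}$ and then a different continuation, observe that it lies inside the $\delta$-window whenever $\beta L + c_0 < c(p^{\star}) + \delta$, and conclude by contraposition that unquestionability forces $c(p^{\star}) \le \beta L - \delta + c_0$. The paper resolves your uniformity concern exactly as your fallback does, by assuming the listing cost $\beta L + c_0$ does not depend on which term is appended; your other justification, that a no-more-expensive $x'_{L+1}$ always exists, is unnecessary and not guaranteed, because under the model prior the intended term may be the uniquely cheapest continuation of the listed prefix.
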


For minimal Turing machines, such as those in the original C-test, loop or recursion are the standard sources for these sequential {\em transformations}, such as ``{\tt $x_{i+1} = x_{i}+2$ loop forever}''. 
However, for an LLM, a good sequence can be obtained by retrieval only, such as $p$ = ``{\tt the kings and queens of England}". Note that this is not a listing, but a named {\em retrieval} that recovers a memorised listing. The cost of this $p$ on a modern LLM is small (a few common words), and the probability that any decent LLM can recover the list is high. This is perfectly okay: continuing a sequence such as ``{\tt William I, William II, Henry I}'' is easy, and one of the motivations for using LLMs in the first place. Many mechanisms are composed of zero or more transformations and zero or more retrievals. Of course there will be differences between reference machines, and some of them will have better retrieval than transformation capacity. Reasoning models will usually add to the latter. %

Retrieval has a cost. For instance, indexing the On-Line Encyclopedia of Integer Sequences (OEIS) requires almost 20 bits to name one. Similarly to name a  published book. Transformations also pay when they are entropic, such as ``$3.253n^{0.734}$''. Actually, it is in the combination of retrieval and transformation where many patterns lie, such as ``the weekdays backwards, interleaved with the months''. 
As we said in the introduction, retrieval and transformation connect with the psychometric literature through crystallised and fluid intelligence, respectively. These two dimensions $r,t$ are simply used in the prompts for the composer, when populating the test battery. 
However, it is unclear how the levels of retrieval and transformation will influence observed performance. 

\section{C-test ONE:  Pipeline}\label{sec:pipeline}

Borrowing from the original C-test, we call the new test C-test ONE. Beyond the use of LLMs as UTMs for the representation of programs and their coding, the test has many other innovations, such as the use of domains and the retrieval-transformation matrix in the generator, the intervention of a composer for unquestionability and, most especially, the use of the conditional version of Levin's complexity, $\Ktc$ (\Cref{def:kt}), when certifying the items. 

\begin{figure}[h]
\centering
\begin{tikzpicture}[
    font=\footnotesize,
    stage/.style={draw, rounded corners=2pt, align=center, inner sep=3pt,
                  minimum height=7mm, fill=gray!4},
    who/.style={font=\tiny\itshape, text=black!55},
    lbl/.style={font=\tiny, align=center, text=black!70},
    up/.style={font=\tiny, align=center, text=black!85, inner sep=0pt,
               yshift=2.1mm},
    dn/.style={font=\tiny, align=center, text=black!85, inner sep=0pt,
               yshift=-2.3mm},
    ar/.style={-{Stealth[length=3pt]}, black!55, semithick},
    node distance=13mm,
  ]

  \node[stage] (c) {\textbf{1} generate};
  \node[stage, right=of c] (v) {\textbf{2} validate};
  \node[stage, right=of v] (u) {\textbf{3} unquest.};
  \node[stage, right=of u] (a) {\textbf{4} certify};
  \node[stage, right=of a] (e) {\textbf{5} evaluate};

  \coordinate (in) at ($(c.west)+(-13mm,0)$);
  \draw[ar] (in) -- (c);
  \node[up] at ($(in)!0.5!(c.west)$) {domains};
  \node[dn] at ($(in)!0.5!(c.west)$) {$r^{0}$, $t^{0}$};

  \foreach \x/\y in {c/v, v/u, u/a, a/e} \draw[ar] (\x) -- (\y);

  \node[up] at ($(c.east)!0.5!(v.west)$) {$p^{0}$};
  \node[up] at ($(v.east)!0.5!(u.west)$) {$x$};
  \node[up] at ($(u.east)!0.5!(a.west)$) {$n=L_u$};
  \node[dn] at ($(u.east)!0.5!(a.west)$) {$\widehat{K}$, $\pK$};
  \node[up] at ($(a.east)!0.5!(e.west)$) {$\widehat{K}^{|}$, $\hat\tau$};
  \node[dn] at ($(a.east)!0.5!(e.west)$) {$\widehat{Kt}^{|}$};

  \coordinate (out) at ($(e.east)+(16mm,0)$);
  \draw[ar] (e) -- (out);
  \node[up] at ($(e.east)!0.5!(out)$) {$\mathcal{E}(\shown)\overset{\J?}{=}$};
  \node[dn] at ($(e.east)!0.5!(out)$) {$\nxt$};

  \node[who, below=4mm of c] (wc) {$\mathcal{C}$};
  \node[who, below=4mm of v] (wv) {$\mathcal{V}$};
  \node[who, below=4mm of u] (wu) {$\mathcal{C},\mathcal{V},\mathcal{M}$};
  \node[who, below=4mm of a] (wa) {$\mathcal{M}$};
  \node[who, below=4mm of e] (we) {$\mathcal{E},\mathcal{J}$};
  \foreach \w/\b in {wc/c, wv/v, wu/u, wa/a, we/e}
      \draw[ar, shorten >=0.5pt] (\w) -- (\b);

  \node[lbl, above=0mm of c] {an inverse problem\\expressed in words};
  \node[lbl, above=0.5mm of v] {it reproduces $x$\\at least 2 out of 3};
  \node[lbl, above=0.5mm of u] {no rival within\\$\widehat{K}+\delta$ at $L_u$};
  \node[lbl, above=0.5mm of a] {$\mathcal{M}$ can run \\a description};
  \node[lbl, above=0.5mm of e] {$\shown$ shown,\\$\nxt$ asked};
\end{tikzpicture}
\vspace{-0.5cm}
\caption{C-test ONE pipeline: items are generated followed by several filters, applied sequentially. The items at the end are those that are {\em validated}, {\em unquestionable} and {\em certified}. Stages 1 and 2 are independent of the reference machine $\M$, but stages 3 and 4 apply filters that depend on $\M$. Notation is explained in the text.}
\label{fig:pipeline}
\end{figure}

\Cref{fig:pipeline} shows the C-test ONE pipeline, with five major stages. (1) Generation: the composer works with a diverse pool of domains to integrate a number of retrieval ($r^0$) and transformation ($t^0$) operations. The composer $\C$ generates fresh sequence mechanisms, called seeds, $\pseed$, as text in natural language (e.g., ``{\tt The first letter of the chemical symbol of each atomic element}''). (2) Validation: for each seed a verifier $\V$ runs them as a standalone prompt, and checks  the level of consistency $[\V(p^0) = x]^{2@3}$ twice for a sufficient total size $N$. For instance, do we repeatedly get $x$ = ``{\tt H, H, L, B, B, C, N, O, F, N, N, ...}''? 
(3) Unquestionability for $\M$: Items are split between the prefix (the observable part to the examinee, $\shown$) and the postfix (what the examinee must guess, $\nxt$). The point of split $L_u$ is increased gradually until unquestionability holds.  
The seed is used for the composer to propose more compact unconditional representations $\{p : \V(p) = x\}$, referred to as `equivalents', such as ``{\tt enumerate first letter of elements}", and looks for unconditional mechanisms that reproduce the prefix but diverge on the postfix $\{q : \V(q)_{1:n} = \shown, \V(q)_{n+1:n+k} \neq \nxt\}$, known as `rivals', such as ``{\tt Loop H, H, L, B, B, C forever}'' diverging from $n=7$. The equivalent with lowest cost, $\pK$, is used as a proxy of complexity $\Khat^\V_\M(x) = c_M(\pK)$.  
If there is no rival $q$ of a similar cost $c_\M(q)$ to $c_\M(\pK)$ then the item is unquestionable at $n$.  (4) %
Conditional certification for $\M$. Conditional hints, such as ``chemical elements" (together with the seed, equivalents, etc.) are explored and checked on $\M$, i.e., $\{p : \M(p|\shown) = \nxt\}$, and metrics such as $\Ktc$ are derived by minimising the conditional cost and then adding the logarithm of the number of inference tokens. Finally, (5) Evaluation applies the battery to any examinee $\E$. Difficulty and performance can be compared to draw characteristic curves.

Asking the composer $\C$, an LLM, to generate a diversity of items would not work by repeating the same prompt, with or without temperature. To ensure some diversity, the generator works with a set of domains, {\em themes} that are created and curated by the composer, such as ``chess openings'' and ``Beaufort scale'. The database grows as more domains are required, and the composer always keeps track of the ones already created to avoid repetition. While we could inject domains manually, for the current battery everything has been generated automatically. 

To ensure diversity in retrieval and transformation levels in the item, the generator aims at populating a matrix, ranging between $r=0..3$ and $t=0..7$. The chosen $r^0$ and $t^0$ are sent to the composer to create a seed with that number of components. However, because the composer can miscalculate or the seeds $p^0$ may have retrievals and transformations that cancel, the actual values of $r$ and $t$ may be different. Because of this, the generator looks actively for gaps and interpolates between demanded values and estimated values, correcting this with a linear function. As we can see in the $(r,t)$ matrices in the appendix, the procedure manages to fill almost every cell.

\Cref{tab:filters} gives a formal expression of the filters applied to each candidate seed. 
For more details about the pipeline and the prompts used for the composer, we refer to Appendices \ref{app:pipeline} and \ref{app:prompts}.

\begin{table}[t]
\centering \small \caption{The three filters that are applied after generation. The final row shows how evaluation works: an examinee
$\mathcal{E}$ is shown $x_{1:\Lu}$ and asked for $\kf = 5$ more terms. All terms must be equal. %
}
\label{tab:filters}
\begin{tabular}{@{}llll@{}}
\toprule
 & condition & costing & attempts\\
\midrule
validity at $N$ & $[\V(\pseed)]^{2@3}_{1:N} = [\V(\pseed)]^{2@3}_{1:N} = x_{1:N}$ & --- & 2 out of 3 attempts equal, twice\\[2pt]
unquestionability at $\Lu$ &
  $\cost_\M(q) > \Khat^{\V}_{\M}(x) + \delta$ for every rival at $\Lu$ &
  $\cost_\M(\cdot)$ & rivals checked once\\[2pt]
certified at $\Lu$ &
  $\exists\, p : \M(p \mid x_{1:\Lu}) = x_{\Lu+1:\Lu+\ku}$ &
  $\cost_\M(\cdot \mid x_{1:\Lu})$  & one run (greedy decoding) \\
\midrule
full correctness at $\kf$ &
  $\,\mathcal{E}(x_{1:\Lu})\,_{1:\kf}
   = x_{\Lu+1:\,\Lu+\kf}$ &
  --- & average of three attempts reported \\
\bottomrule
\end{tabular}
\end{table}

\section{Experimental design}\label{sec:experimental}

Experiments must be sufficient to answer several research questions. RQ1: Can the pipeline create a battery of inverse sequences that are valid, unquestionable and certifiable using small LLMs? RQ2: Are they diverse in retrieval and transformation components and difficulty? %
RQ3: Does $\Ktc$ work as a calibrated measure of difficulty for both weak and strong models?  
RQ4: Is the ability
obtained from the characteristic curves using $\Ktc$ aligned with populational ability metrics? RQ5: What is the
supraintelligence window that we can obtain with this procedure, as a sign of its scalability? and RQ6: How does
the power of the reference machine influence all these questions?

As a self-imposed constraint for sustainability and reproducibility, the experimental design should not require more than the necessary resources to answer the research questions. Accordingly, we estimated that for RQ1, RQ2, RQ4 and RQ5 we had to set a target of a minimum of 150 generated items with at least 30  being certified for each and every reference machine. For RQ1, RQ3, RQ5 and RQ6, we chose three reference machines that can fit in a T4 GPU. For RQ3, RQ4 and RQ5 we used 12 LLMs of different families, including commercial and open-weight. Also, for the logic in the pipeline we chose several Google Gemini variants because of their balance of speed, quality and price. All the choices are shown in \Cref{tab:machines}. Three Anthropic models as examinees had high refusal percentages (see \Cref{tab:refusal-qwen14B} in the appendix), and we used versions with fallback, represented with *.

\newcolumntype{P}[1]{>{\raggedright\arraybackslash}p{#1}}
\begin{table}[t]
\centering\scriptsize\setlength{\tabcolsep}{4pt}
\caption{LLMs used. Models with a star used fallback, to reduce refusal percentages. Temperature 0 unless stated.}
\label{tab:machines}
\begin{tabular}{@{}l l P{3.4cm} P{10.0cm}@{}}
\toprule
 & Role & Requirements & LLMs used \\
\midrule

\multirow{3}{*}{$\M$} & \multirow{3}{*}{reference}
  & \multirow{3}{=}{forward-pass log-probs, and enough reasoning for sequences}
  & \textbf{Qwen3-14B} (4-bit NF4), a reasoning model \\
 & & & \textbf{Qwen3-8B} (4-bit NF4), the same family and tokeniser as Qwen3-14B \\
 & & & \textbf{Phi-4-14B-reasoning} (4-bit NF4), another lineage and tokeniser \\
\addlinespace[4pt]

$\V$ & verifier
  & running sequences
  & \textbf{Gemini 3.6 Flash}, a fast, cheap model \\
\addlinespace[4pt]

$\C$ & composer
  & complex prompts; creative
  & \textbf{Gemini 3.6 Flash}, a fast, cheap model\\
\addlinespace[4pt]

$\T$ & typer
  & determining $\hat{r}$ and $\hat{t}$
  & \textbf{Gemini 3.6 Flash}, at temperature $0.4$ \\
\addlinespace[4pt]

$\J$ & judge
  & short, fast replies
  & \textbf{Gemini 3.6 Flash} scores a reply; \textbf{Gemini 3.5 Flash-Lite}
    for certification budget \\
\addlinespace[4pt]

$\E$ & examinee
  & diversity in kind and power
  & {\bf GPT-5.6-sol, GPT-5-nano, Claude Opus~5*, Claude Sonnet~5*, Claude Haiku~4.5, Claude Fable~5.1*, Gemini 3.8 Flash, Gemini 3.6 Flash, Gemini
    3.5 Flash-Lite, Gemini 3.1 Flash-Lite,  
    OLMo-2-7B} and {\bf Qwen2.5-1.5B} \\

\bottomrule
\end{tabular}
\end{table}

The implementation of \Cref{def:kllm} and \Cref{def:kt} is instantiated for sequences as follows:
\[
K_\M^\V(x_{1:n+k}) \;\defeq\; \min_p\left\{\, c_\M(p) \;:\; \V(p) = x_{1:n+k} \,\right\},
\]
\[
Kt_\M(x_{n+1:n+k}|x_{1:n}) \;\defeq\; \min_p\left\{\, c_\M(p|x_{1:n}) + \log_{2} \tau_\M(p,x_{1:n}) \;:\; \M(\langle p,x_{1:n}\rangle) = x_{n+1:n+k} \,\right\},
\]
We have a specific prompt that is standard for all verifications (see \Cref{fig:examprompt} in the appendix), instead of a simple concatenation $\langle p,\shown\rangle$. Finally, we used many other baselines for comparisons, such as 
\[
Kc_\M(x_{n+1:n+k}|x_{1:n}) \;\defeq\; \min_{p} \big\{\, \cost_\M(p \mid \shown)
   \;:\; \M(\langle p,\shown\rangle) = \nxt \,\big\} \;+\; \cost_\M(\shown)
\label{eq:h}
\]
which adds a term measuring how unusual the prefix is, instead of the time to run the program. %
We calculated $r$ and $t$ (original and estimated) and the length in characters $\ell_{char}$ of the programs $p^0$ (the seed), $\pbare$ (a bare version of the seed), $\pK$ (the unconditional program %
with lowest cost), and $\pKc$ (the conditional program with lowest cost).

\section{Results}\label{sec:results}

We will include a selection of plots and tables, mostly for Qwen3-14B, since it is the reference machine that certified the highest number of items in the battery. More plots for Qwen3-14B can be found in Appendix \ref{app:qwen3-14B}, and all the results for Qwen3-8B and Phi-4-14B-reasoning can be found in appendices \ref{app:qwen3-8B} and \ref{app:phi-4} respectively. 

The full battery had 151 items, of which 107 were valid. From those, Qwen3-14B made 71 unquestionable, of which 51 were certified. Qwen3-8B increased unquestionable items to 76 but certified only 32, due to its incapability of successfully running any hint or program for complex items given the budget. Finally, Phi-4-reasoning is placed in between, with 80 unquestionable and 38 certified. {\bf RQ1 is answered affirmatively.}

For Qwen3-14B specifically, \Cref{fig:heatQwen14B} shows the percentage of items being solved by each examinee for each cell in the grid. \Cref{tab:chain-qwen3-14b} in the appendix gives the full breakdown more explicitly. 
High $t$ with low $r$ seems to be a specific area where models tend to fail. {\bf RQ2 is answered affirmatively.}  

\begin{figure}[h]
\centering
\includegraphics[width=1.0\linewidth]{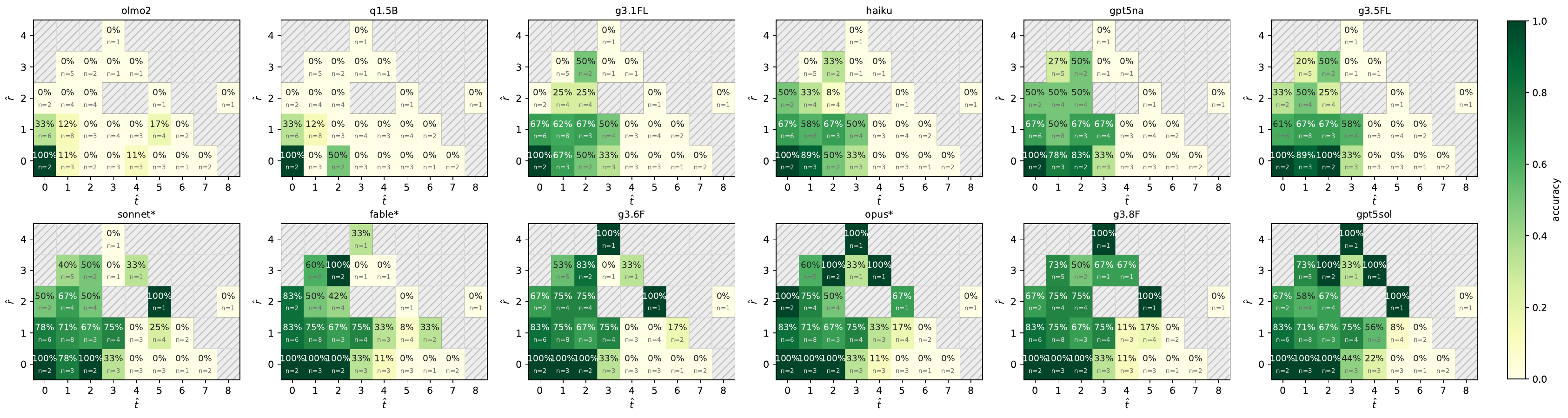}
\vspace{-0.7cm}
\caption{The 12 LLM examinees on the ($\hat{r},\hat{t}$) grid and accuracy for the unquestionable items for Qwen3-14B.}
\label{fig:heatQwen14B}
\end{figure}

\Cref{tab:cert-metricsQwen14b} shows Spearman correlations for a range of metrics per model for the 51 certified items under Qwen3-14B. %
$\Ktc$ works very well (within sampling error with $\Kcc$ and $\Kco$, see Appendix \ref{app:stats}) with strong correlations per model (except the weak examinees) and globally. %
{\bf RQ3 is answered affirmatively}. 

\begin{table}[t]
\centering
\scriptsize\setlength{\tabcolsep}{2.0pt}
\caption{Spearman correlations between several metrics and full correct response for the 51 valid, unquestionable and certified items under Qwen3-14B. Each item had three repetitions per examinee. All metrics in \Cref{tab:cert-metricsQwen14b-full}.}
\label{tab:cert-metricsQwen14b}
\begin{tabular}{lrrrrrrrrrrrrr}
\toprule
measure & olmo2 & q1.5B & g3.1FL & haiku & gpt5na & g3.5FL & sonnet* & fable* & g3.6F & opus* & g3.8F & gpt5sol & \textsf{all} \\
\midrule
$c_\M(x_{1:n})$ & $-0.44$ & $-0.45$ & $-0.59$ & $-0.66$ & $-0.63$ & $-0.61$ & $-0.50$ & $-0.62$ & $-0.58$ & $-0.57$ & $-0.56$ & $-0.55$ & $-0.68$ \\
$\tau_{\mathcal{M}}$ & $-0.34$ & $-0.42$ & $-0.60$ & $-0.68$ & $-0.63$ & $-0.57$ & $-0.46$ & $-0.56$ & $-0.57$ & $-0.52$ & $-0.59$ & $-0.59$ & $-0.65$ \\
$\smash[t]{\hat{K}^{\V}_{\M}}(x_{1:n+k})$ & $-0.43$ & $-0.45$ & $-0.60$ & $-0.66$ & $-0.63$ & $-0.65$ & $-0.54$ & $-0.61$ & $-0.62$ & $-0.57$ & $-0.60$ & $-0.56$ & $-0.69$ \\
$\smash[t]{\hat{K}_{\mathcal{M}}}(x_{n+1:n+k} \mid x_{1:n})$ & $-0.31$ & $-0.45$ & $-0.73$ & $-0.73$ & $-0.73$ & $-0.76$ & $-0.63$ & $-0.73$ & $-0.68$ & $-0.66$ & $-0.69$ & $-0.66$ & $-0.78$ \\
$\smash[t]{\widehat{Kc}_{\mathcal{M}}}(x_{n+1:n+k} \mid x_{1:n})$ & $-0.43$ & $-0.49$ & $-0.70$ & $-0.73$ & $-0.70$ & $-0.71$ & $-0.60$ & $-0.70$ & $-0.66$ & $-0.65$ & $-0.65$ & $-0.63$ & $-0.77$ \\
$\smash[t]{\widehat{Kt}_{\mathcal{M}}}(x_{n+1:n+k} \mid x_{1:n})$ & $-0.37$ & $-0.50$ & $-0.75$ & $-0.75$ & $-0.74$ & $-0.74$ & $-0.63$ & $-0.72$ & $-0.67$ & $-0.66$ & $-0.68$ & $-0.65$ & $-0.79$ \\
\bottomrule
\end{tabular}
\end{table}

\Cref{fig:abilitiesQwen14B} (left) shows the logistic fit of the points for the 51 certified examples for Qwen3-14B ($2N$ prior points added at (0,1) as anchor). Empirical curves in the appendix (\Cref{fig:curvesQwen14B}). The plot on the right compares the ability of each model with Item Response Theory estimates of the abilities of the same models according to Epoch's Capability Index (ECI) \citep{ho2025rosetta}. Correlation is high, despite the %
different scales.  %
{\bf RQ4 is answered affirmatively}.

\begin{figure}[h]
\centering
\includegraphics[width=0.43\linewidth]{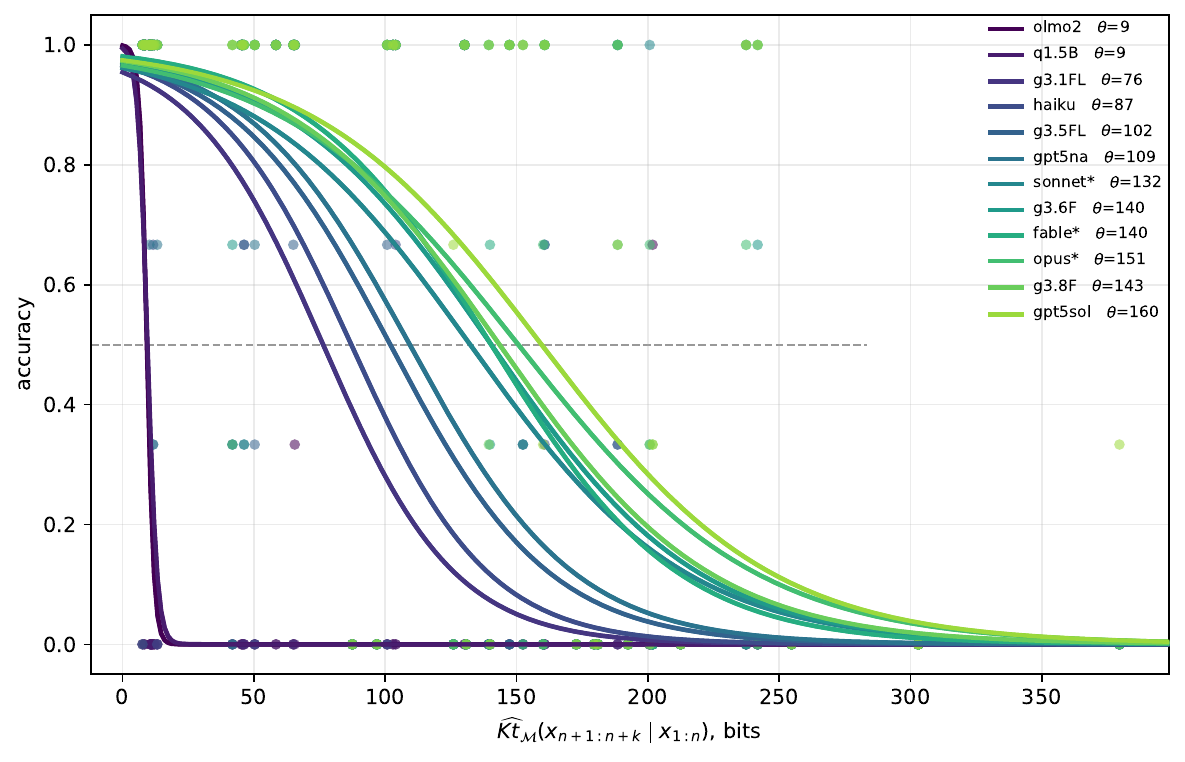}
\includegraphics[width=0.43\linewidth]{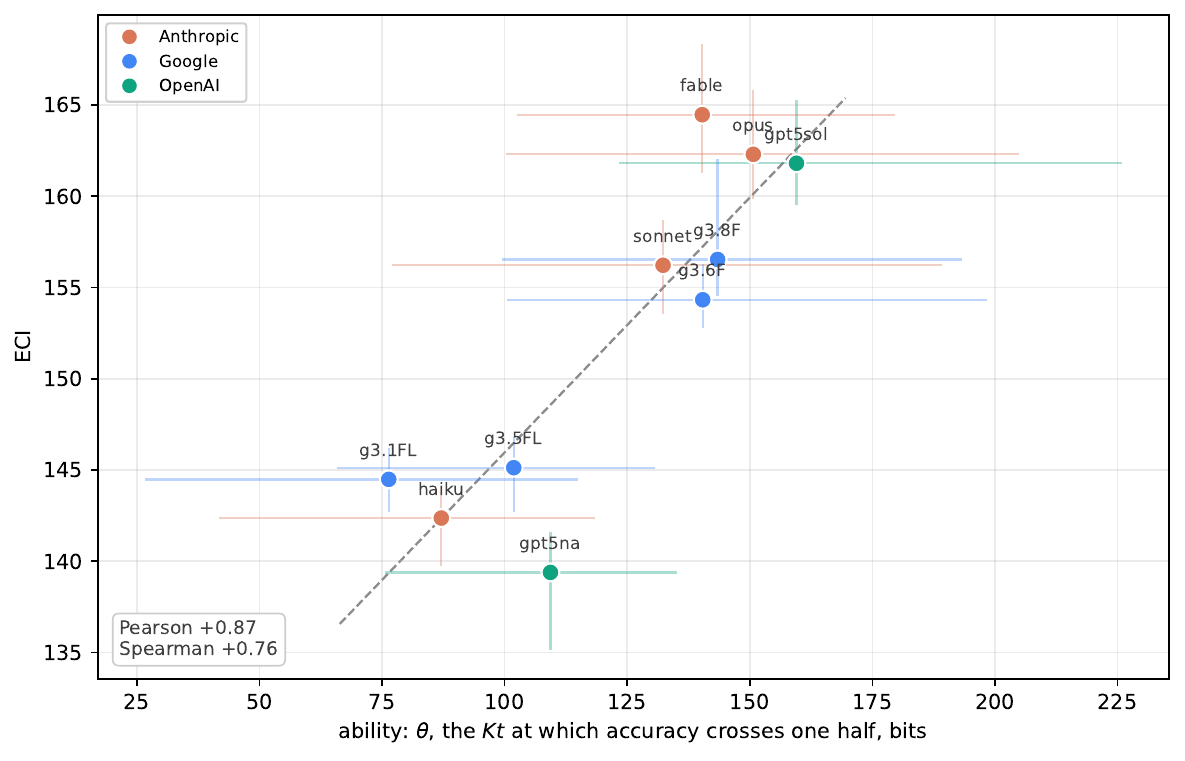}
\vspace{-0.3cm}
\caption{Left: Agent characteristic curves for the 12 LLM, with $\Ktc$ provided by Qwen3-14B (51 certified examples), with a logistic fit, where the ability of each model $\theta$, measured in bits, is estimated by the point where performance is 0.5. Right: Comparison between each model's {\em ability} and the psychometric, populational score given by Epoch's Capability Index (ECI) \citep{ho2025rosetta}. Only 10 of our 12 are listed in ECI.}
\label{fig:abilitiesQwen14B}
\end{figure}

\Cref{tab:windows} shows the infra and supra-intelligence windows  for the three reference models. The infra window runs from the item with lowest $\Ktc$ to the point where the fitted logistic curve crosses 2/3 accuracy. Similarly, the supra window runs from where it falls through 1/3 to the item with highest  $\Ktc$.  %
While the interval between the windows is where information is high, these windows show the margin for discrimination in both ends of the evaluation spectrum. Qwen3-8B does not have enough resolution in the higher end of the scale. Qwen3-14B has limited sensitivity in the other end, but its upper end shows the scalability of the construction. Even for a model such as gpt5sol, a much weaker model such as Qwen3-14B can generate a very ample, yet sparse window between 190-380 where the fitted curve falls below 1/3 (Appendix \ref{app:stats}). %
{\bf RQ5 is answered by the 
supraintelligence window being bounded and sparse}, yet potentially accommodating models that are much stronger than the reference machine.

\begin{table}[t]\centering\scriptsize
\caption{Infra- and supra-intelligence windows of all examinees per reference machine. The supra ranges for Qwen3-14B are wide, indicating it %
can %
potentially 
challenge frontier models. %
Ill-defined windows in grey.
}
\label{tab:windows}
\begin{tabular}{@{}l r@{\,--\,}l c r@{\,--\,}l r@{\,--\,}l c r@{\,--\,}l r@{\,--\,}l c r@{\,--\,}l@{}}\toprule
 & \multicolumn{5}{c}{Qwen3-8B} & \multicolumn{5}{c}{Qwen3-14B} & \multicolumn{5}{c}{Phi-4} \\
\cmidrule(lr){2-6}\cmidrule(lr){7-11}\cmidrule(lr){12-16}
 & \multicolumn{2}{c}{infra} & span & \multicolumn{2}{c}{supra} & \multicolumn{2}{c}{infra} & span & \multicolumn{2}{c}{supra} & \multicolumn{2}{c}{infra} & span & \multicolumn{2}{c}{supra} \\
\addlinespace[2pt]
\midrule
gpt5sol & 7 & \textcolor{gray}{194} & \textcolor{gray}{88} & \textcolor{gray}{283} & 190 & 8 & 129 & 61 & 190 & 380 & 7 & 125 & 53 & 178 & 187 \\
fable* & 7 & 167 & \textcolor{gray}{72} & \textcolor{gray}{238} & 190 & 8 & 116 & 49 & 165 & 380 & 7 & 130 & 45 & 174 & 187 \\
g3.8F & 7 & 166 & \textcolor{gray}{79} & \textcolor{gray}{244} & 190 & 8 & 116 & 55 & 171 & 380 & 7 & 122 & 40 & 162 & 187 \\
opus* & 7 & 176 & \textcolor{gray}{90} & \textcolor{gray}{266} & 190 & 8 & 119 & 63 & 182 & 380 & 7 & 128 & 50 & 177 & 187 \\
g3.6F & 7 & 156 & \textcolor{gray}{79} & \textcolor{gray}{235} & 190 & 8 & 113 & 55 & 168 & 380 & 7 & 119 & 35 & 155 & 187 \\
sonnet* & 7 & 146 & \textcolor{gray}{89} & \textcolor{gray}{235} & 190 & 8 & 104 & 57 & 161 & 380 & 7 & 109 & 49 & 158 & 187 \\
gpt5na & 7 & 127 & \textcolor{gray}{93} & \textcolor{gray}{219} & 190 & 8 & 88 & 43 & 131 & 380 & 7 & 82 & 49 & 131 & 187 \\
g3.5FL & 7 & 110 & 76 & 186 & 190 & 8 & 81 & 42 & 123 & 380 & 7 & 72 & 43 & 115 & 187 \\
haiku & 7 & 88 & 65 & 153 & 190 & 8 & 69 & 36 & 105 & 380 & 7 & 59 & 38 & 97 & 187 \\
g3.1FL & 7 & 89 & 71 & 160 & 190 & 8 & 59 & 35 & 94 & 380 & 7 & 71 & 54 & 125 & 187 \\
q1.5B & 7 & 21 & 13 & 33 & 190 & 8 & 8 & 3 & 11 & 380 & 7 & 7 & 1 & 9 & 187 \\
olmo2 & 7 & 9 & 4 & 13 & 190 & 8 & 8 & 2 & 10 & 380 & 7 & 8 & 1 & 9 & 187 \\
\bottomrule
\end{tabular}
\end{table}

\Cref{fig:sample} shows how the $\Ktc$ metric changes between two reference machines of the same family. Not only are many items not certified by Qwen3-8B, as compared to Qwen3-14B, but their difficulty is higher. Qwen3-8B is a worse compressor, but mostly because many equivalent programs and hints that reduce $\Ktc$ in Qwen3-14B do not work for Qwen3-8B. Accordingly, {\bf RQ6 is partially answered positively} by this plot, but the appendix contains the corresponding plots and tables for all the reference machines, to observe more precisely how they affect the generation and results. The other two pairwise combinations can be found in \Cref{fig:sample-qwen3-8b} and \Cref{fig:sample-phi-4-14B-reasoning} in the appendix. %

\begin{figure}[h]
\centering
\IfFileExists{figures/fig_shift_kt_paper_full_qwen3-14b.pdf}{\includegraphics[width=.77\linewidth]{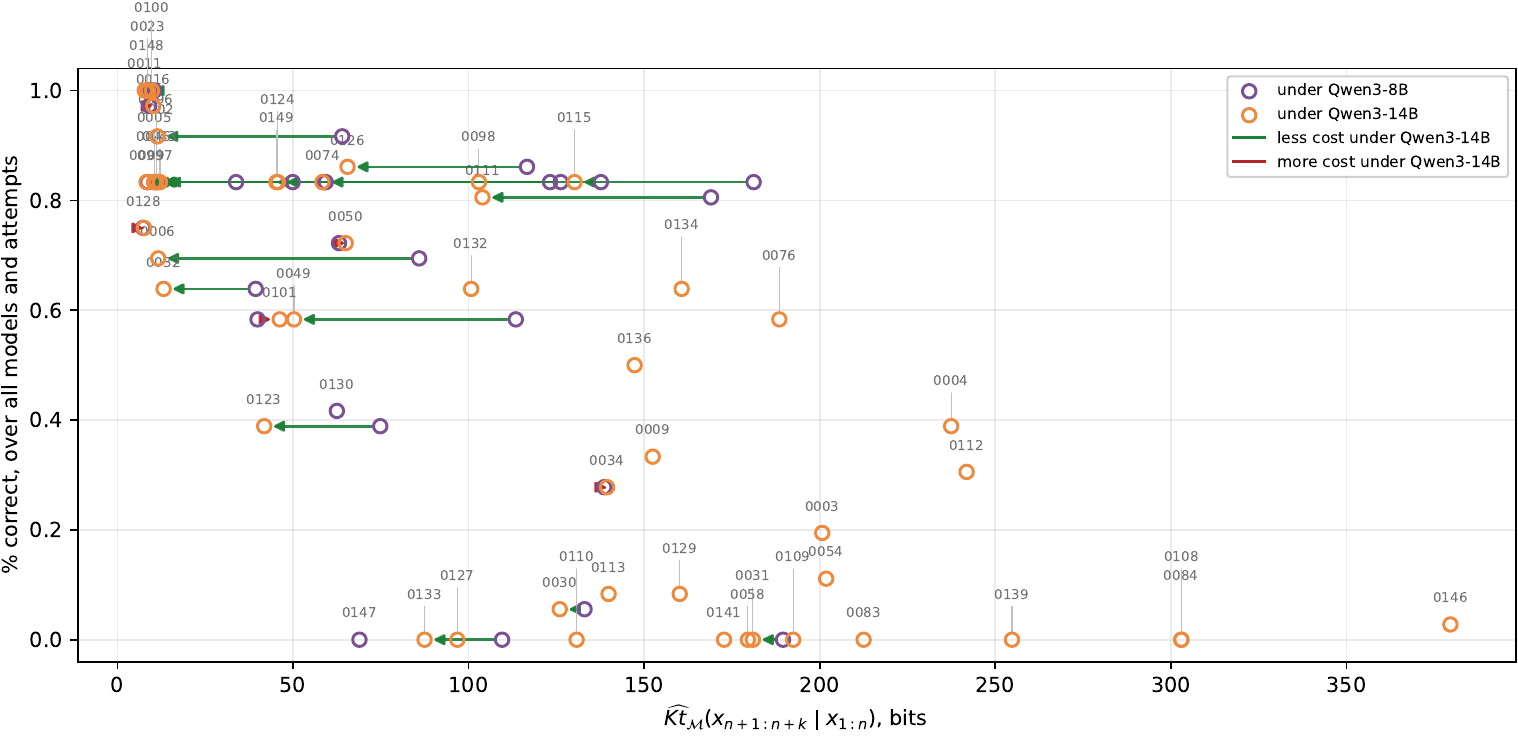}}\\
\vspace{-0.3cm}
\caption{Comparison of $\Ktc$ between Qwen-3-8B and Qwen-3-14B vs average results of 12 other LLMs on the 71 unquestionable items for Qwen-3-14B. Success decreases with $\Ktc$. The weaker model (Qwen-3-8B, purple) brings more refinement for low $\Ktc$ (left), but struggles to cost many items of high $\Ktc$ (right). An arrow pointing left means Qwen-3-14B describes the item at less cost. %
Values near zero are empty programs, %
with $Kt= 0 + \log_2\tau$. %
}
\label{fig:sample}
\end{figure}

\section{Limitations and Future Work}\label{sec:limitations}

The choice of very small LLMs as reference machines and a commodity model as composer %
shows the pipeline works with low resources. Conducting low-budget evaluations that make frontier models fail predictably and gradually gives a positive signal towards the scalability of evaluations. 
So far, the use of Kolmogorov complexity has usually been objected to because of its incomputability and subjectivity. We use Levin's $Kt$, which is computable, as it considers the number of tokens in the reasoning steps in a refurbished universal search expression that turns out to be fundamental \citep{achille2026universal}. Subjectivity to $\M$ is partially addressed in terms of ratio scales properties in Appendix \ref{app:ratio}. %
Of course, given the infinite options for $\M$, an explicit choice as an LLM makes the complexity measure both meaningful for humans and also for the current paradigm in AI.

One of the limitations comes precisely from the bias on human language and culture: the generator produces text in English, dominated by Western culture. This is a convenience, %
but the battery and costing can be parametrised to other languages and cultures by changing $\C$ and $\M$. 
A second limitation is that C-test ONE is based only on sequences. %
The purpose of this paper was to show the approach works. %
The next step could be non-sequential problems, such as agential scenarios %
 (e.g., mini-games, \citep{ying2507assessing,ying2026ai}) where the solution of the game (not its description as in \citep{legg2007universal}) is quantified according to $Kt(p|x)$. But note that applying the processes of minimisation and unquestionability a posteriori, may expose the weakness of existing batteries such as \citep{foundation2026arc}. Instead, generating items adapting our procedure can create more robust tests. Nonetheless, the possibility of calculating intrinsic properties of existing benchmarks, such as their difficulty, and their fluid and crystallised components, should be at the core of a staircase of batteries we should use for frontier AI.

As future work, the generator can be improved to speed up the estimates of $\Ktc$ \citep{orseau2024exponential,li2026levin,tuero2026structure}, and more reference machines, including multimodal, could always be explored, and combined. Beyond these optimisations, the potential of this paper lies in rebuilding the foundations of  %
AI evaluation broadly. 

The lesson learnt is that with modest budget and models we can generate a systematic class of verifiable questions beyond current frontier capabilities, with their difficulty measured in a theoretically-grounded way, populating a supra-intelligence window that extends our potential for evaluating the frontier. C-test ONE is the first in a series achieving this. Let others climb the next steps in the ladder.

\subsection*{AI use statement}%
We used so-called generative AI in several stages of this project, other than the test itself (with LLMs being used as  verifiers, composers or reference machines). Several LLMs were used for idea generation and brain-storming by the authors, and helped with locating and processing related work, but all papers and books that are cited were read, or at least scanned for the relevant passages, by the authors. References were checked manually with standard sources (e.g., google scholar), which may still contain errors. LLMs were used  extensively for vibe coding, processing the results, and generating tables, plots and other result analyses. The logs and results were the main way in which the code was verified to work correctly. LLMs were also used to draft early versions of some sections, especially those associated with the experiments, but all text was finally rewritten by the authors. Some illustrations were also partly generated by AI. We take full responsibility for all the content and associated code and datasets, especially the parts that were drafted by AI.

\subsection*{Ethics statement}

This work does not have any special ethical issues other than the general concerns of a crucial field such as AI evaluation. Problems in underestimation or overestimation of capabilities may have an impact on the appropriate deployment of AI, including areas such as fairness and safety. The focus of this paper is put on the scalability of evaluations, and some level of delegation of this evaluation, which may entail some risks. Another element of concern is the implications of this research for the environment. We have devised the experimental setting in such a way that we performed the experiments that were sufficient to answer our research questions and support any claim that we made. We also report a detailed and comprehensive use of compute and its environmental impact in Appendix \ref{sec:resources}. On purpose, we did not include very large reference machines or a very wide battery of API models to keep our impact as low as possible. We think the variations, ablations, repetitions and numbers of models and items that we are using are reasonable. Apart from this, we did not conduct any experiment with humans or animals, or included data from these sources directly (although we used API calls, and we cannot be sure of how proprietary models were trained). 

\subsection*{Reproducibility statement}

Appendices \ref{app:pipeline} to \ref{app:params} include a detailed description of the pipeline, the prompts used, the description of all notebooks and data files in the project, as well as all the parameters we used in the experiments. We also include the full battery in appendix \ref{sec:battery}. 
All code and data are available at \url{https://github.com/jorallo/ctest-one}.
Also, despite our self-imposed limitation on the number of GPU/CPU hours and LLM tokens, our detailed report of the use of compute and tokens in appendix \ref{sec:resources} helps other researchers anticipate their costs of reproduction.%

\paragraph*{Acknowledgments}
I'm thankful to Jan Arne Telle for insightful comments on a very early version of this project.

This paper would have not been possible without the help of the other co-author of the original C-test paper, Neus Minaya-Collado, who this time helped with extra `compute' for this project. The time was ripe after thirty years.

\bibliography{ctest}
\bibliographystyle{iclr2027_conference}

\newpage

\appendix

$\:$

$\:$

\etocdepthtag.toc{mtappendix}    %

\etocsettagdepth{mtmain}{none}             %
\etocsettagdepth{mtappendix}{section}   %
\renewcommand{\contentsname}{Appendix contents}
\tableofcontents

\newpage

\section{Full Results for Qwen3-14B}\label{app:qwen3-14B}

Here we include some other plots and tables for Qwen3-14B specifically that did not fit in the main paper. Let us start with the detailed results from the 12 models we used as examinees. More precisely, they were 15, as we tried two versions of Sonnet, Opus and Fable, because on a first attempt they had a high degree of refusal, as shown in \Cref{tab:refusal-qwen14B}. Our guess for this high level of refusal is that some items look cryptic to the safety filters, as if they were jailbreaks. We had three options: remove most of the Anthropic family, remove the items affected or include them with a clear indication that they are routing items to other models. We explored the last two\footnote{Refusals are not spread evenly. The plain Anthropic models refused at least
once on 22 of the 51 items certified under Qwen3-14B, with a median
$\widehat{Kt}^{|}$ of 176 bits against 45 for the rest, including 8 of the 10
items above 190 bits. %
Treating the starred models' non-answers as
missing, instead of as failures, leaves the correlation for $\widehat{Kt}^{|}$ unchanged, at $-0.79$.}, but include the latter in the paper: we kept the version with fallback for all the experiments in the paper, shown with a star.  \Cref{tab:refusal-qwen14B} includes all the results, finalising all the runs for the non-fallback versions,  for completeness\footnote{The plain Claude Sonnet 5 answered its first 54 items with a budget of 8000
tokens, and the rest with 16000. Its fallback version ran all items with 16000, so their counts of cut replies
are not directly comparable, but the increase made no difference to how often it was cut.}. It also shows that some of the outputs that were still cut by budget or refusal did not disappear (especially for Sonnet*). The discrepancy of budget comes from the first runs trying to determine what an appropriate budget could be. %
Where we could test it, increasing the budget did not have an effect. 
In particular, for Sonnet 5, doubling the budget from 8000 to 16000 did not reduce the
   share of cut replies, 23\% against 25\%, so a larger budget is
   unlikely to change its results. GPT-5 nano was run only at 8000, but we followed our criterion of assigning a budget roughly proportional to the potential and needs of the model. In the era of reasoning models, an examinee is a pair model and budget. In this case, increasing the budget for any of the models would have not had a noticeable effect. The models and budgets are sufficient for the purpose of our research questions (such as the comparison with ECI).

\begin{table}[h]
\centering
\small
\caption{Reason of non-answers for the 71 valid, unquestionable items under Qwen3-14B. We tried two versions of sonnet, opus and fable to reduce refusal, and we kept the one with fallback, marked with a star, in which a refused
request is answered by another model that Anthropic recommends. Comparing the two versions suggests that the fallback served roughly two
fifths of the replies of Fable 5.1*, a tenth of those of Opus 5*, and almost
none of Sonnet 5*'s. These models can change their delegation strategies, so these results may change in future runs. All scores, plots and calculations are performed with these 12 models, and we consider any of these reasons (cut, refusal, timeout, malformed or other) as a failure.  A reply is cut when it reaches the token budget before the model finishes,
   and `cut' only counts those that were cut {\em and} gave no answer. A refusal or a truncation is not evidence that the model could not solve the item, given more tokens or finding the cause for the rejection. The last column is the score on the rows that were answered.}\label{tab:refusal-qwen14B}
\begin{tabular}{lrrrrrrrrr}
\toprule
examinee & budget (tokens) & rows & \textsf{cut} & \textsf{refusal} & \textsf{timeout} & \textsf{malformed} & \textsf{other} & any & score on the rest \\
\midrule
olmo2 & 2048 & 213 & -- & -- & -- & -- & -- & 0 & 10.1\% \\
q1.5B & 2048 & 213 & -- & -- & -- & -- & -- & 0 & 10.5\% \\
g3.1FL & 8000 & 213 & -- & -- & -- & -- & -- & 0 & 36.2\% \\
haiku & 8000 & 213 & -- & -- & -- & -- & -- & 0 & 36.8\% \\
gpt5na & 8000 & 213 & 73 & -- & -- & -- & -- & 73 & 62.3\% \\
g3.5FL & 8000 & 213 & -- & -- & -- & -- & -- & 0 & 44.6\% \\
fable & 16000 &213 & 1 & 100 & -- & -- & -- & 101 & 86.1\% \\
sonnet* & 16000 & 213 & 50 & 6 & -- & -- & -- & 56 & 69.0\% \\
sonnet & 8000/16000 & 213 & 50 & 4 & -- & -- & 1 & 55 & 69.6\% \\
fable* & 16000 & 213 & 4 & 9 & -- & -- & -- & 13 & 60.1\% \\
g3.6F & 8000 &213 & -- & -- & -- & -- & -- & 0 & 58.6\% \\
opus & 16000 & 213 & 16 & 28 & -- & -- & -- & 44 & 74.1\% \\
opus* & 16000 &213 & 12 & 6 & -- & -- & -- & 18 & 67.3\% \\
g3.8F & 8000 & 213 & -- & -- & -- & 1 & -- & 1 & 62.8\% \\
gpt5sol & 16000 & 213 & -- & -- & -- & -- & -- & 0 & 65.7\% \\
\bottomrule
\end{tabular}
\end{table}

In \Cref{tab:scoring-qwen14B} we show the scores of the 12 models for all unquestionable items following different criteria. In general, in this paper, we use {\sf full:mean}, i.e., the average of the three attempts, each of them considered as the continuation of the sequence must agree fully.

\begin{table}[h]
\centering
\small\setlength{\tabcolsep}{4pt}
\caption{Six different ways of scoring for the 71 valid, unquestionable items under Qwen3-14B.  \textsf{\small prefix\%} gives the percentage of terms before its first mistake. \textsf{\small full} scores the same attempt all or nothing: every term right, or none of the credit. Within each, as we have three repetitions when evaluating, \textsf{\small worst} and \textsf{\small best} take the least and the most of the three attempts and \textsf{\small mean} averages them. \textsf{\small agree} says whether the three said the same thing, right or wrong. An attempt with no reply is left out of the agreement, as three refusals are not really agreeing on anything.}\label{tab:scoring-qwen14B}
\begin{tabular}{lrrrrrrrr}
\toprule
 & \multicolumn{4}{c}{\textsf{prefix\%}} & \multicolumn{4}{c}{\textsf{full}} \\
\cmidrule(lr){2-5}\cmidrule(lr){6-9}
examinee & worst & mean & best & agree & worst & mean & best & agree \\
\midrule
olmo2 & 8.2\% & 10.1\% & 12.7\% & 100.0\% & 7.0\% & 8.9\% & 11.3\% & 100.0\% \\
q1.5B & 10.4\% & 10.5\% & 10.7\% & 90.4\% & 8.5\% & 8.5\% & 8.5\% & 100.0\% \\
g3.1FL & 36.1\% & 36.2\% & 36.3\% & 100.0\% & 31.0\% & 31.0\% & 31.0\% & 100.0\% \\
haiku & 29.9\% & 36.8\% & 42.8\% & 33.0\% & 28.2\% & 31.9\% & 35.2\% & 29.6\% \\
gpt5na & 34.6\% & 40.9\% & 47.9\% & 58.3\% & 31.0\% & 38.0\% & 45.1\% & 52.2\% \\
g3.5FL & 34.9\% & 44.6\% & 55.2\% & 39.4\% & 28.2\% & 37.6\% & 47.9\% & 35.2\% \\
sonnet* & 45.9\% & 50.9\% & 56.6\% & 73.7\% & 42.3\% & 47.4\% & 53.5\% & 70.4\% \\
fable* & 51.3\% & 56.4\% & 61.7\% & 62.4\% & 47.9\% & 52.1\% & 56.3\% & 60.6\% \\
g3.6F & 53.8\% & 58.6\% & 63.4\% & 58.3\% & 50.7\% & 54.0\% & 57.7\% & 57.7\% \\
opus* & 55.2\% & 61.6\% & 68.2\% & 74.2\% & 52.1\% & 57.3\% & 62.0\% & 69.7\% \\
g3.8F & 56.3\% & 62.5\% & 69.6\% & 65.1\% & 52.1\% & 57.3\% & 63.4\% & 62.0\% \\
gpt5sol & 56.9\% & 65.7\% & 74.4\% & 71.0\% & 52.1\% & 59.2\% & 67.6\% & 69.0\% \\
\bottomrule
\end{tabular}
\end{table}

\Cref{tab:chain-qwen3-14b} shows the distribution grid of its 71 unquestionable items by measured ($\hat{r},\hat{t}$). We see that most items are concentrated in low and medium transformations and low retrieval. 

\begin{table}[h]
\centering
\small
\caption{Number of \emph{valid} / \emph{unquestionable} / \emph{certified} items under Qwen3-14B, by the cell they fell in.}
\label{tab:chain-qwen3-14b}
\begin{tabular}{@{}lrrrrrrrrrr@{}}
\toprule
$\hat{r}$ & $\hat{t}=0$ & $\hat{t}=1$ & $\hat{t}=2$ & $\hat{t}=3$ & $\hat{t}=4$ & $\hat{t}=5$ & $\hat{t}=6$ & $\hat{t}=7$ & $\hat{t}=8$ & all \\
\midrule
0 & 2/2/2 & 4/3/3 & 10/2/2 & 9/3/3 & 6/3/1 & 3/3/3 & 2/1/1 & 3/2/0 & 1/0/0 & 40/19/15 \\
1 & 6/6/5 & 9/8/6 & 5/3/3 & 7/4/4 & 6/3/1 & 5/4/3 & 3/2/1 & -- & -- & 41/30/23 \\
2 & 2/2/1 & 4/4/2 & 4/4/3 & -- & -- & 1/1/1 & -- & -- & 2/1/0 & 13/12/7 \\
3 & -- & 5/5/4 & 2/2/2 & 2/1/0 & 1/1/0 & -- & -- & -- & 1/0/0 & 11/9/6 \\
4 & -- & -- & 1/0/0 & 1/1/0 & -- & -- & -- & -- & -- & 2/1/0 \\
\midrule
all & 10/10/8 & 22/20/15 & 22/11/10 & 19/9/7 & 13/7/2 & 9/8/7 & 5/3/2 & 3/2/0 & 4/1/0 & 107/71/51 \\
\bottomrule
\end{tabular}
\end{table}

\Cref{tab:cert-metricsQwen14b-full} shows Spearman correlations for a wide range of metrics per model and together for the 51 certified items for Qwen3-14B. It is an extension of \Cref{tab:cert-metricsQwen14b}, but here we see that the correlations of $r^0$ and $t^0$ are very similar to the measured ones. We also see that that length of the programs in number of characters ($\ell_{char}$) is a good proxy, only clearly improved by the last three rows.

The length of the seed and other programs in the process is informative
only on items the pipeline has validated and made unquestionable, and the correlation grows along the process that filters using $V$ verification and $\M$ coding. Certification also shows that a weak machine can run a description of
each item, which seed length cannot. $\widehat{Kt}^{|}$ is also a measure
and not a proxy. It is in bits under a stated reference machine, conditional
on the prefix, and a minimum over many descriptions, so it does not depend
on how verbose one description is, and it can be compared across batteries
and composers. Among the harder half of the items, above 101 bits, it also
correlates more strongly with the accuracy of the six strongest examinees,
at $-0.48$ against $-0.31$, which is the supra region we are most interested in.

\begin{table}[t]
\centering
\scriptsize\setlength{\tabcolsep}{2.0pt}
\caption{Full version of \Cref{tab:cert-metricsQwen14b}. Spearman correlations between several metrics and full correct response for the 51 valid, unquestionable and certified items under Qwen3-14B. Each item had three repetitions per examinee.}
\label{tab:cert-metricsQwen14b-full}
\begin{tabular}{lrrrrrrrrrrrrr}
\toprule
measure & olmo2 & q1.5B & g3.1FL & haiku & gpt5na & g3.5FL & sonnet* & fable* & g3.6F & opus* & g3.8F & gpt5sol & \textsf{all} \\
\midrule
$r^0$ & $-0.28$ & $-0.26$ & $-0.28$ & $-0.24$ & $-0.08$ & $-0.19$ & $-0.04$ & $-0.05$ & $+0.01$ & $-0.04$ & $+0.01$ & $+0.05$ & $-0.16$ \\
$t^0$ & $-0.29$ & $-0.33$ & $-0.48$ & $-0.56$ & $-0.59$ & $-0.55$ & $-0.49$ & $-0.60$ & $-0.56$ & $-0.56$ & $-0.56$ & $-0.57$ & $-0.60$ \\
$r^0 + t^0$ & $-0.36$ & $-0.39$ & $-0.59$ & $-0.63$ & $-0.59$ & $-0.62$ & $-0.51$ & $-0.59$ & $-0.55$ & $-0.56$ & $-0.55$ & $-0.55$ & $-0.65$ \\
$\ell_{\mathrm{char}}(p^0)$ & $-0.46$ & $-0.50$ & $-0.58$ & $-0.63$ & $-0.66$ & $-0.63$ & $-0.54$ & $-0.57$ & $-0.58$ & $-0.56$ & $-0.60$ & $-0.54$ & $-0.68$ \\
$\ell_{\mathrm{char}}(p^{\triangledown})$ & $-0.52$ & $-0.52$ & $-0.59$ & $-0.59$ & $-0.54$ & $-0.59$ & $-0.49$ & $-0.59$ & $-0.54$ & $-0.55$ & $-0.53$ & $-0.53$ & $-0.67$ \\
$\ell_{\mathrm{char}}(p^{\blacktriangledown})$ & $-0.54$ & $-0.53$ & $-0.66$ & $-0.67$ & $-0.58$ & $-0.61$ & $-0.48$ & $-0.56$ & $-0.52$ & $-0.53$ & $-0.53$ & $-0.51$ & $-0.70$ \\
$\smash[t]{\hat{r}}_{\mathcal{M}}$ & $-0.21$ & $-0.23$ & $-0.19$ & $-0.16$ & $-0.01$ & $-0.12$ & $+0.08$ & $+0.09$ & $+0.13$ & $+0.12$ & $+0.15$ & $+0.18$ & $-0.06$ \\
$\smash[t]{\hat{t}}_{\mathcal{M}}$ & $-0.36$ & $-0.40$ & $-0.48$ & $-0.58$ & $-0.57$ & $-0.51$ & $-0.47$ & $-0.54$ & $-0.52$ & $-0.54$ & $-0.53$ & $-0.52$ & $-0.59$ \\
$\smash[t]{\hat{r}_{\mathcal{M}}+\hat{t}_{\mathcal{M}}}$ & $-0.38$ & $-0.45$ & $-0.58$ & $-0.66$ & $-0.58$ & $-0.59$ & $-0.46$ & $-0.51$ & $-0.50$ & $-0.49$ & $-0.47$ & $-0.45$ & $-0.62$ \\
$L_u$ & $-0.40$ & $-0.35$ & $-0.42$ & $-0.50$ & $-0.51$ & $-0.53$ & $-0.40$ & $-0.53$ & $-0.52$ & $-0.48$ & $-0.50$ & $-0.54$ & $-0.56$ \\
$c_\M(x_{1:n})$ & $-0.44$ & $-0.45$ & $-0.59$ & $-0.66$ & $-0.63$ & $-0.61$ & $-0.50$ & $-0.62$ & $-0.58$ & $-0.57$ & $-0.56$ & $-0.55$ & $-0.68$ \\
$\tau_{\mathcal{M}}$ & $-0.34$ & $-0.42$ & $-0.60$ & $-0.68$ & $-0.63$ & $-0.57$ & $-0.46$ & $-0.56$ & $-0.57$ & $-0.52$ & $-0.59$ & $-0.59$ & $-0.65$ \\
$\smash[t]{\hat{K}^{\V}_{\M}}(x_{1:n+k})$ & $-0.43$ & $-0.45$ & $-0.60$ & $-0.66$ & $-0.63$ & $-0.65$ & $-0.54$ & $-0.61$ & $-0.62$ & $-0.57$ & $-0.60$ & $-0.56$ & $-0.69$ \\
$\smash[t]{\hat{K}_{\mathcal{M}}}(x_{n+1:n+k} \mid x_{1:n})$ & $-0.31$ & $-0.45$ & $-0.73$ & $-0.73$ & $-0.73$ & $-0.76$ & $-0.63$ & $-0.73$ & $-0.68$ & $-0.66$ & $-0.69$ & $-0.66$ & $-0.78$ \\
$\smash[t]{\widehat{Kc}_{\mathcal{M}}}(x_{n+1:n+k} \mid x_{1:n})$ & $-0.43$ & $-0.49$ & $-0.70$ & $-0.73$ & $-0.70$ & $-0.71$ & $-0.60$ & $-0.70$ & $-0.66$ & $-0.65$ & $-0.65$ & $-0.63$ & $-0.77$ \\
$\smash[t]{\widehat{Kt}_{\mathcal{M}}}(x_{n+1:n+k} \mid x_{1:n})$ & $-0.37$ & $-0.50$ & $-0.75$ & $-0.75$ & $-0.74$ & $-0.74$ & $-0.63$ & $-0.72$ & $-0.67$ & $-0.66$ & $-0.68$ & $-0.65$ & $-0.79$ \\
\bottomrule
\end{tabular}
\end{table}

\Cref{fig:curvesQwen14B} (left) shows the empirical curves of the points for the 51 certified examples for Qwen3-14B. 
It shows a clean monotone decline of performance versus Levin's $\Ktc$. %
The two local models fall after the second bin, On the right we see the histogram of difficulties ($\Ktc$). The first 100 bits can be explained by the fact that very simple items that have a full rule around 50-100 bits to express the sequence in the unconditional regime collapse when the shortest conditional program is searched for. Many items end up very close to zero, as they are continued by the model without any other information. For these items, $\log_2 \tau$ adds very few bits in any case.

\begin{figure}[h]
\centering
\includegraphics[width=0.49\linewidth]{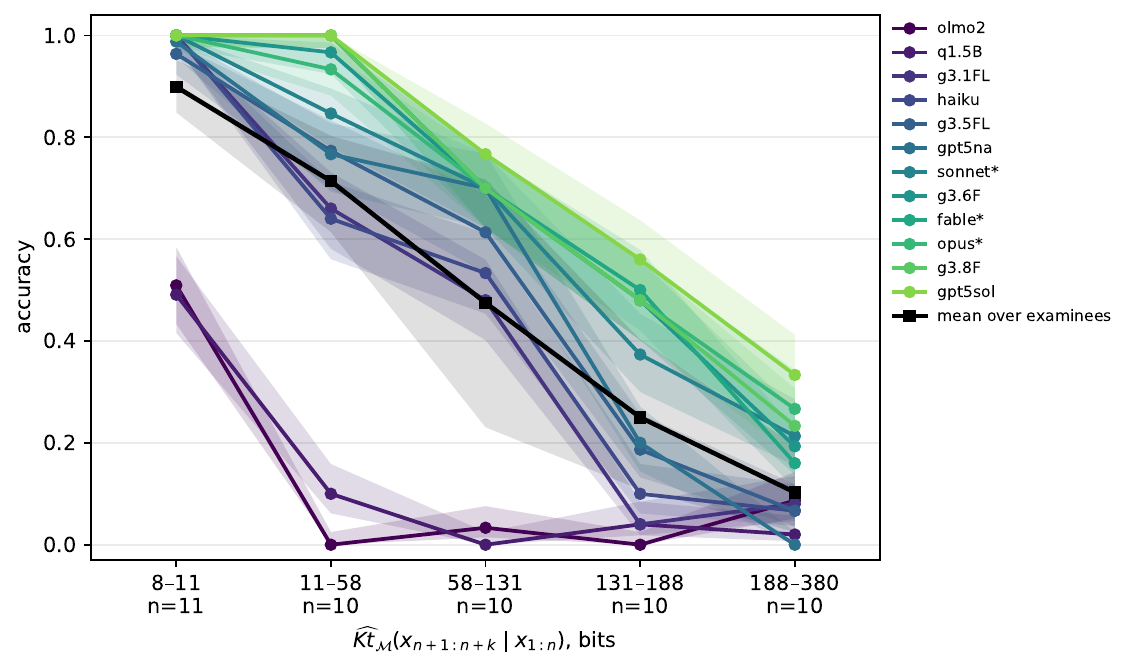}
\includegraphics[width=0.45\linewidth]{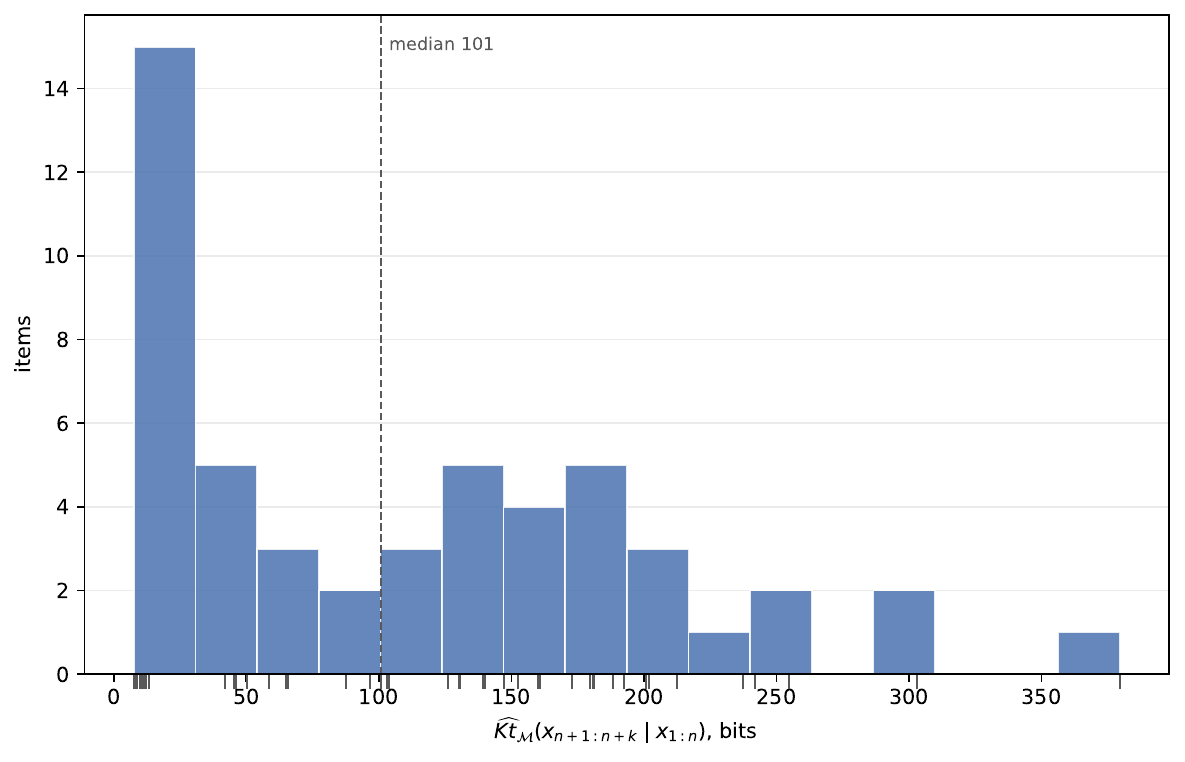}
\caption{Left: Empirical agent characteristic curves (same-size bins) for the 12 LLM, with $\Ktc$ provided by Qwen3-14B (51 certified examples). Right: a histogram of difficulty ($\Ktc$) of the battery.}
\label{fig:curvesQwen14B}
\end{figure}

\section{Results for Qwen3-8B}\label{app:qwen3-8B}

Here we include all the plots and tables for Qwen3-8B specifically. This model is weaker and it has a smaller number of certified items. \Cref{tab:chain-qwen3-8b} shows the distribution grid of its 80 unquestionable items by measured ($\hat{r},\hat{t}$). We see that, again, most items are concentrated in low and medium transformations and low retrieval. 
For 29 of the 107 valid items, Qwen3-8B was costed with an earlier value of
the clause constant, $\delta = 13.3$ bits. Recomputed from the stored margins
with $\delta = 22.7$, two of them become questionable, neither of them
certified, and one certified item, \texttt{0002}, needs a prefix of 13
terms instead of 5. The results in this appendix use the values as costed.

\begin{table}[t]
\centering
\small
\caption{Number of  \emph{valid} / \emph{unquestionable} / \emph{certified} items under Qwen3-8B, by the cell they fell in.}
\label{tab:chain-qwen3-8b}
\begin{tabular}{@{}lrrrrrrrrrr@{}}
\toprule
$\hat{r}$ & $\hat{t}=0$ & $\hat{t}=1$ & $\hat{t}=2$ & $\hat{t}=3$ & $\hat{t}=4$ & $\hat{t}=5$ & $\hat{t}=6$ & $\hat{t}=7$ & $\hat{t}=8$ & all \\
\midrule
0 & 2/2/2 & 5/3/3 & 9/5/3 & 7/2/1 & 7/5/3 & 3/3/0 & 3/2/0 & 4/1/0 & -- & 40/23/12 \\
1 & 6/6/4 & 9/7/5 & 6/4/2 & 8/6/4 & 4/2/0 & 6/4/0 & 1/1/0 & -- & 1/1/0 & 41/31/15 \\
2 & 3/3/2 & 3/3/1 & 3/3/0 & 1/1/0 & -- & 1/1/0 & -- & -- & 2/1/0 & 13/12/3 \\
3 & -- & 4/4/1 & 3/3/1 & 1/1/0 & 2/1/0 & -- & -- & 1/0/0 & -- & 11/9/2 \\
4 & -- & -- & 1/0/0 & 1/1/0 & -- & -- & -- & -- & -- & 2/1/0 \\
\midrule
all & 11/11/8 & 21/17/10 & 22/15/6 & 18/11/5 & 13/8/3 & 10/8/0 & 4/3/0 & 5/1/0 & 3/2/0 & 107/76/32 \\
\bottomrule
\end{tabular}
\end{table}

For Qwen3-8B specifically, \Cref{fig:heatQwen8B} shows the percentage of items being solved by each examinee for each cell in the grid. 
Again, high $t$ with low $r$ seems to be a specific area where models tend to fail.

\begin{figure}[h]
\centering
\includegraphics[width=1.0\linewidth]{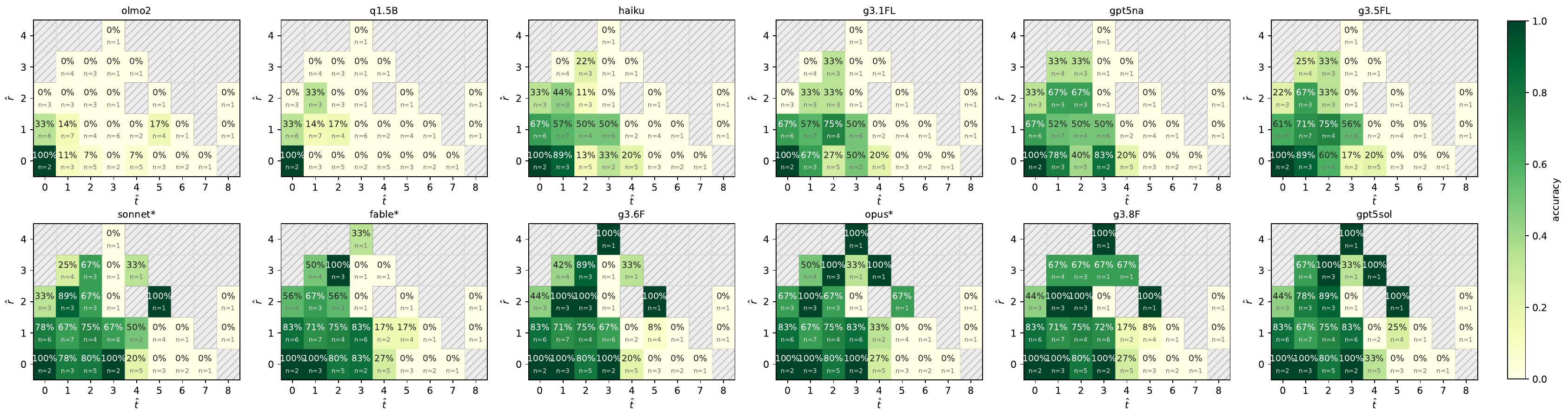}
\vspace{-0.5cm}
\caption{The 12 LLM examinees on the ($\hat{r},\hat{t}$) grid and accuracy for the unquestionable examples for Qwen3-8B.}
\label{fig:heatQwen8B}
\end{figure}

\Cref{fig:abilitiesQwen8B} shows the logistic fit  (left) of the points for the 32 certified examples for Qwen3-8B ($2N$ prior points added at (0,1) as anchor). The right plot compares the ability of each model with Item Response Theory estimates of the abilities of the same models according to Epoch's Capability Index (ECI) \citep{ho2025rosetta}.

\begin{figure}[h]
\centering
\includegraphics[width=0.48\linewidth]{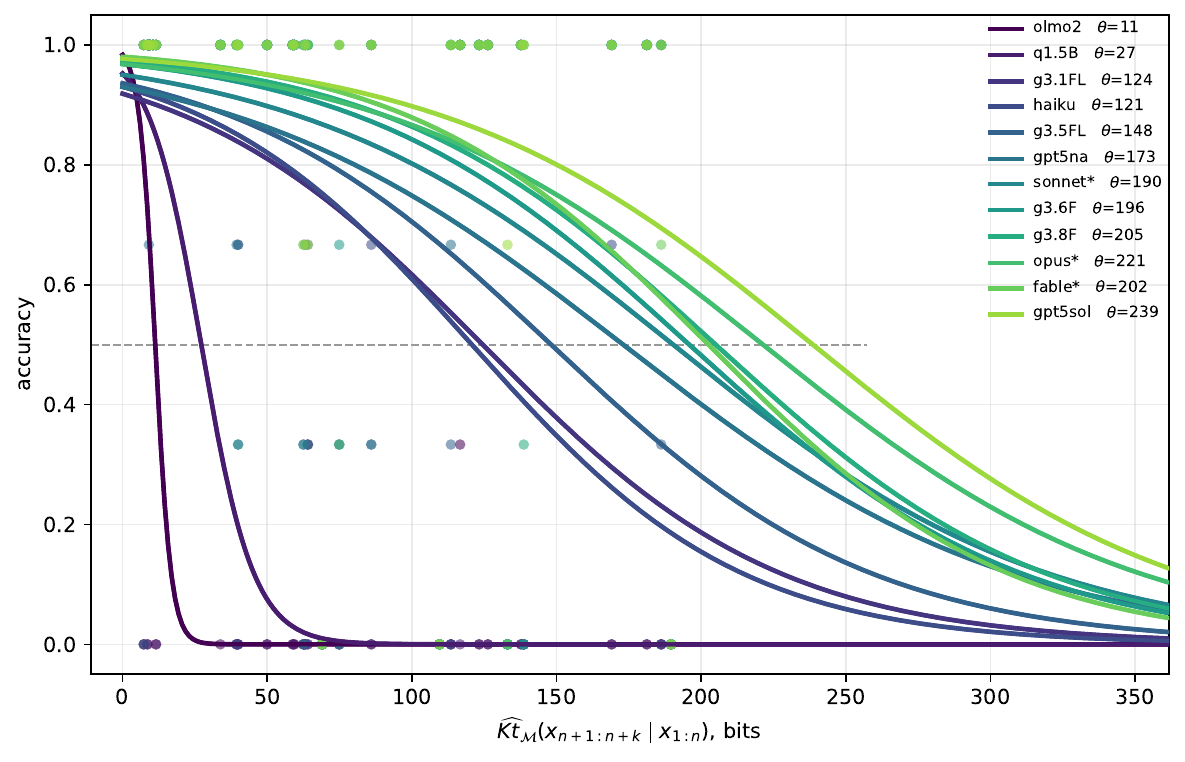}
\includegraphics[width=0.48\linewidth]{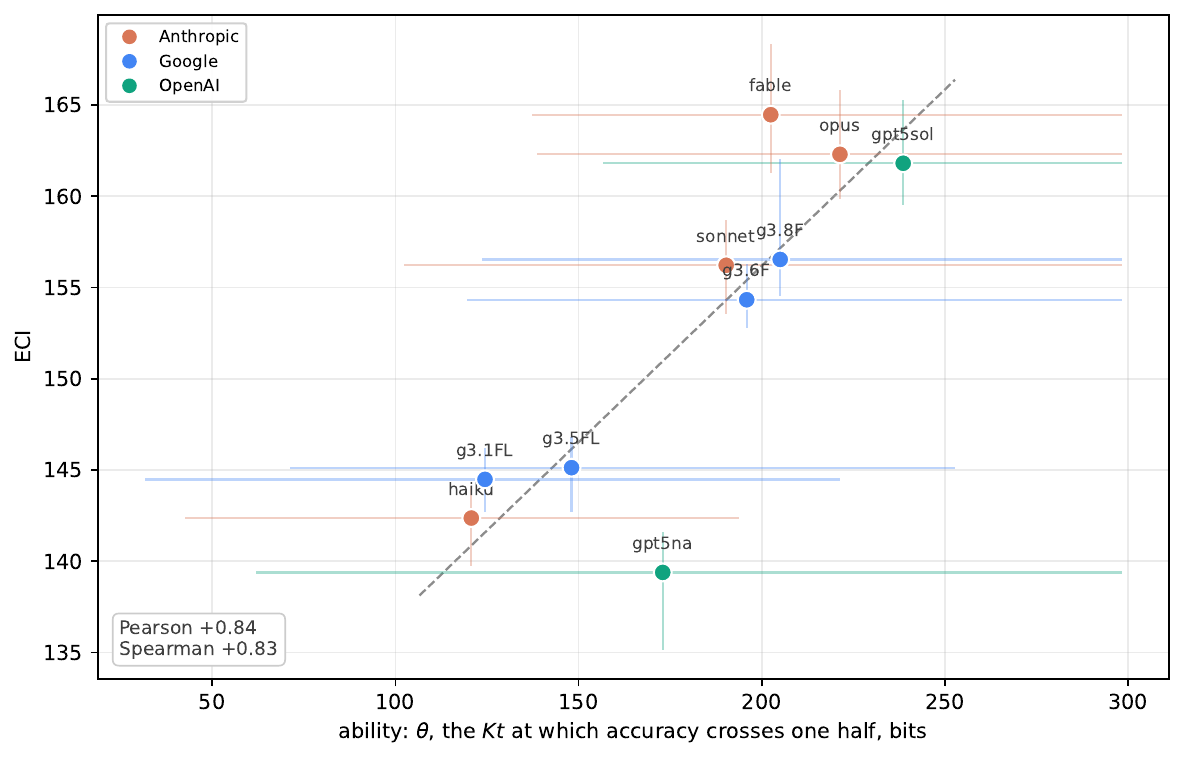}
\caption{Left: Agent characteristic curves for the 12 LLM, with $\Ktc$ provided by Qwen3-8B (32 certified examples), with a logistic fit, where the ability of each model $\theta$, measured in bits, is estimated by the point where performance is 0.5. Right: Comparison between each model's {\em ability} and the psychometric, populational score given by Epoch's Capability Index (ECI) \citep{ho2025rosetta}. Only 10 of our 12 are listed in ECI.}
\label{fig:abilitiesQwen8B}
\end{figure}

\Cref{tab:cert-metricsQwen8b} shows the correlations between the metrics and the scores of the 12 LLMs (plus all the scores together) for Qwen3-8B. The first thing we observe is that the magnitude of the correlations is smaller than those with Qwen3-14B, especially for the most powerful models (it is $-0.27$ for gpt5sol). This is clearly explained by the lower $\Ktc$ estimated  by Qwen3-8B, which is very visible in the histogram of the right of  \Cref{fig:curvesQwen8B}. One thing to highlight is that $\Kcc$ achieves better results than $\Ktc$ at the lower end, possibly because may items are certified with an empty program, and $\tau$ cannot distinguish between them, while the $c(x_{1:n})$ component of $\Ktc$ can give more refinement at this lower end. But in this reference machine, the simple character lengths of the seed and other programs collapse, especially for the frontier models.

\begin{table}[t]
\centering
\scriptsize\setlength{\tabcolsep}{2.0pt}
\caption{Spearman correlations between several metrics and full correct response for the 32 valid, unquestionable and certified items under Qwen3-8B. Each item had three repetitions per examinee.}
\label{tab:cert-metricsQwen8b}
\begin{tabular}{lrrrrrrrrrrrrr}
\toprule
measure & olmo2 & q1.5B & haiku & g3.1FL & gpt5na & g3.5FL & sonnet* & fable* & g3.6F & opus* & g3.8F & gpt5sol & \textsf{all} \\
\midrule
$r^0$ & $-0.32$ & $-0.13$ & $-0.02$ & $-0.07$ & $+0.23$ & $-0.03$ & $+0.09$ & $+0.04$ & $+0.10$ & $+0.01$ & $+0.22$ & $+0.17$ & $-0.09$ \\
$t^0$ & $-0.25$ & $-0.23$ & $-0.24$ & $-0.13$ & $-0.05$ & $-0.07$ & $+0.06$ & $-0.09$ & $-0.10$ & $+0.12$ & $-0.11$ & $+0.09$ & $-0.18$ \\
$r^0 + t^0$ & $-0.38$ & $-0.26$ & $-0.31$ & $-0.24$ & $-0.06$ & $-0.19$ & $-0.05$ & $-0.13$ & $-0.18$ & $-0.02$ & $-0.11$ & $+0.06$ & $-0.30$ \\
$\ell_{\mathrm{char}}(p^0)$ & $-0.61$ & $-0.45$ & $-0.27$ & $-0.28$ & $-0.14$ & $-0.22$ & $-0.24$ & $-0.25$ & $-0.31$ & $-0.15$ & $-0.26$ & $-0.03$ & $-0.41$ \\
$\ell_{\mathrm{char}}(p^{\triangledown})$ & $-0.65$ & $-0.54$ & $-0.17$ & $-0.22$ & $+0.03$ & $-0.20$ & $-0.16$ & $-0.25$ & $-0.28$ & $-0.16$ & $-0.19$ & $-0.01$ & $-0.41$ \\
$\ell_{\mathrm{char}}(p^{\blacktriangledown})$ & $-0.67$ & $-0.45$ & $-0.26$ & $-0.34$ & $-0.01$ & $-0.20$ & $-0.20$ & $-0.30$ & $-0.26$ & $-0.18$ & $-0.19$ & $-0.01$ & $-0.43$ \\
$\smash[t]{\hat{r}}_{\mathcal{M}}$ & $-0.22$ & $+0.04$ & $+0.13$ & $+0.06$ & $+0.13$ & $+0.04$ & $-0.03$ & $+0.20$ & $-0.05$ & $-0.05$ & $+0.08$ & $+0.14$ & $+0.02$ \\
$\smash[t]{\hat{t}}_{\mathcal{M}}$ & $-0.47$ & $-0.47$ & $-0.45$ & $-0.28$ & $-0.31$ & $-0.24$ & $-0.20$ & $-0.31$ & $-0.28$ & $-0.14$ & $-0.31$ & $-0.19$ & $-0.45$ \\
$\smash[t]{\hat{r}_{\mathcal{M}}+\hat{t}_{\mathcal{M}}}$ & $-0.55$ & $-0.42$ & $-0.43$ & $-0.32$ & $-0.26$ & $-0.28$ & $-0.29$ & $-0.23$ & $-0.38$ & $-0.24$ & $-0.29$ & $-0.10$ & $-0.47$ \\
$L_u$ & $-0.45$ & $-0.41$ & $-0.25$ & $-0.16$ & $-0.15$ & $-0.22$ & $-0.29$ & $-0.30$ & $-0.30$ & $-0.17$ & $-0.24$ & $-0.02$ & $-0.35$ \\
$c_\M(x_{1:n})$ & $-0.54$ & $-0.46$ & $-0.40$ & $-0.36$ & $-0.26$ & $-0.28$ & $-0.30$ & $-0.32$ & $-0.38$ & $-0.21$ & $-0.35$ & $-0.18$ & $-0.50$ \\
$\tau_{\mathcal{M}}$ & $-0.33$ & $-0.33$ & $-0.40$ & $-0.39$ & $-0.20$ & $-0.25$ & $-0.21$ & $-0.39$ & $-0.35$ & $-0.18$ & $-0.35$ & $-0.20$ & $-0.43$ \\
$\smash[t]{\hat{K}^{\V}_{\M}}(x_{1:n+k})$ & $-0.51$ & $-0.43$ & $-0.34$ & $-0.25$ & $-0.17$ & $-0.21$ & $-0.22$ & $-0.28$ & $-0.34$ & $-0.16$ & $-0.29$ & $-0.11$ & $-0.42$ \\
$\smash[t]{\hat{K}_{\mathcal{M}}}(x_{n+1:n+k} \mid x_{1:n})$ & $-0.48$ & $-0.59$ & $-0.47$ & $-0.37$ & $-0.29$ & $-0.38$ & $-0.34$ & $-0.42$ & $-0.40$ & $-0.31$ & $-0.39$ & $-0.28$ & $-0.55$ \\
$\smash[t]{\widehat{Kc}_{\mathcal{M}}}(x_{n+1:n+k} \mid x_{1:n})$ & $-0.53$ & $-0.60$ & $-0.47$ & $-0.36$ & $-0.29$ & $-0.36$ & $-0.34$ & $-0.38$ & $-0.42$ & $-0.28$ & $-0.39$ & $-0.24$ & $-0.56$ \\
$\smash[t]{\widehat{Kt}_{\mathcal{M}}}(x_{n+1:n+k} \mid x_{1:n})$ & $-0.45$ & $-0.56$ & $-0.46$ & $-0.36$ & $-0.28$ & $-0.34$ & $-0.33$ & $-0.42$ & $-0.40$ & $-0.29$ & $-0.39$ & $-0.27$ & $-0.53$ \\
\bottomrule
\end{tabular}
\end{table}

\Cref{fig:sample-qwen3-8b} shows how the $\Ktc$ metric changes between two reference machines of different family. Not only are many items not certified by Qwen3-8B, as compared to Phi-4-14B-reasoning, but their difficulty is higher. Qwen3-8B is a worse compressor, but mostly because many equivalent programs and hints that reduce $\Ktc$ in Phi-4-14B-reasoning do not yield the sequence for Qwen3-8B. This adds to  our answer to RQ6 in how the power of the reference machine affects the results.

\begin{figure}[h]
\centering
\includegraphics[width=.8\linewidth]{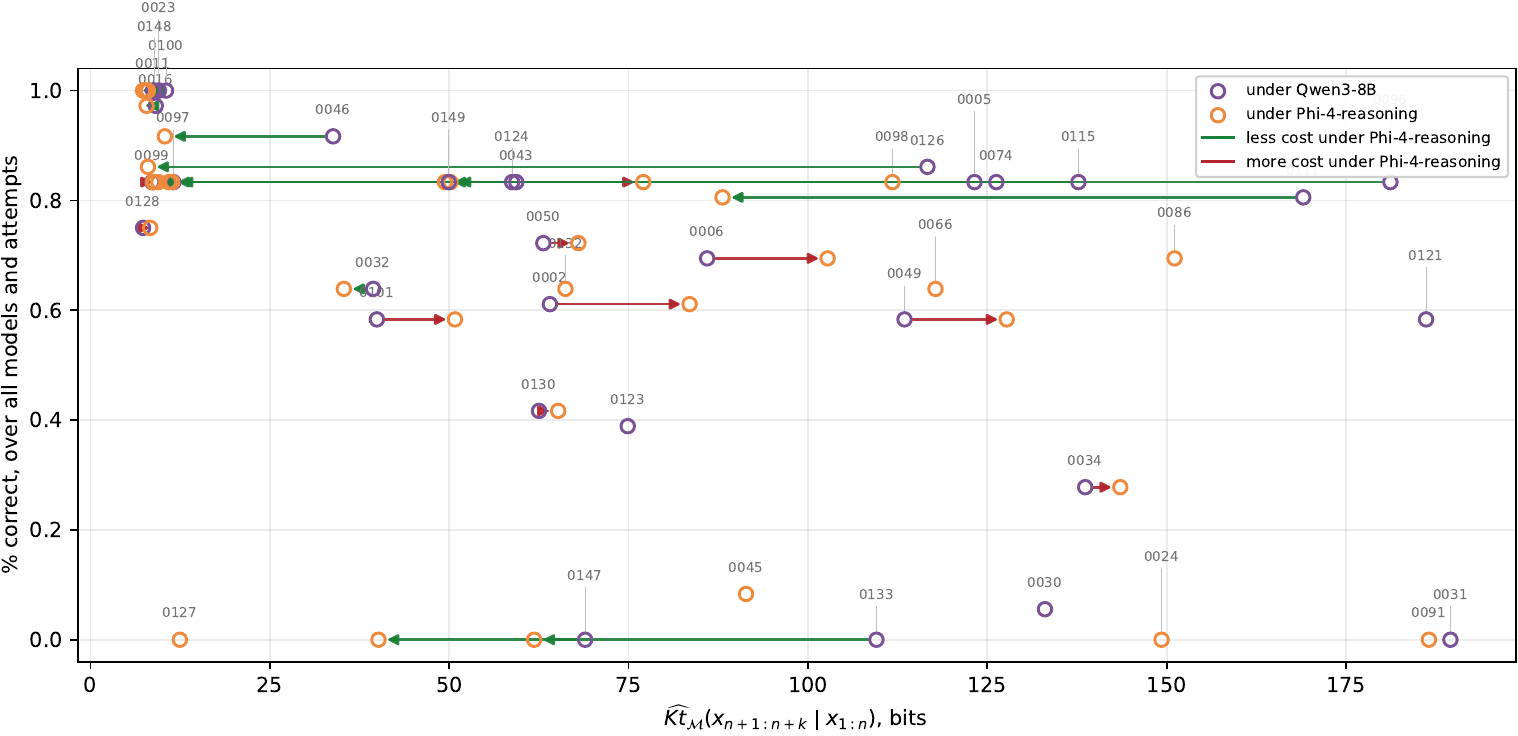}\\
\caption{Comparison of $\Ktc$ between two reference machines (Qwen-3-8B and Phi-4-14B-reasoning) versus the average results of 12 other LLMs on the unquestionable items in the battery. As predicted, success decreases with the theoretical complexity of the item. The weaker model (Qwen-3-8B, purple) brings more refinement in the lower end, but it is not able to cost many items on the upper end. An arrow pointing left means  Phi-4-14B-reasoning describes the item at less cost. Usually  Phi-4-14B-reasoning is a more effective coder and achieves lower values. Values near $Kt=0$ represent cases where the reference machine was able to continue the sequence correctly without any program or hint.}
\label{fig:sample-qwen3-8b}
\end{figure}

\Cref{fig:curvesQwen8B} (left) shows the empirical curves of the points for the 32 certified examples for Qwen3-8B. 
It shows a less clean monotone decline of performance versus Levin's $\Ktc$. %
The two local models fall after the second bin, but the rest have a comeback for the last two quantiles. Very powerful models do not go below 0.8. In reality, we are actually seeing the beginning of a much longer characteristic curve that this model cannot really estimate for frontier models. On the right we see the histogram of difficulties ($\Ktc$), and that explains the figure. Qwen3-8B really does not have sensitivity for large difficulty and powerful models.

\begin{figure}[h]
\centering
\includegraphics[width=0.49\linewidth]{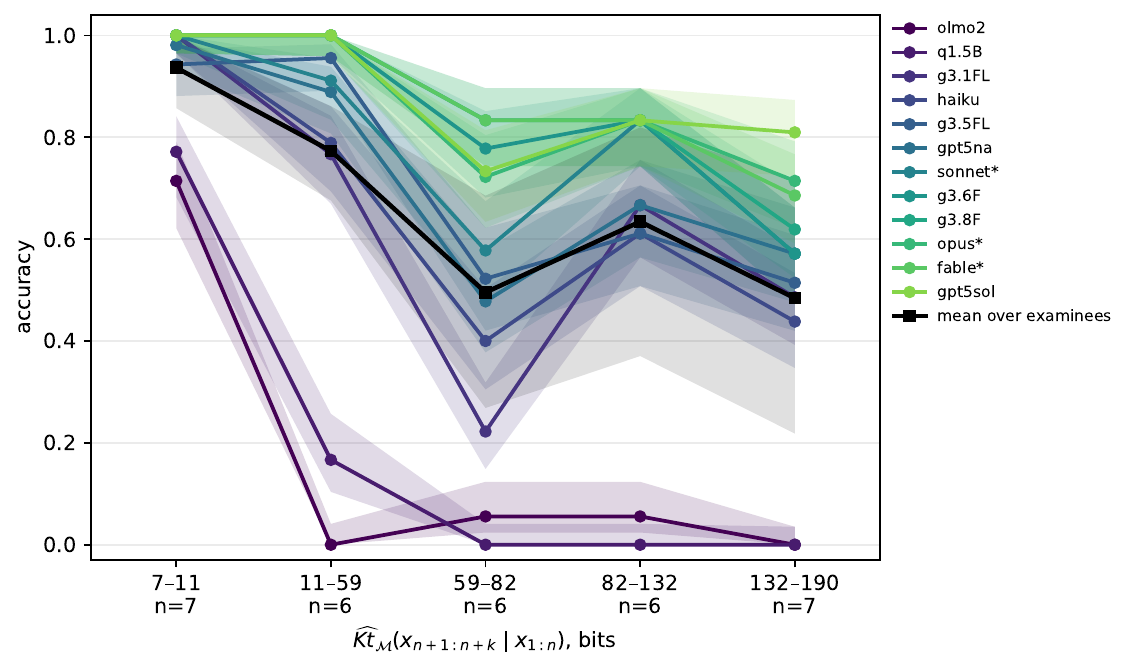}
\includegraphics[width=0.45\linewidth]{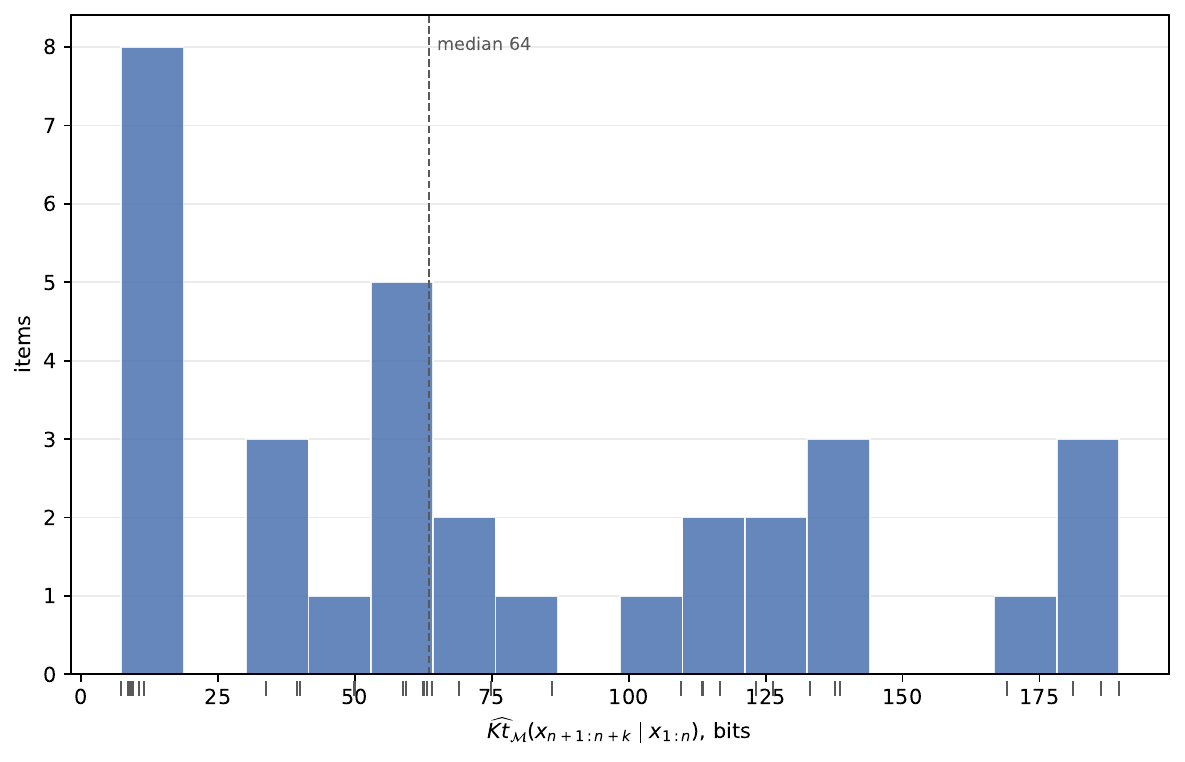}
\caption{Left: Empirical agent characteristic curves (same-size bins)  for the 12 LLM, with $\Ktc$ provided by Qwen3-8B (32 certified examples). Right: a histogram of difficulty ($\Ktc$) of the battery.}
\label{fig:curvesQwen8B}
\end{figure}

\section{Results for Phi-4-14B-reasoning}\label{app:phi-4}

Here we include all the plots and tables for Phi-4-14B-reasoning specifically. This model situates itself in between the previous two models. \Cref{tab:chain-phi-4-reasoning} shows the distribution grid of its 80 unquestionable items by measured ($\hat{r},\hat{t}$). We see, once again, that most items are concentrated in low and medium transformations and low retrieval. 

\begin{table}[t]
\centering
\small
\caption{Number of  \emph{valid} / \emph{unquestionable} / \emph{certified} items  under Phi-4-14B-reasoning, by the cell they fell in.}
\label{tab:chain-phi-4-reasoning}
\begin{tabular}{@{}lrrrrrrrrrrr@{}}
\toprule
$\hat{r}$ & $\hat{t}=0$ & $\hat{t}=1$ & $\hat{t}=2$ & $\hat{t}=3$ & $\hat{t}=4$ & $\hat{t}=5$ & $\hat{t}=6$ & $\hat{t}=7$ & $\hat{t}=8$ & $\hat{t}=9$ & all \\
\midrule
0 & 2/2/2 & 6/4/4 & 5/3/2 & 8/3/2 & 7/6/2 & 2/2/0 & 4/4/0 & 3/1/1 & -- & -- & 37/25/13 \\
1 & 7/7/5 & 9/7/7 & 8/4/2 & 4/4/2 & 8/5/2 & 3/2/0 & 4/3/1 & 1/1/0 & -- & -- & 44/33/19 \\
2 & 2/2/1 & 4/4/2 & 2/2/1 & 2/2/0 & -- & 1/1/0 & -- & -- & 1/0/0 & 1/1/0 & 13/12/4 \\
3 & -- & 5/5/1 & 2/2/1 & 1/1/0 & 2/1/0 & -- & -- & 1/0/0 & -- & -- & 11/9/2 \\
4 & -- & -- & 1/0/0 & -- & 1/1/0 & -- & -- & -- & -- & -- & 2/1/0 \\
\midrule
all & 11/11/8 & 24/20/14 & 18/11/6 & 15/10/4 & 18/13/4 & 6/5/0 & 8/7/1 & 5/2/1 & 1/0/0 & 1/1/0 & 107/80/38 \\
\bottomrule
\end{tabular}
\end{table}

Following with Phi-4-14B-reasoning, \Cref{fig:heatPhi14B} shows the percentage of items being solved by each examinee for each cell in the grid. 
Similarly to the same figure for the other models, high $t$ with low $r$ seems to be a specific area where models tend to fail.

\begin{figure}[h]
\centering
\includegraphics[width=1.0\linewidth]{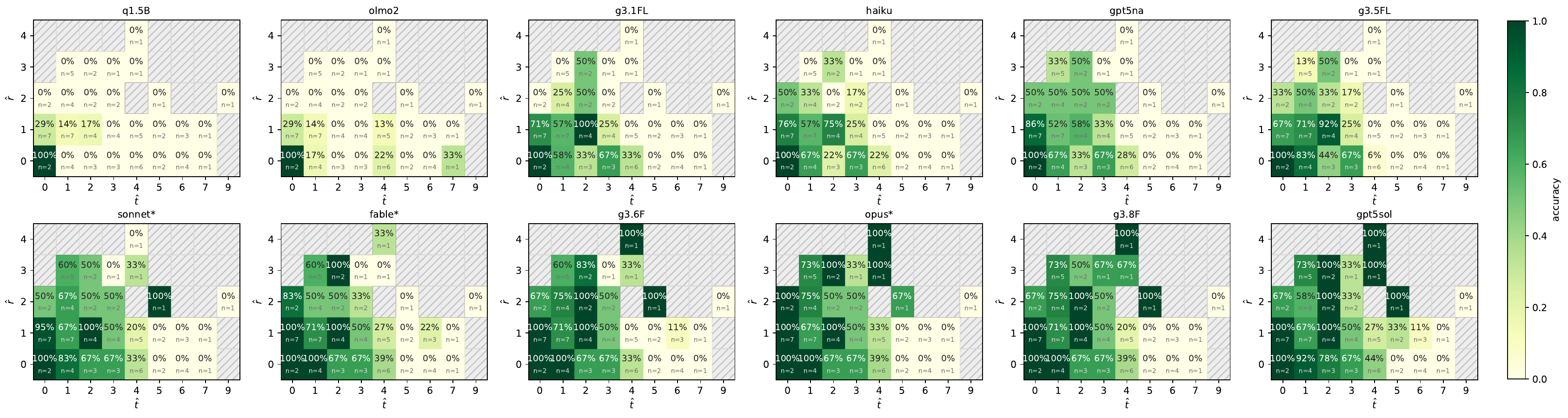}
\vspace{-0.5cm}
\caption{The 12 LLM examinees on ($\hat{r},\hat{t}$) grid and accuracy for the unquestionable examples for Phi-4-reasoning.}
\label{fig:heatPhi14B}
\end{figure}

\Cref{fig:abilitiesPhi-14b} shows the logistic fit  (left)) of the points for the 38 certified examples for Phi-4-14B-reasoning ($2N$ prior points added at (0,1) as anchor). The right plot compares the ability of each model with Item Response Theory estimates of the abilities of the same models according to Epoch's Capability Index (ECI) \citep{ho2025rosetta}. The correlations are very high again.

\begin{figure}[h]
\centering
\includegraphics[width=0.48\linewidth]{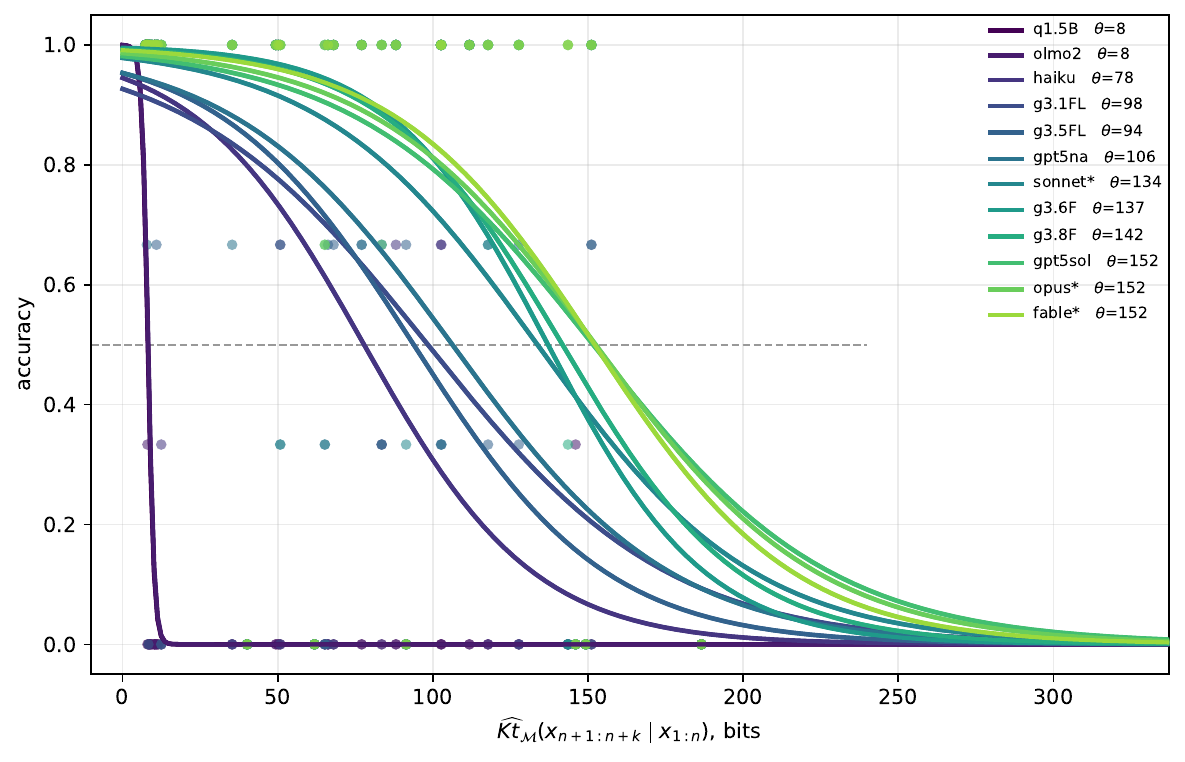}
\includegraphics[width=0.48\linewidth]{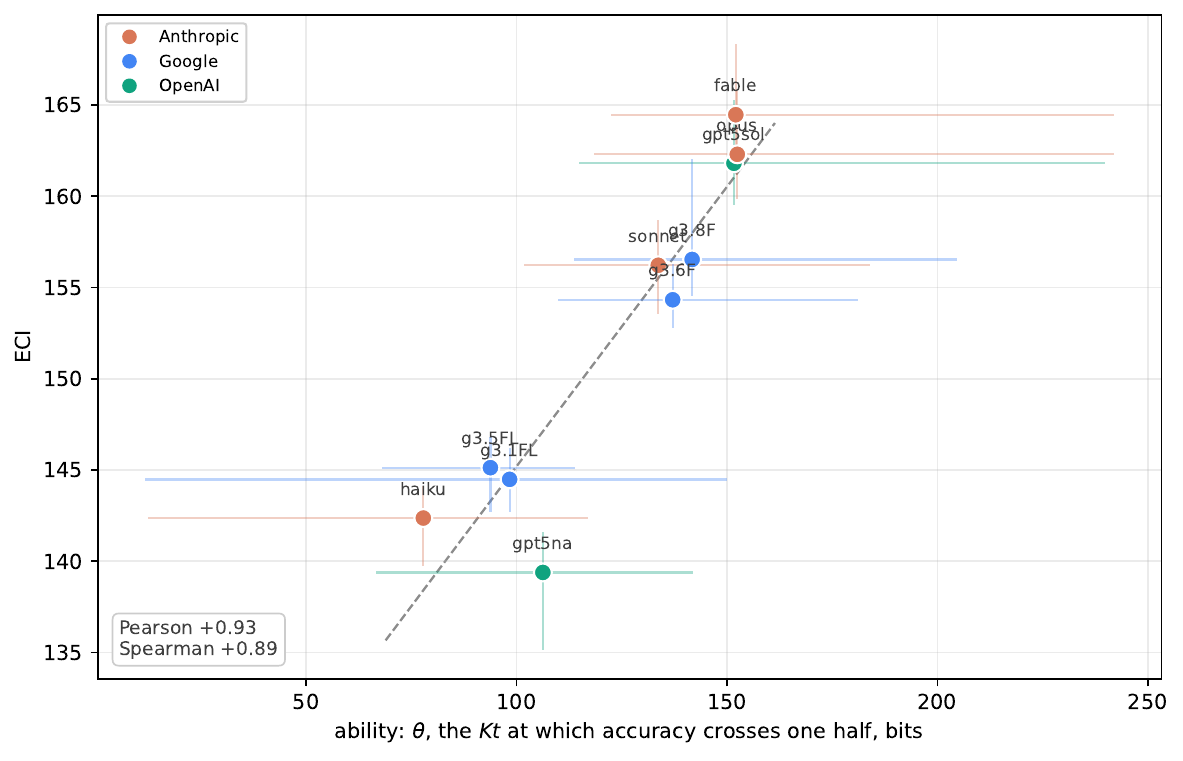}
\caption{Left: Agent characteristic curves for the 12 LLM, with $\Ktc$ provided by Phi-4-14B-reasoning (38 certified examples), with a logistic fit, where the ability of each model $\theta$, measured in bits, is estimated by the point where performance is 0.5. Right: Comparison between each model's {\em ability} and the psychometric, populational score given by Epoch's Capability Index (ECI) \citep{ho2025rosetta}. Only 10 of our 12 are listed in ECI.}
\label{fig:abilitiesPhi-14b}
\end{figure}

\Cref{tab:cert-metricsPhi14b} shows the correlations with the metrics and $\Ktc$ as derived by Phi-4-reasoning. The correlations are a bit in between Qwen3-8B and Qwen3-14B, which is consistent with a range of difficulties that is also in between. For this case, the dominance of $Kt$ over $Kc$ is very noticeable, and no metric really competes with $Kt$.
 In this reference machine, the simple character lengths of the seed and other programs are left about 0.20 points of correlation overall below $\Ktc$, again especially well behind for the frontier models.

\begin{table}[t]
\centering
\scriptsize\setlength{\tabcolsep}{2.0pt}
\caption{Spearman correlations between several metrics and full correct response for the 38 valid, unquestionable and certified items under Phi-4-reasoning. Each item had three repetitions per examinee.}
\label{tab:cert-metricsPhi14b}
\begin{tabular}{lrrrrrrrrrrrrr}
\toprule
measure & q1.5B & olmo2 & g3.1FL & haiku & gpt5na & g3.5FL & sonnet* & fable* & g3.6F & opus* & g3.8F & gpt5sol & \textsf{all} \\
\midrule
$r^0$ & $-0.26$ & $-0.34$ & $-0.09$ & $-0.12$ & $+0.13$ & $-0.21$ & $+0.01$ & $-0.05$ & $+0.03$ & $-0.06$ & $+0.02$ & $+0.00$ & $-0.19$ \\
$t^0$ & $-0.29$ & $-0.10$ & $+0.01$ & $-0.19$ & $-0.23$ & $-0.24$ & $-0.07$ & $-0.10$ & $-0.19$ & $-0.02$ & $-0.18$ & $-0.04$ & $-0.21$ \\
$r^0 + t^0$ & $-0.36$ & $-0.27$ & $-0.14$ & $-0.29$ & $-0.19$ & $-0.37$ & $-0.13$ & $-0.15$ & $-0.23$ & $-0.09$ & $-0.22$ & $-0.08$ & $-0.34$ \\
$\ell_{\mathrm{char}}(p^0)$ & $-0.49$ & $-0.41$ & $-0.14$ & $-0.26$ & $-0.20$ & $-0.33$ & $-0.26$ & $-0.19$ & $-0.28$ & $-0.14$ & $-0.26$ & $-0.09$ & $-0.36$ \\
$\ell_{\mathrm{char}}(p^{\triangledown})$ & $-0.54$ & $-0.60$ & $-0.21$ & $-0.25$ & $-0.09$ & $-0.33$ & $-0.20$ & $-0.27$ & $-0.29$ & $-0.16$ & $-0.29$ & $-0.12$ & $-0.43$ \\
$\ell_{\mathrm{char}}(p^{\blacktriangledown})$ & $-0.53$ & $-0.52$ & $-0.35$ & $-0.39$ & $-0.19$ & $-0.41$ & $-0.31$ & $-0.36$ & $-0.35$ & $-0.28$ & $-0.36$ & $-0.23$ & $-0.53$ \\
$\smash[t]{\hat{r}}_{\mathcal{M}}$ & $-0.10$ & $-0.36$ & $-0.14$ & $-0.09$ & $+0.06$ & $-0.09$ & $-0.11$ & $-0.07$ & $-0.09$ & $-0.09$ & $-0.09$ & $-0.02$ & $-0.16$ \\
$\smash[t]{\hat{t}}_{\mathcal{M}}$ & $-0.48$ & $-0.27$ & $-0.12$ & $-0.39$ & $-0.43$ & $-0.37$ & $-0.31$ & $-0.30$ & $-0.37$ & $-0.25$ & $-0.36$ & $-0.24$ & $-0.42$ \\
$\smash[t]{\hat{r}_{\mathcal{M}}+\hat{t}_{\mathcal{M}}}$ & $-0.46$ & $-0.33$ & $-0.28$ & $-0.48$ & $-0.39$ & $-0.39$ & $-0.37$ & $-0.33$ & $-0.41$ & $-0.30$ & $-0.40$ & $-0.27$ & $-0.49$ \\
$L_u$ & $-0.35$ & $-0.31$ & $+0.03$ & $-0.13$ & $-0.04$ & $-0.32$ & $-0.09$ & $-0.10$ & $-0.14$ & $-0.04$ & $-0.14$ & $+0.01$ & $-0.22$ \\
$c_\M(x_{1:n})$ & $-0.48$ & $-0.37$ & $-0.15$ & $-0.33$ & $-0.18$ & $-0.35$ & $-0.18$ & $-0.16$ & $-0.24$ & $-0.08$ & $-0.23$ & $-0.09$ & $-0.38$ \\
$\tau_{\mathcal{M}}$ & $-0.58$ & $-0.47$ & $-0.38$ & $-0.44$ & $-0.30$ & $-0.48$ & $-0.31$ & $-0.22$ & $-0.27$ & $-0.25$ & $-0.26$ & $-0.30$ & $-0.56$ \\
$\smash[t]{\hat{K}^{\V}_{\M}}(x_{1:n+k})$ & $-0.49$ & $-0.37$ & $-0.10$ & $-0.32$ & $-0.18$ & $-0.33$ & $-0.21$ & $-0.21$ & $-0.29$ & $-0.13$ & $-0.28$ & $-0.14$ & $-0.37$ \\
$\smash[t]{\hat{K}_{\mathcal{M}}}(x_{n+1:n+k} \mid x_{1:n})$ & $-0.42$ & $-0.25$ & $-0.44$ & $-0.68$ & $-0.63$ & $-0.61$ & $-0.50$ & $-0.39$ & $-0.46$ & $-0.39$ & $-0.46$ & $-0.41$ & $-0.66$ \\
$\smash[t]{\widehat{Kc}_{\mathcal{M}}}(x_{n+1:n+k} \mid x_{1:n})$ & $-0.54$ & $-0.37$ & $-0.30$ & $-0.54$ & $-0.41$ & $-0.49$ & $-0.35$ & $-0.27$ & $-0.35$ & $-0.25$ & $-0.34$ & $-0.26$ & $-0.56$ \\
$\smash[t]{\widehat{Kt}_{\mathcal{M}}}(x_{n+1:n+k} \mid x_{1:n})$ & $-0.58$ & $-0.40$ & $-0.46$ & $-0.68$ & $-0.60$ & $-0.64$ & $-0.49$ & $-0.38$ & $-0.45$ & $-0.38$ & $-0.44$ & $-0.40$ & $-0.72$ \\
\bottomrule
\end{tabular}
\end{table}

\Cref{fig:sample-phi-4-14B-reasoning} shows how the $\Ktc$ metric changes between two reference machines of a different family. Not only are a few items not certified by Phi-4-14B-reasoning, as compared to Qwen3-14B, but their difficulty is higher. Phi-4-14B-reasoning is a worse compressor, but in this cases it is not for those items that both certify (arrows move similarly in both direction) but almost ony  because many equivalent programs and hints that reduce $\Ktc$ in Qwen-3-14b do not yield the sequence for Phi-4-14B-reasoning. This completes our answer to RQ6 in how the power of the reference machine affects the results.

\begin{figure}[h]
\centering
\includegraphics[width=.8\linewidth]{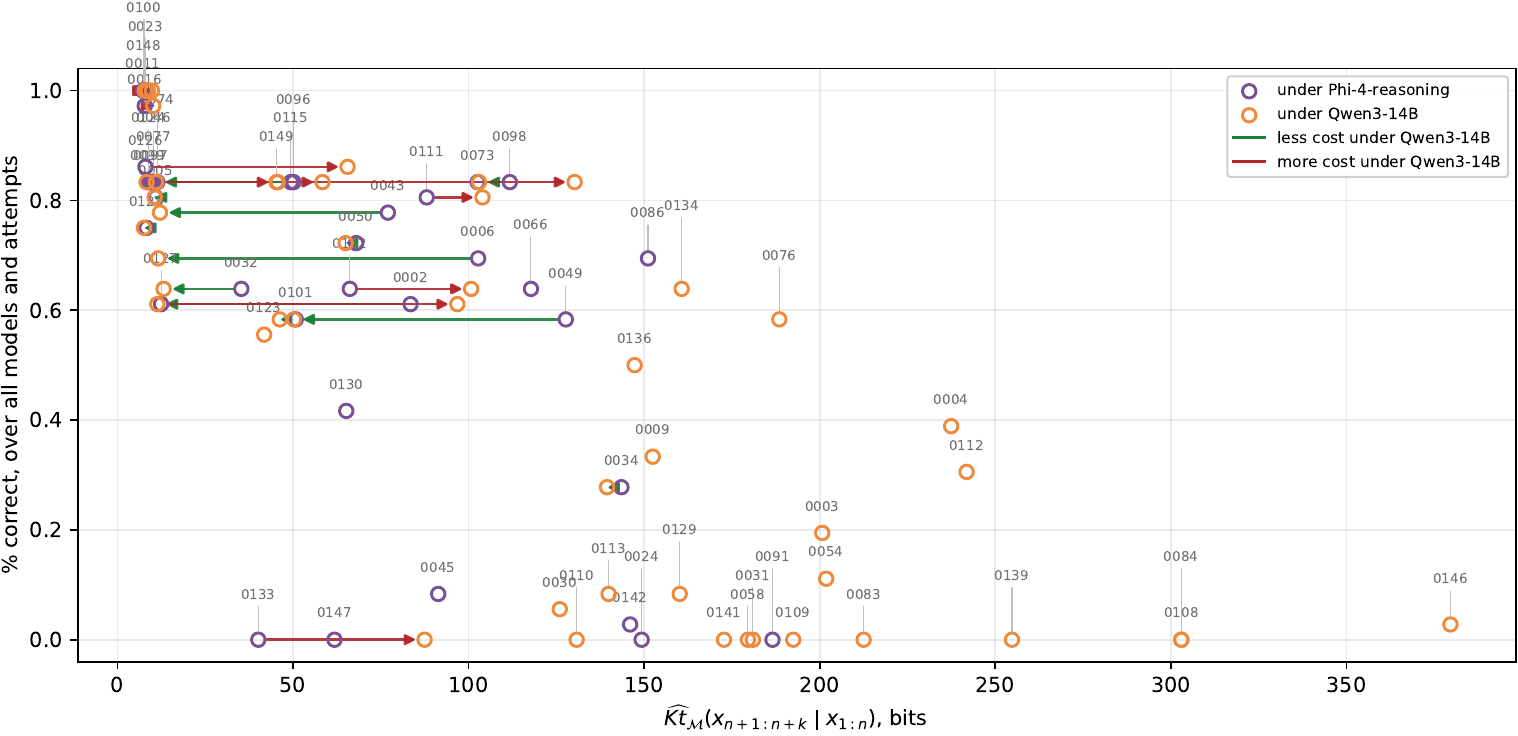}\\
\caption{Comparison of $\Ktc$ between two reference machines (Qwen-3-14B and Phi-4-14B-reasoning) versus the average results of 12 other LLMs on the unquestionable items in the battery. As predicted, success decreases with the theoretical complexity of the item. The weaker model (Phi-4-14B-reasoning, purple) brings more refinement in the lower end, but it is not able to cost many items on the upper end. An arrow pointing left means  Qwen-3-14B describes the item at less cost. Here, we do not see the dominance of  Qwen-3-14B and some arrows go left (in green), but some others go right (in red). Values near $Kt=0$ represent cases where the reference machine was able to continue the sequence correctly without any program or hint.}
\label{fig:sample-phi-4-14B-reasoning}
\end{figure}

Finally, \Cref{fig:curvesPhi14B} (left) shows the empirical curves of the points for the 38 certified examples for Phi-4-14B-reasoning. 
It shows an irregular, yet flat behaviour of performance until the last bin of Levin's $\Ktc$. %
The two local models fall after the second bin, but the rest only fall in the last bin, showing that the critical part for many models happens around difficulty 112. On the right we see the histogram of difficulties ($\Ktc$), and the area around 112 is where we have most spread. It is not only that this reference machine can only reach 187 as maximum $\Ktc$ but also that the spread and number of difficulties make the curves less smooth they should be. 

\begin{figure}[h]
\centering
\includegraphics[width=0.49\linewidth]{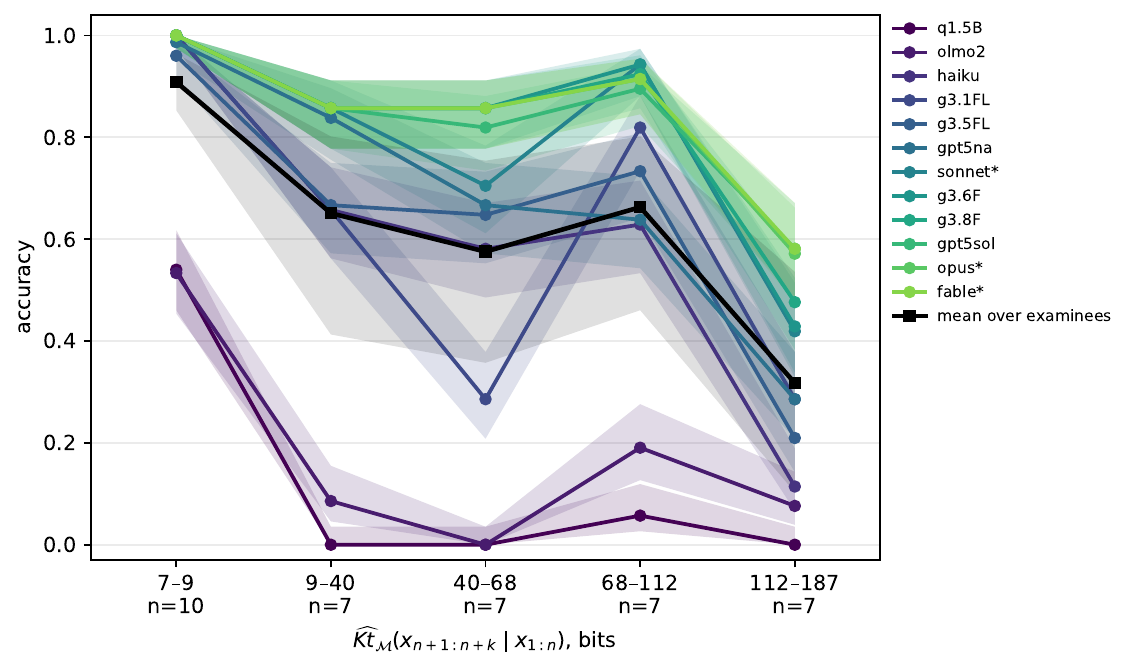}
\includegraphics[width=0.45\linewidth]{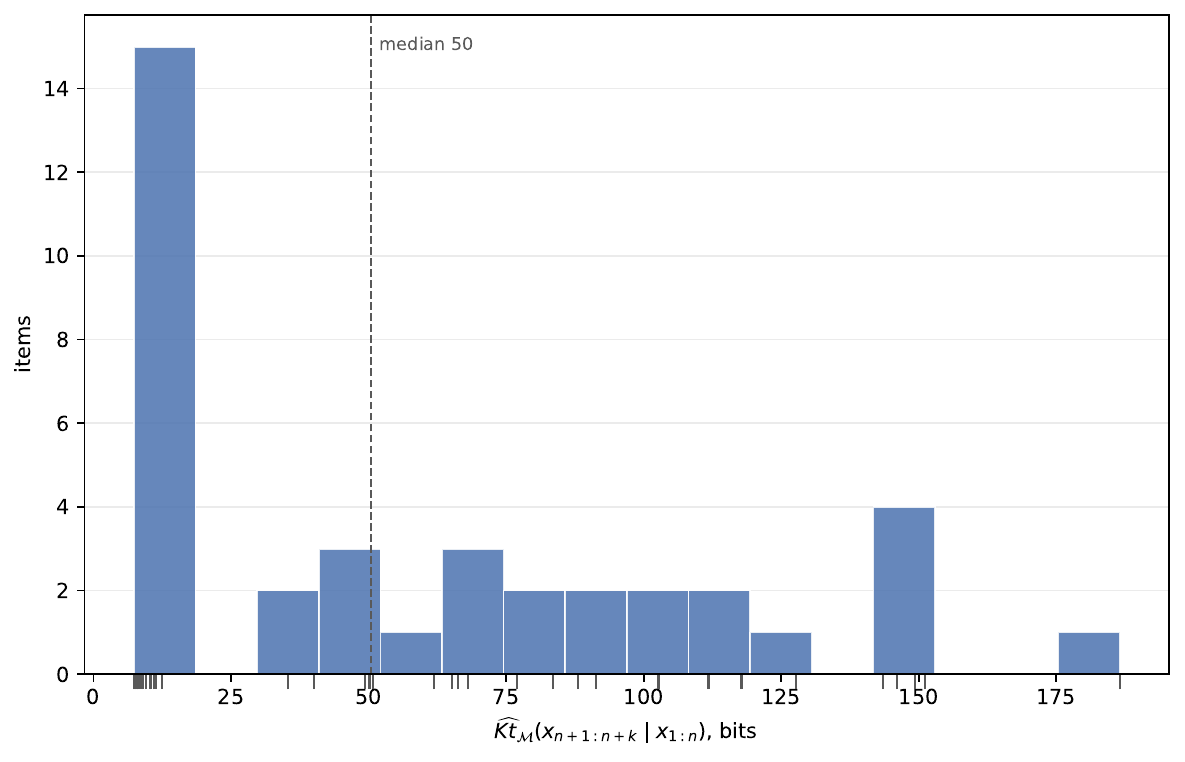}
\caption{Left:  Empirical agent characteristic curves (same-size bins) for the 12 LLM, with $\Ktc$ provided by Phi-4-14B-reasoning (38 certified examples). Left: empirical curve with same-size bins. Right: a histogram of difficulty ($\Ktc$) of the battery.}
\label{fig:curvesPhi14B}
\end{figure}

\section{Ratio Scales}\label{app:ratio}

If $\widehat{Kt}^{|}$ gives a ratio scale, abilities measured with different
reference machines should differ only by a unit, that is, be proportional,
as temperatures in kelvin and rankine are, and not merely related by a line
with an offset, as Celsius and Fahrenheit are. The Pearson correlations
between the abilities of the twelve examinees under the three machines are
0.985 or higher (Table~\ref{tab:stats-ratio}). Fitting a line between each
pair of machines, the intercepts are compatible with zero, at $+7$, $+1$ and
$-3$ bits, with 95\% bootstrap intervals over examinees that include zero,
and forcing the line through the origin leaves the fit essentially
unchanged, with $R^2$ of 0.988, 0.971 and 0.971 against 0.990, 0.971 and
0.972. Abilities under Qwen3-8B are about 1.46 times those under Qwen3-14B,
since the weaker machine describes the same items with more bits, and under
Phi-4-reasoning they are about the same as under Qwen3-14B. Without the two
small examinees, whose abilities are near zero under every machine, the
correlations are 0.94 to 0.98 and the intercepts remain compatible with
zero, although their intervals are wider.

\begin{table}[h]\centering\small
\caption{Pearson correlations between the abilities of the twelve examinees
under the three reference machines. This supports the idea of the metric being a ratio scale, with only the coefficient changing between machines. See \Cref{app:ratio} for justification about the common zero.}
\label{tab:stats-ratio}
\begin{tabular}{@{}lrrr@{}}\toprule
 & Qwen3-14B & Qwen3-8B & Phi-4-reasoning \\ \midrule
Qwen3-14B       & 1.000 & 0.995 & 0.985 \\
Qwen3-8B        & 0.995 & 1.000 & 0.986 \\
Phi-4-reasoning & 0.985 & 0.986 & 1.000 \\
\bottomrule\end{tabular}\end{table}

This is just an empirical check on the possibility of the metric being a ratio scale, which was not set as a research question of the paper. As future work, experimental comparison in terms of training and inference compute (scaling laws) with this metric could be performed, to see if the behaviour between ability and the logarithm of a meaningful combination of these two elements of compute could be additive. 

\section{The Reference Machine as an Examinee}
\label{app:selfref}

The choice of a reference machine $\M$ for $\widehat{Kt}_\M$ tries to capture human language and culture, but still the test ends up biased by the particularities of
$\M$. But how much biased the test is compared to other reference machines? Of course, in our experiments we did not use the three reference machines as examinees, because this would have contaminated the averages across models, and the overall correlations. But it is now time to see how much a reference machine favours itself, and this will indirectly bring some light the question about the particularities of different reference machines. On the key results, there are things we can observe without any extra experiment. Namely, the choice of reference machine barely changes how the examinees are ranked.
  Their abilities under the three machines agree at Spearman $+0.99$, $+0.92$
  and $+0.95$ (Appendix~\ref{app:stats}).

But the effect must appear when applied to itself. 
A trivial question would b whether a machine does well on the items it certifies. Yet the answer to this questions is known in advance. An item enters a
machine's battery only if that machine ran some description of it
successfully, so the battery is selected to be solvable by that machine. This
is a contamination by construction, and it is why the reference machines are
never scored alongside the other examinees in the body of the paper.

Instead, to get more insight on this phenomenon we thought of paired analysis. We can use the items that two machines both certify and
give them to both machines. Every item then has a value of $\widehat{Kt}$
under each machine and a result as an examinee. The question it
answers is whether a machine's own $\widehat{Kt}$ orders its answers better
than the other machine's $\widehat{Kt}$ does. Because the question is
symmetric, it needs both machines.

We chose Qwen3-14B and Phi-4-reasoning because they are the two models with similar number of parameters and the capability is close. Then we select the 28 items that both Qwen3-14B and Phi-4-reasoning certify. These usually have low $\Ktc$. The median $\widehat{\Ktc}$ over them is 13
bits under Qwen3-14B and 24 bits under Phi-4-reasoning. Six of the twelve other examinees answer more than 90\% of them correctly. It is important to understand that this is a special set of items.

Also, to understand if the comparison is fair, we have to explain how the two machines were run. As said above, both machines answered the same 28 items with the same prompt as every other
examinee (Figure~\ref{fig:examprompt}). Each was shown the same prefix, the
longest that any of the three reference machines needed for that item, and
asked for the next $\kf = 5$ terms, as all other examinees. Both were loaded in 4-bit NF4 with
reasoning enabled and used greedy decoding, as were used as reference machines. These are the coincidences. Now the differences. The two runs differed how the end token is identified and the token budget, in the
respects listed in Table~\ref{tab:selfref-conditions}. 
The stopping rule was needed for Phi-4-reasoning. Without it, every reply ran
to the full budget of 8000 tokens, %
because the model
reaches the answer early and then keeps restating it. To reduce the running time, we implemented a rule that stops a reply
as soon as the five terms appear as a run of comma-separated fields on one
line, or once a passage of 200 characters has been repeated five times after
the first 2000 tokens. The smaller budget on 15 items was needed to fit the
second part of the run into the time available. %

\begin{table}[t]\centering\small
\caption{The conditions under which the two reference machines were run as
examinees. The rows that differ are explained in the text.}
\label{tab:selfref-conditions}
\begin{tabular}{@{}l >{\raggedright\arraybackslash}p{4.4cm} >{\raggedright\arraybackslash}p{4.4cm}@{}}\toprule
 & \multicolumn{1}{c}{Qwen3-14B} & \multicolumn{1}{c}{Phi-4-reasoning} \\
\midrule
items, prefix, prompt   & \multicolumn{2}{c}{the same for both} \\
quantisation, reasoning & \multicolumn{2}{c}{4-bit NF4, enabled} \\
decoding                & \multicolumn{2}{c}{greedy} \\
\addlinespace[2pt]
stopping a reply  & at its own end token
                  & also when the five terms appear, or when it repeats
                    itself \\
token budget      & 8000 on all 28 items
                  & 8000 on 13 items, 3000 on 15 \\
\bottomrule\end{tabular}\end{table}

As we ran the two machines with greedy decoding, the result is deterministic, so we stopped running three (unnecessary) repetitions at some point. 
Repetitions would only measure the judge, since the replies themselves do not change. 
But for our exploration, the repetitions were almost always equivalent, so we decided not to use extra compute on this.  

The differences in Table~\ref{tab:selfref-conditions} pull in
opposite directions. The stopping rule is more lenient to Phi-4-reasoning,
since it ends a reply at the point where the answer appears, while the smaller
budget is less lenient. We cannot say which effect is larger.

Under this rule Qwen3-14B solves 21 of the 28 items and Phi-4-reasoning 15.
As a reference machine, Qwen3-14B continued 15 of the 28 with the empty
description, and as an examinee it solves 14 of those; the exception is
\texttt{0005}. Phi-4-reasoning continued 14 of the 28 with the empty
description and solves 10 of those. The prefix is not the reason for the four
it fails, since each was shown at least as many terms as it had needed. %
A likely cause is the prompt. As a reference
machine the model continues a sequence placed after the word \emph{Rule}. As
an examinee, the prompt mentions it may {\em think} before answering, and this model then
reasons at length, exceeding the budget more often than with the prompt that was used for verification in the annotator when used as a reference machine.

Given all these considerations, we can now look at Table~\ref{tab:selfref-items}, which lists the 28 items. For each machine, we show the
kind of description that certified the item (with no help, i.e., `empty', a partial indication, `hint', a full `rule', or the `seed' the composer first wrote), its $\widehat{\Ktc}$, and whether
the machine solves it as an examinee.

\begin{table}[t]\centering\small\setlength{\tabcolsep}{4pt}
\caption{The 28 items certified by both machines, in order of $\widehat{\Ktc}$
under Qwen3-14B. For each machine the table gives the kind of description that
certified the item, its $\widehat{\Ktc}$ in bits, and whether the machine solves
it as an examinee.}
\label{tab:selfref-items}
\begin{tabular}{@{}l lrc c lrc@{}}\toprule
 & \multicolumn{3}{c}{Qwen3-14B} && \multicolumn{3}{c}{Phi-4-reasoning} \\
\cmidrule(lr){2-4}\cmidrule(lr){6-8}
item & described by & $\widehat{\Ktc}$ & solved && described by & $\widehat{\Ktc}$ & solved \\
\midrule
\texttt{0128} & empty & 8 & yes && empty & 8 & no \\
\texttt{0011} & empty & 8 & yes && empty & 7 & yes \\
\texttt{0099} & empty & 8 & yes && empty & 9 & yes \\
\texttt{0148} & empty & 8 & yes && empty & 8 & yes \\
\texttt{0023} & empty & 9 & yes && empty & 8 & yes \\
\texttt{0100} & empty & 10 & yes && empty & 8 & yes \\
\texttt{0016} & empty & 10 & yes && empty & 8 & yes \\
\texttt{0046} & empty & 10 & yes && empty & 10 & yes \\
\texttt{0005} & empty & 11 & no && empty & 11 & yes \\
\texttt{0096} & empty & 11 & yes && hint & 50 & no \\
\texttt{0097} & empty & 11 & yes && empty & 9 & no \\
\texttt{0002} & empty & 11 & yes && hint & 84 & no \\
\texttt{0006} & empty & 12 & yes && hint & 103 & yes \\
\texttt{0043} & empty & 12 & yes && hint & 77 & yes \\
\texttt{0032} & empty & 13 & yes && hint & 35 & no \\
\texttt{0124} & rule & 46 & yes && empty & 9 & yes \\
\texttt{0101} & hint & 46 & yes && hint & 51 & no \\
\texttt{0049} & hint & 50 & no && rule & 128 & no \\
\texttt{0074} & hint & 58 & yes && empty & 11 & no \\
\texttt{0050} & hint & 65 & yes && seed & 68 & yes \\
\texttt{0126} & hint & 66 & yes && empty & 8 & yes \\
\texttt{0133} & hint & 87 & no && hint & 40 & no \\
\texttt{0127} & rule & 97 & no && empty & 12 & no \\
\texttt{0132} & rule & 101 & no && hint & 66 & no \\
\texttt{0098} & rule & 103 & yes && rule & 112 & yes \\
\texttt{0111} & hint & 104 & yes && hint & 88 & no \\
\texttt{0115} & rule & 130 & no && hint & 49 & yes \\
\texttt{0034} & rule & 139 & no && rule & 144 & no \\
\bottomrule\end{tabular}\end{table}

Now we are ready to answer the question of how much the reference machine biases its own evaluation. 
For each examinee we compute the Spearman correlation, over the 28 items,
between its outcome on each item and $\widehat{\Ktc}$, once with Qwen3-14B's
values and once with Phi-4-reasoning's. For the two reference machines the
outcome is whether the item was solved (1) or not (0). For the twelve other examinees, which we also include, whose replies do vary between attempts, it is the proportion of their three attempts answered correctly. 
A more negative correlation means that measure orders the items
better for that examinee, in the sense that the items it calls more expensive
are the ones the examinee fails. Ties are given their average rank.

\begin{table}[t]\centering\small
\caption{Spearman correlation between outcome and $\widehat{\Ktc}$ over the 28
items, with each machine's values of $\widehat{\Ktc}$. A more negative value
means the measure orders the items better for that examinee.}
\label{tab:selfref-corr}
\begin{tabular}{@{}lrrl@{}}\toprule
examinee & Qwen3-14B's $\widehat{Kt}$ $\:\:$ & Phi-4-reasoning's $\widehat{Kt}$
 & ordered better by \\ \midrule
Qwen3-14B               & $-0.51$ & $-0.33$ & its own measure, correlation difference = 0.18 \\
Phi-4-reasoning         & $-0.33$ & $-0.40$ & its own measure, correlation difference = 0.07 \\
\midrule
\addlinespace[2pt]
The 12 other LLMs, mean & $-0.35$ & $-0.33$ & neither, correlation difference = 0.02  \\
\bottomrule\end{tabular}\end{table}

Table~\ref{tab:selfref-corr} gives the result. For the twelve examinees that
are not reference machines, shown in the bottom line, the two measures are almost equally good, at
$-0.35$ and $-0.33$. %
But, for each reference machine, its own measure is the better one. The
margin is 0.18 for Qwen3-14B and 0.07 for Phi-4-reasoning.

We checked how much this depends on particular items. Restricted to the 25
items that Qwen3-8B also certifies, the pattern holds and Phi-4-reasoning's
margin widens. Qwen3-14B correlates $-0.46$ with its own measure and $-0.37$
with the other, and Phi-4-reasoning $-0.48$ with its own and $-0.34$ with the
other. Leaving out each of the 28 items in turn, Qwen3-14B's own measure is
the better one every time, by at least 0.10. Phi-4-reasoning's own measure is
also the better one every time, but by very little when \texttt{0126} is
left out.

So we can conclude what we expected: a reference machine is favoured by its own battery, but we can quantify it and understand the reasons better. The first reason is
the selection described above. A machine's battery holds only items that the
machine could run the given solution (in many cases with the empty program). The second reason is what we have shown. On items that both machines certify, a
machine's own $\widehat{Kt}$ orders its answers better than the other
machine's $\widehat{Kt}$ does. The effect appears for both machines, in the
expected direction, while the twelve other examinees find the two
measures almost equally good. It is clear for Qwen3-14B and small for
Phi-4-reasoning.

The evidence is not conclusive about the magnitude because the 28 items are among the least expensive in
the battery, so the range of $\widehat{Kt}$ over them is narrow. Also, as said, the two
machines were run under conditions that differ in the ways listed in
Table~\ref{tab:selfref-conditions}. And there is one reference machine on each
side, so the result rests on two comparisons.

But we stay reassured about our decision that the reference machine should not be scored as an examinee on a battery it helped
to build, both because of the selection bias performed by certification and because its own measure fits it
better than any other machine.

\section{Statistical Tests}
\label{app:stats}

We report intervals and tests for the main quantitative results. Intervals
are at 95\%. For correlations over items we use a percentile bootstrap
that resamples items, 4000 times, because the uncertainty comes from the composition of the battery. 
For the interval of the Pearson correlation with ECI we use the Fisher
$z$-transformation.
It covers the ten examinees that ECI
lists, since OLMo-2-7B and Qwen2.5-1.5B are not in it. For a proportion we
use the Clopper--Pearson interval.
All $p$-values come from one-sided
permutation tests in the predicted direction, with 10{,}000 permutations over
items and 20{,}000 over examinees\footnote{We implemented the tests directly, but we then used SciPy's functions \citep{virtanen2020scipy} and replicated the results, with very minor differences: {\tt spearmanr}, {\tt bootstrap} with the percentile method, {\tt permutation\_test}, and {\tt binomtest} with the exact interval, which is Clopper–Pearson. }.

\begin{table}[h]\centering\small
\caption{Intervals and tests for the main results. The first block corresponds
to the last column of Tables \ref{tab:cert-metricsQwen14b-full}, \ref{tab:cert-metricsQwen8b} and \ref{tab:cert-metricsPhi14b}, the second to \Cref{fig:abilitiesQwen14B} (right),
the third to the rankings in Appendix \ref{app:selfref}, and the last to the
supra window in \Cref{tab:windows}.}
\label{tab:stats}
\begin{tabular}{@{}lrrr@{}}\toprule
 & estimate & 95\% interval & $p$ \\ \midrule
\multicolumn{4}{@{}l}{\textit{Spearman correlation of $\widehat{Kt}^{|}$ with accuracy, over items}}\\
Qwen3-14B, 51 items        & $-0.79$ & $[-0.87, -0.67]$ & $<0.0001$ \\
Qwen3-8B, 32 items         & $-0.53$ & $[-0.75, -0.19]$ & $0.001$ \\
Phi-4-reasoning, 38 items  & $-0.72$ & $[-0.88, -0.48]$ & $<0.0001$ \\
\addlinespace[2pt]
\multicolumn{4}{@{}l}{\textit{Ability against ECI under Qwen3-14B, 10 examinees (subset in ECI)}}\\
Pearson                    & $+0.87$ & $[+0.53, +0.97]$ & $0.002$ \\
Spearman                   & $+0.76$ &                  & $0.007$ \\
\addlinespace[2pt]
\multicolumn{4}{@{}l}{\textit{Agreement of the ability rankings, Spearman, 12 examinees}}\\
Qwen3-14B and Qwen3-8B        & $+0.99$ & & $<0.001$ \\
Qwen3-14B and Phi-4-reasoning & $+0.92$ & & $<0.001$ \\
Qwen3-8B and Phi-4-reasoning  & $+0.95$ & & $<0.001$ \\
\addlinespace[2pt]
\multicolumn{4}{@{}l}{\textit{gpt5sol on the 10 items above 190 bits under Qwen3-14B}}\\
items solved on a majority of attempts & 2 of 10 & $[3\%, 56\%]$ & \\
\bottomrule\end{tabular}\end{table}

The correlation between $\widehat{Kt}^{|}$ and accuracy is significant for
all three reference machines. It is also significant at the 5\% level
for each of the twelve examinees taken separately, with values from $-0.37$
to $-0.75$ under Qwen3-14B. Ability correlates with ECI under all three
machines, with Pearson coefficients between $+0.84$ and $+0.93$ and $p <
0.01$ in each case, and the three machines rank the examinees in nearly the
same order.

Gemini 3.6 Flash is also the composer, verifier, typer and judge, which could
favour it as an examinee in several ways. Unlike the reference machines, which were set apart and never included as proper examinees (see Appendix \ref{app:selfref}), we kept Gemini 3.6 Flash in the pool because we wanted to observe the effect, and also because the extreme decision would have been to exclude all models of the same family. Instead we analyse its effect here. 
As judge, it might mark its own answers
leniently. Comparing the judge's verdicts with a mechanical comparison of the
five answer terms against the intended ones, over the 153 attempts per
examinee on the items certified under Qwen3-14B, the judge accepted one
answer from Gemini 3.6 Flash that the mechanical check rejects, against
between zero and three for each of the other examinees, and it never rejected
a correct answer. As generator, it might share habits with the sequences it
wrote. Against a line fitted between ability and ECI, Gemini 3.6 Flash lies
12 bits above, but GPT-5.6-sol and Gemini 3.8 Flash, which take no part in
the pipeline, lie 11 and 9 bits above, and the residuals of the ten examinees
range from $-26$ to $+21$ bits. Gemini 3.8 Flash also scores above it, at 143
bits against 140. Any advantage is therefore small, and within the spread of
the other examinees.

In the 
comparison of measures in Tables \ref{tab:cert-metricsQwen14b-full}, \ref{tab:cert-metricsQwen8b} and \ref{tab:cert-metricsPhi14b}, $\widehat{Kt}^{|}$ has the
highest correlation with accuracy. Under Qwen3-14B, a paired bootstrap over items does not
separate it from the other complexity measures. Its difference from the
unconditional $\widehat{K}^{\V}_{\M}$, $-0.79$ against $-0.69$, has an
interval of $[-0.23, +0.01]$, and its difference from $c(x_{1:n})$ one of
$[-0.26, +0.02]$. From the conditional $\widehat{K}^{|}$ and
$\widehat{Kc}$ it differs by only $0.01$ and $0.02$. The one significant
difference is with $\log_2\tau$ alone, $-0.79$ against $-0.65$, a difference
of $-0.14$ with interval $[-0.29, -0.01]$. This is natural, as many of these metrics express very similar things, and shows that the logarithm of the inference steps does not have an important effect yet, at this level of complexity, but this may change as more complex items and billions of reasoning tokens are used in the future.

One might expect the time term $\log_2\tau$ to matter more at the upper end,
where the computations are longer. But here is where the other term in $\Ktc$ is higher too. 
Under Qwen3-14B, 
adding it to
$\widehat{K}^{|}$ changes the correlation with accuracy by at most 0.02 in
the upper half of the items, and by up to 0.08 in the lower half, where it
is the only term that orders the items continued with no description. %
The logarithm grows slowly, so that even a
million tokens would add only 20 bits, and the largest annotation budget,
12000 tokens, caps it at about 13.6 bits. %
Using $\Kco$ rather than $\Ktc$ would be perfectly justified too, especially for this battery, and at these levels of inference tokens. It was not our goal to exclude other metrics, especially those based on Kolmogorov complexity or those that emphasise the benefits of costing using the reference machine. This is what we are advocating for. The research question was about checking whether $\Ktc$ works and it does. Also, we think that in agentic scenarios, the reasoning tokens compared to the size of the hints may grow considerably, and inference time may become much more relevant. Removing it in this first test and formulation, because it just adds little correlation, would be against the theoretical justification (such as \citep{levin1973,achille2026universal}) and would remove a potential key signal in the future.

The results about the supra window should also be taken with care. Its lower
end in \Cref{tab:windows} is where the fitted curve crosses one third, and the window
above it holds few items, 10 above 190 bits under Qwen3-14B. With this number, an
exact interval cannot exclude a rate above one third unless no item is
solved. Doubling the size of the battery would likely give much narrower intervals. Still, on these 10 items gpt5sol answers 27\% of its attempts correctly and
solves 2 on a majority of attempts, with an interval of $[3\%, 56\%]$. But this is the most powerful model. The
other five strongest examinees solve between 2 of 14 and 4 of 11 items in
their own windows. Seven of the ten items above 190 bits are solved on a
majority of attempts by none of these six. For the four weaker API examinees,
whose windows hold 20 to 27 items, the intervals do exclude a rate above one
third, with upper ends between 22\% and 32\%. So the position and length of
each window are uncertain, but a weak model such as Qwen3-14B can certify
items that the strongest examinees mostly fail.

The logistic fits add $2N$ pseudo-observations at $(0,1)$, twice as many as
there are items, which anchors each curve at full accuracy for an item of
zero bits. We do this as we would expect all models to succeed at items of difficulty close to zero and the results already point in that direction. In any case, the results for abilities or infra/supra windows are not sensitive to this choice. Refitting under
Qwen3-14B with $0$, $N/2$, $N$ or $4N$ such points leaves the order of the
twelve examinees unchanged (Spearman $+1.00$ against the fit with $2N$, and
$+0.99$ with $4N$), moves each ability by at most 6 bits, and keeps the
Pearson correlation with ECI between $+0.87$ and $+0.88$. Without any prior,
the supra windows of the six strongest examinees start 2 to 8 bits later, so
that of gpt5sol runs from 197 bits instead of 190.

\subsection{How many rivals are enough}
\label{app:stats-rivals}

The unquestionability check composes six rivals at each rung of the ladder.
Here we measure how close that number is to saturation, that is, how much
the verdicts would change with fewer.

A \emph{rung} is a prefix length $L$ in the ladder
$\Lambda$ at which rivals are composed. A \emph{rival} at $L$ is a
description $q$ that reproduces the prefix $x_{1:L}$ and departs from the
sequence at term $L+1$, as verified by $\mathcal{V}$. There are two kinds. A
\emph{composed rival} is written by the composer, which is asked for six at
each rung (a few fail verification, so some rungs have fewer). A \emph{structural rival} is not composed but constructed. It states the
$L$ terms of the prefix and then departs from them, in one of three ways, by
repeating the prefix from its first term, by repeating its last term, or by
continuing with a constant. The structural rivals exist at every rung. A
rival \emph{blocks} the item at $L$ if it costs no more than the least
expensive description of the whole sequence plus the tolerance,
$c_{\mathcal{M}}(q) \le \widehat{K} + \delta$. The item is unquestionable at
$L$ if no rival blocks there, and $L_u$ is the first rung at which this
holds.

For each reference machine we took the 101 valid items
that have composed rivals at some rung, and recomputed $L_u$ using only $k$
of the composed rivals at each rung, keeping all the structural rivals, for
$k = 1, \dots, 5$. Fewer rivals can only remove blockers, so the only
possible change is that a rung before $L_u$ is no longer blocked, which
makes $L_u$ shorter or makes a questionable item unquestionable. We report
two measures. The first is the number of items for which at least one choice
of $k$ rivals would change $L_u$, a worst case. The second is the expected
number of items whose $L_u$ changes when the $k$ rivals at each rung are
drawn at random from those composed. It is computed exactly and not by simulation. At a rung with $n$ composed
rivals, of which $b$ block, and no blocking structural rival, the probability
that $k$ random rivals include no blocker is
$\binom{n-b}{k} / \binom{n}{k}$. Being an expectation, this
second measure need not be a whole number.

\begin{table}[h]\centering\small
\caption{How the unquestionability verdicts depend on the number of composed
rivals, over the 101 valid items with composed rivals at some rung. The last two blocks give the number of items whose $L_u$ would change with
only $k$ of the composed rivals, for some choice of the $k$ (worst case) and
in expectation over a random choice.}
\label{tab:stats-rivals}
\begin{tabular}{@{}lrrr@{}}\toprule
 & Qwen3-14B & Qwen3-8B & Phi-4-reasoning \\ \midrule
rungs with composed rivals                     & 240 & 240 & 240 \\
rungs where a composed rival blocks            &  46 &  47 &  44 \\
\quad of which no structural rival blocks      &   1 &   3 &   1 \\
share of composed rivals that block there      & 65\% & 57\% & 67\% \\
median number of composed rivals that block    &   4 &   3 &   4 \\
\addlinespace[2pt]
\multicolumn{4}{@{}l}{\textit{items whose $L_u$ changes for some choice of $k$ rivals}}\\
$k = 1$ & 1 & 3 & 1 \\
$k = 3$ & 1 & 3 & 1 \\
$k = 4$ & 0 & 2 & 0 \\
$k = 5$ & 0 & 1 & 0 \\
\addlinespace[2pt]
\multicolumn{4}{@{}l}{\textit{expected number of items whose $L_u$ changes, random $k$ rivals}}\\
$k = 1$ & 0.6 & 2.1 & 0.6 \\
$k = 3$ & 0.1 & 0.8 & 0.1 \\
$k = 4$ & 0.0 & 0.4 & 0.0 \\
$k = 5$ & 0.0 & 0.2 & 0.0 \\
\bottomrule\end{tabular}\end{table}

Table~\ref{tab:stats-rivals} shows that six composed
rivals are close to saturation. Under Qwen3-14B and Phi-4-reasoning, no
choice of four of the six would change any verdict, and under Qwen3-8B at
most two items would change with four and one with five. Two facts explain
this. First, the structural rivals do most of the work. Whenever a composed
rival blocks, a structural rival blocks as well, except at one to three rungs
per machine. Second, where composed rivals block they overlap, with
a median of three or four of the six blocking, so that dropping some of them
rarely leaves a rung unblocked. More rivals of the same kind are therefore
unlikely to change any verdict.

This does not mean that the procedure finds every rival, but the level of compression is what matter most. We could also manually change $\delta$, if would like to reduce further reduced false positives.

\section{Worked Examples}\label{app:examples}

We presented the pipeline succinctly in the paper (\Cref{sec:pipeline}). The next section describes it in full detail. In between, it is very helpful to see how a few examples evolved during the pipeline, with actual descriptions and costs, as a trace.

\subsection{Item 0123: \texorpdfstring{$\widehat{\Ktc}=41.8$}{Kt=41.8}}
\label{app:example}

We follow our first item, \texttt{0123}, through every stage under Qwen3-14B, and
then summarise what happened to it under the other two reference machines. 

\paragraph{Generation and validation.}

The generator asked for the cell $(r_0, t_0) = (3, 0)$, three retrievals and
no transformation, with the theme \emph{michelin star tier levels} from the
domain \emph{cuisine}. The composer returned the {\em seed}
\begin{quote}\ttfamily
alternating between michelin star tiers, basic taste modalities, and days of
the week in order
\end{quote}
and the sequence it produces, whose first terms are
\begin{quote}\ttfamily
1 star, sweet, monday, 2 stars, sour, tuesday, 3 stars, salty, wednesday,
1 star, bitter, thursday, 2 stars, umami, friday, 3 stars, sweet, saturday,
\ldots
\end{quote}
The three tables have periods 3, 5 and 7, so the sequence repeats only after
105 groups. The item was generated with $N = 70$ terms. The verifier ran the
seed cold times and reproduced all 70 terms each time, so the item is
valid.

The composer then wrote nine equivalents, of which seven reproduced the
sequence on verification. They range from the {\em bare} version 
\begin{quote}\ttfamily
Alternate items from Michelin star tiers, basic tastes, and lowercase days of
the week.
\end{quote}
to pseudocode such as ``\texttt{A, B, C = Michelin star tiers, basic tastes,
lowercase days of the week; term 3k+1 is A[k\%3], 3k+2 is B[k\%5], 3k+3 is
C[k\%7]}''. It also wrote 21 rivals at the ladder $\Lambda = \{5, 13, 47,
60\}$. At $L = 5$ they depart at term 6, for example by running the days
backwards (giving \emph{sunday}) or by using the months (giving
\emph{march}). At $L = 13$ the prefix ends just before the fifth taste, and all
six rivals depart at exactly that term. One uses the four classic Western
tastes and gives \emph{sweet}, one the six Ayurvedic tastes and gives
\emph{pungent}, and the others give \emph{savory}, \emph{spicy},
\emph{fatty} and \emph{astringent}.

\paragraph{Unquestionability under Qwen3-14B.}

The least expensive equivalent is the bare statement above, at
$\widehat{K}^{\V}_{\M}(x) = 131.5$ bits, and so it is $p^\blacktriangledown$.
With the clause constant $\delta = 20.6$ bits, a rival blocks the item at $L$
if it costs at most 152.1 bits.

At $L = 5$ the item is blocked by the three structural rivals. Repeating the
five terms shown costs 138.9 bits, repeating the last term costs 148.2 and
continuing with a constant costs 135.7, all inside the bar. At $L = 13$ the
same three rivals cost between 165.0 and 168.7 bits, and the simplest rival of
any kind is the constant one at 165.0. None of the six composed rivals comes
closer. The item is therefore unquestionable at $L_u = 13$, with 13.0 bits to
spare.

\paragraph{Measured \texorpdfstring{$\hat r$}{r} and \texorpdfstring{$\hat t$}{t}.}

The typer read $p^\blacktriangledown$ five times and split it identically each
time into three retrievals (\emph{Michelin star tiers}, \emph{basic tastes}
and \emph{days of the week}), one operation (\emph{lowercase}) and control
(\emph{alternate items from} and the joining punctuation). This gives
$\hat r = 3$ and $\hat t = 1$ with no spread, against the $(3, 0)$ that was
asked for. The one operation is the conversion to lowercase, which the seed
left implicit.

\paragraph{Certification under Qwen3-14B.}

The annotator showed Qwen3-14B the first 13 terms and asked for the next 20.
Table~\ref{tab:example-annot} lists the candidates it ran. Each is costed
conditionally on the prefix, so a hint only has to supply what the 13 terms do
not already show. The prefix shows the three star tiers and four days in full,
but only four of the tastes.

\begin{table}[h]\centering\small
\caption{The candidates run for \texttt{0123} under Qwen3-14B, with their
conditional cost $c_\M(p \mid x_{1:13})$, the tokens used, and how many of the
20 asked terms came out correct and in order.}
\label{tab:example-annot}
\begin{tabular}{@{}lrrrl@{}}\toprule
description & bits & tokens & terms & outcome \\ \midrule
(empty)              & 0.0  & 5820 & 0  & failed, after its budget was raised to 6000 \\
basic tastes         & 30.8 & 3723 & 0  & failed \\
5 basic tastes       & 31.6 & 1184 & 20 & certified, within the first budget \\
The 5 basic tastes   & 39.5 & 3136 & 20 & certified, after its budget was raised \\
\bottomrule\end{tabular}\end{table}

The empty description failed, and so did \emph{basic tastes}, and both
failed in the same way. Each time Qwen3-14B continued with \emph{sweet,
friday, 3 stars, sour, saturday}, getting the stars and the days right and
cycling back to \emph{sweet} after four tastes, since four is all the prefix
shows. \emph{5 basic tastes} costs less than one bit more and succeeded at
once, because the number fixes a fifth taste, umami. It is the winning conditional description, with $\widehat{K}^{|} = 31.6$ bits
and $\tau = 1184$ tokens.

Two less expensive hints, \emph{days of the week} at 26.3 bits and \emph{general
knowledge} at 29.9, were costed but not run, given the limited number of attempts of the algorithm. Neither says anything about the
tastes, so neither is likely to have certified, but the case shows why the
value reported for $\widehat{K}^{|}$ is an upper bound. The unconditional
equivalents, from 119.4 bits upwards, were not run either.

\paragraph{The item's measures.}

Table~\ref{tab:example-metrics} gives the measures for the item. The
difference between the unconditional and conditional costs, 99.9 bits, is the
part of the description that the prefix makes unnecessary. Its
$\widehat{Kt}^{|}$ of 41.8 bits places it well below the median of the 51
certified items, which is 101 bits.

\begin{table}[h]\centering\small
\caption{The measures for \texttt{0123} under Qwen3-14B, in bits.}
\label{tab:example-metrics}
\begin{tabular}{@{}lr@{}}\toprule
$\widehat{K}^{\V}_{\M}(x)$, the whole sequence, unconditional & 131.5 \\
$c_\M(x_{1:13})$, the prefix itself                            & 97.7 \\
$\widehat{K}^{|}$, the winning hint given the prefix           & 31.6 \\
$\log_2 \tau$, with $\tau = 1184$ tokens                       & 10.2 \\
$\widehat{Kt}^{|} = \widehat{K}^{|} + \log_2 \tau$             & 41.8 \\
$\widehat{Kc}^{|} = \widehat{K}^{|} + c_\M(x_{1:13})$          & 129.3 \\
\bottomrule\end{tabular}\end{table}

\paragraph{The examinees.}

Each examinee saw the same 13 terms and was asked for the next five,
\emph{umami, friday, 3 stars, sweet, saturday}, three times.
Table~\ref{tab:example-exam} gives the results under full marking. Fourteen of
the 36 attempts are correct, and four examinees answer correctly on a majority
of their attempts.

\begin{table}[h]\centering\small
\caption{Attempts answered correctly on \texttt{0123}, out of three, with a
typical wrong answer where there is one.}
\label{tab:example-exam}
\begin{tabular}{@{}lcl@{}}\toprule
examinee & correct & a wrong answer \\ \midrule
fable*, gpt5sol, g3.8F & 3 of 3 & \\
g3.6F                  & 2 of 3 & sweet, friday, 3 stars, sour, saturday \\
opus*, gpt5na          & 1 of 3 & sweet, friday, 3 stars, sour, saturday \\
g3.5FL                 & 1 of 3 & 3 stars, sweet, friday, 1 star, sour \\
sonnet*                & 0 of 3 & sweet, friday, 3 stars, sour, saturday \\
haiku                  & 0 of 3 & salty, friday, 3 stars, bitter, saturday \\
g3.1FL                 & 0 of 3 & 3 stars, umami, friday, 1 star, sweet \\
olmo2                  & 0 of 3 & sweet, sour, salty, bitter, two stars \\
q1.5B                  & 0 of 3 & sour, Wednesday, salty, Thursday, Friday \\
\bottomrule\end{tabular}\end{table}

The most common wrong answer, \emph{sweet, friday, 3 stars, sour, saturday},
keeps the stars and the days right and cycles four tastes, returning to
\emph{sweet} where \emph{umami} belongs. It is the continuation of the
four-taste rival that the composer wrote at $L = 13$, and the one Qwen3-14B
itself gave when it had no hint or a hint without the number. Gemini 3.5
Flash-Lite and Gemini 3.1 Flash-Lite sometimes answer out of step, starting
from the wrong position in the cycle. The item therefore
separates the examinees on one piece of knowledge, whether there is a fifth
basic taste, which is the piece its hint supplies.

\paragraph{Under the other two reference machines.}

Qwen3-8B also finds the item unquestionable at $L_u = 13$, with
$\widehat{K}^{\V}_{\M}(x) = 127.6$ bits. Its cheapest rival at 13 costs 47.0
bits more than that, against a $\delta$ of 22.7, which leaves 24.3 bits of
slack. It certified the item as well, but with a longer hint,
\emph{Interleave 1-3 stars, five basic tastes, and days of the week}, at 63.6
bits and 2528 tokens, which gives $\widehat{Kt}^{|} = 74.9$ bits. Like
Qwen3-14B, it failed with the empty description and with \emph{Michelin star
tiers} alone.

Phi-4-reasoning needed a longer prefix. At $L = 13$ its structural rivals cost
149.5 and 151.9 bits, inside its bar of 156.6, so the item was blocked. By
$L = 26$ the structural rivals were outside the bar, but no rivals had been
composed at 26, so the next length that could be tested was 47, where the
item became unquestionable. Certification then failed. The seed and one
equivalent ran to the limit of 1500 tokens without producing a correct term,
and a second equivalent and three hints stopped within the limit, also with
none correct. Only the listing of the continuation succeeded. That is a fallback and not a
certification, so the item is not in Phi-4-reasoning's battery.

\subsection{Item 0049: \texorpdfstring{$\widehat{\Ktc}= 50.3$}{Kt= 50.3}}

Item \texttt{0123} is almost pure retrieval. Here we follow
\texttt{0049}, which has one retrieval and three transformations, again
under Qwen3-14B first and then under the other two machines. It too certified
with a short hint.

\paragraph{Generation and validation.}

The generator asked for the cell $(r_0, t_0) = (1, 3)$ with the theme
\emph{months of the year} from the domain \emph{chronology}. The composer
returned the seed
\begin{quote}\ttfamily
from n=1; take the nth month name in order, reverse it, keep the first 3
letters, and append n
\end{quote}
which gives
\begin{quote}\ttfamily
yra1, yra2, hcr3, lir4, yam5, enu6, ylu7, tsu8, reb9, reb10, reb11, reb12,
yra13, yra14, hcr15, \ldots
\end{quote}
Keeping three letters hides part of each name. January and February both give
\emph{yra}, and September to December all give \emph{reb}, so the letters alone
do not identify the month and only the index keeps the terms distinct. The
item was generated with $N = 71$ terms, and the verifier reproduced all of
them.

The composer wrote nine equivalents and all nine reproduced the sequence, from
\emph{Reverses each month's name, takes the first 3 letters, and appends the
term index n.}\ to pseudocode such as \texttt{a\_n = rev(lower(nth English
month))[1..3] + n}. It also wrote 16 rivals at the ladder $\Lambda = \{5, 13,
23\}$. The five rivals at $L = 5$ all depart at term 6, June, and each keeps
the rule and adds an exception. One uses the whole reversed name for months
of fewer than five letters and gives \emph{enuj6}. Another leaves months ending in \emph{e} unreversed and gives
\emph{jun6}. A third shifts every letter forward when $n$
is a multiple of 6 and gives \emph{fov6}. The rivals at $L = 13$ change which
month is taken after the first year, for example the $(n-12)^2$-th, which gives
\emph{lir14}.

\paragraph{Unquestionability under Qwen3-14B.}

The least expensive description of the whole sequence is the first equivalent
above, at $\widehat{K}^{\V}_{\M}(x) = 128.3$ bits. The bare statement costs
150.1 bits, so here it is not $p^\blacktriangledown$. With $\delta = 20.6$ bits,
a rival blocks the item if it costs at most 148.9 bits.

The item is unquestionable at the first length tried, $L_u = 5$. A structural
rival has to state the five terms, and strings such as \emph{yra1, yra2, hcr3,
lir4, yam5} are expensive to state. Repeating them costs 162.9 bits, repeating
the last one 162.2 and continuing with a constant 159.5, all outside the bar,
and each composed rival costs more again, because its exception has to be
stated as well. The slack is 10.5 bits.

\paragraph{Measured \texorpdfstring{$\hat r$}{r} and \texorpdfstring{$\hat t$}{t}.}

The typer read $p^\blacktriangledown$ five times and gave the same answer each
time. It found one retrieval, the months, and three operations, reversing the
name, keeping the first three letters and appending the index. This gives
$\hat r = 1$ and $\hat t = 3$ with no spread, exactly the cell that was asked
for.

\paragraph{Certification under Qwen3-14B.}

The annotator showed Qwen3-14B the first five terms and asked for the next 20.
Table~\ref{tab:example2-annot} lists the candidates it ran.

\begin{table}[h]\centering\small
\caption{The candidates run for \texttt{0049} under Qwen3-14B, with their
conditional cost $c_\M(p \mid x_{1:5})$, the tokens used, and how many of the 20
asked terms came out correct and in order.}
\label{tab:example2-annot}
\begin{tabular}{@{}lrrrl@{}}\toprule
description & bits & tokens & terms & outcome \\ \midrule
(empty)                      & 0.0  & 3000 & 0  & failed \\
nth month                    & 38.4 & 6000 & 4  & failed, after its budget was raised to 6000 \\
Month names reversed         & 39.5 & 1792 & 20 & certified, within the first budget \\
English month names reversed & 44.2 & 2496 & 20 & certified, within the first budget \\
\bottomrule\end{tabular}\end{table}

With no description, Qwen3-14B did not find the months at all. It treated the
letters as numbers and looked for differences between them, as in \emph{l
$\to$ y (+13), i $\to$ a ($-8$)}. Told \emph{nth month}, it recognised the
months and worked out that the letters come from the end of each name, but
made mistakes and got only four of the 20 terms right, even with 6000 tokens.
\emph{Month names reversed} costs one bit more and succeeded within the first
budget, because it names the reversal, the step the model could not find by
itself. The prefix already shows the truncation to three letters and the
index, so the hint does not need to mention them. It is the winning
conditional description, with $\widehat{K}^{|} = 39.5$ bits and $\tau = 1792$
tokens.

Four cheaper hints were costed but not run. Two of them, \emph{months} at 23.2
bits and \emph{months of the year} at 26.4, say no more than \emph{nth month},
which failed. The other two, \emph{Months spelled backwards} at 37.2 bits and
\emph{Months reversed} at 38.0, say almost exactly what the winning hint says
and would quite possibly have certified. This is a concrete case of the upper
bound discussed in Appendix~\ref{app:disagree}. The value of $\widehat{Kt}^{|}$ reported here is
likely to be a few bits too high.

\paragraph{The item's measures.}

\begin{table}[h]\centering\small
\caption{The measures for \texttt{0049} under Qwen3-14B, in bits.}
\label{tab:example2-metrics}
\begin{tabular}{@{}lr@{}}\toprule
$\widehat{K}^{\V}_{\M}(x)$, the whole sequence, unconditional & 128.3 \\
$c_\M(x_{1:5})$, the prefix itself                             & 100.6 \\
$\widehat{K}^{|}$, the winning hint given the prefix           & 39.5 \\
$\log_2 \tau$, with $\tau = 1792$ tokens                       & 10.8 \\
$\widehat{Kt}^{|} = \widehat{K}^{|} + \log_2 \tau$             & 50.3 \\
$\widehat{Kc}^{|} = \widehat{K}^{|} + c_\M(x_{1:5})$           & 140.1 \\
\bottomrule\end{tabular}\end{table}

Table~\ref{tab:example2-metrics} gives the measures. The prefix makes 88.8 bits
of the description unnecessary. The prefix is also expensive in itself. Its five terms cost 100.6 bits, about
as much as the thirteen terms of the first example (Appendix~\ref{app:example}),
because strings such as \emph{hcr3} are unlikely text. With $\widehat{Kt}^{|} = 50.3$ bits the item is still below the median of
the certified items, 101 bits.

\paragraph{The examinees.}

Each examinee saw the five terms and was asked for \emph{enu6, ylu7, tsu8,
reb9, reb10}. Twenty-one of the 36 attempts are correct, and seven examinees
answer correctly on a majority of their attempts.

\begin{table}[h]\centering\small
\caption{Attempts answered correctly on \texttt{0049}, out of three, with a
typical wrong answer where there is one.}
\label{tab:example2-exam}
\begin{tabular}{@{}lcl@{}}\toprule
examinee & correct & a wrong answer \\ \midrule
g3.6F, g3.8F, gpt5sol, opus*, sonnet*, fable* & 3 of 3 & \\
gpt5na  & 2 of 3 & yra6, yra7, hcr8, lir9, yam10 \\
g3.5FL  & 1 of 3 & yra6, yra7, hcr8, lir9, yam10 \\
haiku   & 0 of 3 & hcr6, lir7, yam8, yra9, yra10 \\
g3.1FL  & 0 of 3 & une6, uly7, ust8, ber9, ber10 \\
olmo2   & 0 of 3 & yra6, yra7, yra8, yra9, yra10 \\
q1.5B   & 0 of 3 & d, e, f, g, \ldots \\
\bottomrule\end{tabular}\end{table}

The wrong answers again follow the rivals. The most common one, \emph{yra6,
yra7, hcr8, lir9, yam10}, repeats the five letter groups shown with new
indices, which is the continuation of the structural rival that repeats the
prefix. This is the rival the item had to exclude to be unquestionable. Gemini 3.1
Flash-Lite takes the last three letters of each month and writes them
forwards. Its first term, \emph{une6}, is also the first term of one of the
composed rivals. The six strongest examinees solve the item on every attempt.

\paragraph{Under the other two reference machines.}

Both other machines also find the item unquestionable at $L_u = 5$, Qwen3-8B
with $\widehat{K}^{\V}_{\M}(x) = 114.8$ bits and 19.9 bits of slack, and
Phi-4-reasoning with 110.7 bits and 30.8 bits of slack. They differ from
Qwen3-14B in certification. For Qwen3-8B every hint failed, including
\emph{first 3 letters of reversed months}, and only the full equivalent
\emph{Reverses each month's name, takes the first 3 letters, and appends the
term index n.}\ certified, at 102.1 bits and 2688 tokens. For Phi-4-reasoning
every hint failed as well, one equivalent reached 10 of the 20 terms, and
another certified, at 117.0 bits and 1728 tokens. The item therefore has
$\widehat{Kt}^{|} = 50.3$ bits under Qwen3-14B, 113.5 under Qwen3-8B and 127.7
under Phi-4-reasoning. The three machines agree that five terms are enough to
fix the continuation, but only Qwen3-14B can continue it from a three-word
hint, and the other two need the whole rule. This is the kind of difference
between reference machines that Figures~4, 8 and~12 show across the battery.

\subsection{Item 0003: \texorpdfstring{$\widehat{\Ktc}=200.7$}{Kt=200.7}}
\label{app:example3}

The two previous items are at the easy end of the scale. Here we follow
\texttt{0003}, which is certified under Qwen3-14B at $\widehat{Kt}^{|} =
200.7$ bits, inside the supra window of every examinee in Table~4, and which
only two of the twelve examinees solve.

\paragraph{Generation and validation.}

The generator asked for the cell $(r_0, t_0) = (1, 4)$ with the theme
\emph{bovine milk fat percentages} from the domain \emph{farming}. The composer
returned the seed
\begin{quote}\ttfamily
A ledger starts with fat percentages [1, 3]. Pop the lowest, emit its element
symbol and ' pct fat'. Add double the percentage modulo 118 as a new debt,
filter out 0, and sort ascending.
\end{quote}
The theme appears only as decoration, in the suffix \emph{pct fat}. What the
rule describes is two numbers, starting at 1 and 3. At each step the smaller
one is printed as the symbol of the chemical element with that atomic number,
and is then replaced by its double modulo 118, the number of elements in the
periodic table. The first terms are
\begin{quote}\ttfamily
h pct fat, he pct fat, li pct fat, be pct fat, c pct fat, o pct fat, mg pct
fat, s pct fat, \ldots
\end{quote}
with atomic numbers 1, 2, 3, 4, 6, 8, 12, 16, which is the two doubling
sequences from 1 and from 3 merged in order. The sequence looks like the
periodic table until it skips boron. At term 13 the doubling of 64 wraps round
to 10, neon, and from there the atomic numbers look erratic. By term 37 the two
numbers have fallen into the same cycle, eight steps apart, so every block of
eight atomic numbers is printed twice in succession. The item was generated
with $N = 72$ terms.

The composer wrote 17 equivalents, but only six reproduced the sequence when
the verifier ran them. The queue is easy to state loosely and hard to state
exactly. It also wrote
15 rivals at the ladder $\Lambda = \{5, 13, 60\}$. At $L = 5$ they read the atomic numbers 1, 2, 3, 4, 6 in other ways. The
divisors of 12 give magnesium next, the divisors of 36 fluorine, the integers
not divisible by 5 nitrogen, and the integers ordered by Euler's totient
boron. At $L = 13$ they keep the queue and change the
doubling after the wrap, and at $L = 60$ they change what happens near the top
of the table.

\paragraph{Unquestionability under Qwen3-14B.}

The least expensive description of the whole sequence is an equivalent,
\emph{Start with 1 and 3. Output lowercase chemical symbol for smallest n plus
' pct fat'; replace n with 2n mod 118 (discarding 0); repeat.}, at
$\widehat{K}^{\V}_{\M}(x) = 272.1$ bits, so the bar for a rival is 292.7 bits.

At $L = 5$ eight rivals come inside it, the four number-theoretic readings
among them. At $L = 13$ none of the composed rivals does, but three structural
ones still cost less than the bar. From $L = 27$ on the structural rivals cost
more than the bar as well, but no rivals were composed at 27, so the next
length that could be tested was 60. There the simplest rival is a composed one at 373.5 bits, and the item is
unquestionable at $L_u = 60$ with 80.8 bits to spare. The prefix of 60 terms is therefore set by the ladder and not by the
item, which would probably have been unquestionable from 27 terms. This
matters for the examinees, as we will see.

\paragraph{Measured \texorpdfstring{$\hat r$}{r} and \texorpdfstring{$\hat t$}{t}.}

The typer read $p^\blacktriangledown$ five times and gave $\hat r = 1$ and
$\hat t = 5$ each time. The one retrieval is the chemical elements, and the
operations are those of the queue and the formatting. This is one more
operation than the four the cell asked for.

\paragraph{Certification under Qwen3-14B.}

Qwen3-14B was shown the first 60 terms and asked for the next $k_u =
\min(20, 72 - 60) = 12$. Table~\ref{tab:example3-annot} lists the candidates it
ran.

\begin{table}[h]\centering\small
\caption{The candidates run for \texttt{0003} under Qwen3-14B, with their
conditional cost $c_\M(p \mid x_{1:60})$, the tokens used, and how many of the
12 asked terms came out correct and in order.}
\label{tab:example3-annot}
\begin{tabular}{@{}lrrrl@{}}\toprule
description & bits & tokens & terms & outcome \\ \midrule
(empty)                     & 0.0   & 3000  & 0  & failed \\
Periodic table of elements  & 18.1  & 3000  & 0  & failed \\
periodic symbol x           & 38.2  & 3000  & 0  & failed \\
bovine milk fat percentages & 44.7  & 3000  & 0  & failed \\
Settles milk fat debts \ldots & 172.1 & 3000 & 8 & failed \\
L = [1, 3]; loop: \ldots      & 187.4 & 10464 & 12 & certified, at the third budget \\
\bottomrule\end{tabular}\end{table}

Every hint failed. With the theme as a hint, Qwen3-14B found that the atomic
numbers double, writing \emph{O (8), S (16), Ge (32), Gd (64)}, but it did not
find the second number or the modulus. A loose equivalent, \emph{Settles milk
fat debts in order by element symbol, doubling and wrapping percentages to
queue new debts}, reached eight of the twelve terms. Only the full algorithm
certified,
\begin{quote}\ttfamily
\texttt{L = [1, 3]; loop: x = min(L); L.remove(x); L.append(2*x \% 118);}\\
\texttt{output lowercase periodic symbol x + ' pct fat'.}
\end{quote}
and even then it failed with 3000 and with 6000 tokens and succeeded with
12000, using 10464. Five simpler hints, such as \emph{Periodic table} and
\emph{Element symbols}, were costed but not run. Given that the other element
hints failed, they are unlikely to have certified.

\paragraph{The item's measures.}

\begin{table}[h]\centering\small
\caption{The measures for \texttt{0003} under Qwen3-14B, in bits.}
\label{tab:example3-metrics}
\begin{tabular}{@{}lr@{}}\toprule
$\widehat{K}^{\V}_{\M}(x)$, the whole sequence, unconditional & 272.1 \\
$c_\M(x_{1:60})$, the prefix itself                            & 463.1 \\
$\widehat{K}^{|}$, the winning description given the prefix    & 187.4 \\
$\log_2 \tau$, with $\tau = 10464$ tokens                      & 13.4 \\
$\widehat{Kt}^{|} = \widehat{K}^{|} + \log_2 \tau$             & 200.7 \\
$\widehat{Kc}^{|} = \widehat{K}^{|} + c_\M(x_{1:60})$          & 650.5 \\
\bottomrule\end{tabular}\end{table}

Table~\ref{tab:example3-metrics} gives the measures. Here the prefix saves only
84.7 bits of a 272.1-bit description, because the rule has to be stated in full
even after 60 terms. The computation also counts for more than in the two previous items. Here
$\log_2 \tau$ adds 13.4 bits, against about 10 for them.

\paragraph{The examinees.}

The examinees were shown the same 60 terms and asked for the next five,
\emph{cn, sg, pu, yb, ti}, with atomic numbers 112, 106, 94, 70 and 22. These
five begin the second copy of the last block of eight, so they can be found by
running the rule, or by noticing that every block appears twice.

\begin{table}[h]\centering\small
\caption{Attempts answered correctly on \texttt{0003}, out of three, and the
kind of answer given. Atomic numbers are in parentheses.}
\label{tab:example3-exam}
\begin{tabular}{@{}lcl@{}}\toprule
examinee & correct & answers \\ \midrule
sonnet*      & 3 of 3 & cn, sg, pu, yb, ti (112, 106, 94, 70, 22) \\
opus*        & 2 of 3 & the same, and one attempt with no answer \\
fable*       & 1 of 3 & the same, the doubled answer, and counting on \\
gpt5sol, g3.8F, g3.6F & 0 of 3 & mostly the doubled answer, lv, fl, ds, no, rn (116, 114, 110, 102, 86) \\
g3.5FL       & 0 of 3 & the doubled answer, and counting on \\
gpt5na, g3.1FL & 0 of 3 & counting on, pr, nd, pm, sm, eu (59, 60, 61, 62, 63) \\
haiku, olmo2, q1.5B & 0 of 3 & other answers \\
\bottomrule\end{tabular}\end{table}

The answers sort into three levels of understanding
(Table~\ref{tab:example3-exam}). The weakest reading continues the periodic
table from the last element shown, cerium, giving the elements 59 to 63. The middle one, given by gpt5sol on every attempt and by several of the
strongest Gemini and Claude models, doubles the last atomic number modulo 118
and keeps doubling, $58 \to 116 \to 114 \to 110 \to 102 \to 86$. These examinees found
the doubling and the modulus, which is most of the rule, but treated the
sequence as a single number and missed the second one. Only Claude Sonnet 5*
and Claude Opus 5* give the correct continuation on a majority of their
attempts.

One attempt is marked in the results as correct although it is not. The first
of Gemini 3.8 Flash's three attempts gives the doubled answer, identical to its
other two, but the judge scored it five out of five. Counting it as wrong, 6
of the 36 attempts are correct.

The ladder affects this item in a second way. Because the examinees see 60
terms and not the 27 that would probably have sufficed, the repetition of the
blocks is visible, and an examinee can answer the five terms asked from the
repetition alone without finding the rule. The reference machine was asked for
twelve terms, and the last four of them start a new block that only the rule
predicts. So the item as examined is likely to be somewhat easier than its
$\widehat{Kt}^{|}$ says. %

\paragraph{Under the other two reference machines.}

Both other machines also find the item unquestionable at $L_u = 60$, with
103.4 bits to spare under Qwen3-8B and 53.6 under Phi-4-reasoning. Neither
certifies it. Under both, the hints, the loose equivalent and the full
algorithm mostly produced the first eight of the twelve terms and then went
wrong, which is what an examinee gets from the repetition of the blocks
without the rule. This includes Phi-4-reasoning with the hint \emph{Double
atomic number modulo 118}, and both machines with the full algorithm. Their
budgets did not reach the 10464 tokens that Qwen3-14B needed. Only the listing
of the continuation succeeded, which is a fallback and not a certification,
so the item is in the battery of Qwen3-14B alone.

\subsection{Item 0084: \texorpdfstring{$\widehat{\Ktc}= 302.9$}{Kt= 302.9}}
\label{app:example4}

The last item, \texttt{0084}, is one of the two hardest certified items
that no examinee answers correctly on any attempt. %
This one is only certified under Qwen3-14B at
$\widehat{Kt}^{|} = 302.9$ bits. %

\paragraph{Generation and validation.}

The generator asked for the cell $(r_0, t_0) = (1, 7)$ with the theme
\emph{snooker ball point values} from the domain \emph{games}. The composer
returned the seed
\begin{quote}\ttfamily
For term n from 1, take the n-th snooker ball in point-value order (1: red, 2:
yellow, 3: green, 4: brown, 5: blue, 6: pink, 7: black, cycling every 7).
Multiply n by the ball's point value, square the result, take its digit sum,
add n to this sum, reverse the digits of the sum, divide by 2 if even, and
format as the ball name and final number.
\end{quote}
Each term has two parts. The ball cycles through the seven colours and is easy
to continue. The number needs six steps. For $n = 4$ the ball is brown, worth 4,
and $4 \times 4 = 16$, $16^2 = 256$, whose digit sum is 13, plus 4 gives 17,
reversed 71, which is odd, so the term is \emph{brown 71}. The first terms are
\begin{quote}\ttfamily
red 1, yellow 9, green 21, brown 71, blue 81, pink 21, black 41, red 81,
yellow 81, green 91, brown 3, blue 6, \ldots
\end{quote}
The item was generated with $N = 72$ terms.

The composer wrote 17 equivalents and eleven reproduced the sequence, all of
them close to pseudocode. A loose summary, \emph{Applies seven sequential
numerical and formatting transformations to each index using snooker ball
point values}, did not. It also wrote 15 rivals at the ladder $\Lambda = \{5,
13, 60\}$. At $L = 5$ five of the six keep the colours and change the
arithmetic, giving numbers such as $-139$ and 152 at the sixth term. At
$L = 13$ they change the cycle, for instance to thirteen colours. At $L = 60$
they treat the 60 terms shown as a period in one way or another, and one of
them simply starts again from \emph{red 1}.

\paragraph{Unquestionability under Qwen3-14B.}

The least expensive description of the whole sequence is an equivalent at
$\widehat{K}^{\V}_{\M}(x) = 302.7$ bits, so the bar for a rival is 323.3 bits.
The structural rivals come inside it at every length up to 27, where the
cheapest costs 313.9 bits. They are outside from 38 on, but no rivals were
composed at 38 or 49. At 60 the cheapest rival is a composed one, at 328.7
bits, only 5.4 bits above the bar, and the item is unquestionable at
$L_u = 60$.

\paragraph{Measured \texorpdfstring{$\hat r$}{r} and \texorpdfstring{$\hat t$}{t}.}

The typer found one retrieval, the snooker balls in point-value order, on every
reading. It counted five operations on four of its five readings and six on the
other, which gives $\hat t = 5.2$ with a spread of 0.4. The seed describes six
arithmetic steps, while the cell asked for seven.

\paragraph{Certification under Qwen3-14B.}

Qwen3-14B was shown the first 60 terms and asked for the next 12.
Table~\ref{tab:example4-annot} lists the ten candidates it ran, with the three
hints on one line.

\begin{table}[h]\centering\small
\caption{The candidates run for \texttt{0084} under Qwen3-14B, with their
conditional cost $c_\M(p \mid x_{1:60})$, the tokens used, and how many of the
12 asked terms came out correct and in order.}
\label{tab:example4-annot}
\begin{tabular}{@{}lrrrl@{}}\toprule
description & bits & tokens & terms & outcome \\ \midrule
(empty)                          & 0.0          & 3000  & 0  & failed \\
three hints                      & 79.4--92.2   & 3000  & 0  & failed \\
Applies seven sequential \ldots  & 146.3        & 3000  & 0  & failed \\
an equivalent in pseudocode      & 242.2        & 6000  & 8  & failed \\
three more equivalents           & 251.0--282.1 & up to 12000 & 0 & failed \\
the seed                         & 290.4        & 6000  & 12 & certified, at the second budget \\
\bottomrule\end{tabular}\end{table}

With no description, Qwen3-14B treated the number as a counter and continued
\emph{blue 426, pink 427, black 428}. The three hints failed within 3000
tokens. The loose summary failed too, and its reply contains \emph{blue 27,
pink 37, black 9, red 57}, which is the start of the wrong answer given by
three of the strongest examinees. Four equivalents in pseudocode failed, one
reaching eight of the twelve terms and one failing even with 12000 tokens. Only
the seed, the most expensive candidate run, certified.

The seed's run ended at the budget of 6000 tokens with the answer already in
its working, so $\tau$ is recorded as 6000. Counted by hand, the answer was
complete after about 5916 tokens, which would lower $\widehat{Kt}^{|}$ by 0.02
bits. %

\paragraph{The item's measures.}

\begin{table}[h]\centering\small
\caption{The measures for \texttt{0084} under Qwen3-14B, in bits.}
\label{tab:example4-metrics}
\begin{tabular}{@{}lr@{}}\toprule
$\widehat{K}^{\V}_{\M}(x)$, the whole sequence, unconditional & 302.7 \\
$c_\M(x_{1:60})$, the prefix itself                            & 455.7 \\
$\widehat{K}^{|}$, the seed given the prefix                   & 290.4 \\
$\log_2 \tau$, with $\tau = 6000$ tokens                       & 12.6 \\
$\widehat{Kt}^{|} = \widehat{K}^{|} + \log_2 \tau$             & 302.9 \\
$\widehat{Kc}^{|} = \widehat{K}^{|} + c_\M(x_{1:60})$          & 746.1 \\
\bottomrule\end{tabular}\end{table}

Table~\ref{tab:example4-metrics} gives the measures. The prefix saves only 12.3
bits, so after 60 terms the rule costs almost as much as it does with none.
The colours are obvious from the prefix, but the numbers carry almost no
information about how they were made. %
This rule is expensive to state, and $\widehat{Kt}^{|}$ places the
item among the hardest in the battery, which the results confirm.

\paragraph{The examinees.}

The examinees were shown the 60 terms and asked for \emph{blue 4, pink 49,
black 9, red 19, yellow 9}. None answered correctly on any of the 36 attempts.
Nearly all of them continued the colours correctly and got every number wrong,
and Table~\ref{tab:example4-exam} shows what they gave.

\begin{table}[h]\centering\small
\caption{Answers to \texttt{0084}. No attempt is correct.}
\label{tab:example4-exam}
\begin{tabular}{@{}ll@{}}\toprule
examinee & an answer \\ \midrule
gpt5sol, opus*, fable* & blue 27, pink 37, black 9, red 57, yellow 77 \\
sonnet*                & blue 45, pink 65, black 20, red 90, yellow 10 \\
g3.8F                  & blue 81, pink 21, black 41, red 81, yellow 81 \\
g3.1FL                 & blue 27, pink 47, black 67, red 87, yellow 7 \\
haiku                  & blue 47, pink 67, black 19, red 95, yellow 47 \\
olmo2                  & 81, 16, 31, 63, 95 \\
q1.5B                  & Red 101, Yellow 101, Green 101, Brown 101, Blue 101 \\
\bottomrule\end{tabular}\end{table}

gpt5sol, Claude Opus 5* and Claude Fable 5.1* each give the same wrong
continuation on at least one attempt, the one the loose summary produced under
Qwen3-14B. Its third term, \emph{black 9}, happens to be correct, but the first
is wrong, so it scores nothing under either marking. Gemini 3.8 Flash repeats
the numbers of terms 5 to 9, as if the sequence had period 56, and Gemini 3.1
Flash-Lite adds 20 each time, modulo 100.

\paragraph{Under the other two reference machines.}

Both other machines find the item unquestionable at $L_u = 60$, Qwen3-8B with
13.4 bits to spare and Phi-4-reasoning with 32.0. Neither certifies it. Their
hints and equivalents stopped at budgets of 1500 or 3000 tokens with no
correct term, well short of the 6000 that the seed needed under Qwen3-14B.
So the item is in the battery
of Qwen3-14B alone.

\section{The Full Pipeline}\label{app:pipeline}

The pipeline was summarised in \Cref{fig:pipeline}. To limit the clutter, two secondary processes are not included in the figure, but we need to include them here. The first one is a calibration of several parameters: bits per character $\gamma$, $c_0$, $\delta$ and $\beta$, for a rough estimation of $K$ used by the generator and unquestionability modules. More precisely, these are necessary for the prefix and continuation estimation, during generation, so we know at what lengths to generate the rivals, and these parameters are part of the unquestionability condition too. The second process is a measurement of the values of $(r,t)$ once the item has gone through unquestionability. It is at this point where we have more compact programs for the item that may change the components of the items, either by cancellation (e.g., two reversals being concatenated) or because one is lost during a chain of transformations. Occasionally, $(\hat{r},\hat{t})$ can end up being larger than originally, not only lower. Let us go through these stages in order. 

\subsection{The calibrator}

The calibrator measures 379 fixed probes of a diversity of sequences of different families, under each machine that is selected, and reports several
quantities, given in \Cref{tab:calibration}. First, $\gamma$, bits per character, which depends on the tokeniser as much as on the power of machine. The table
gives both the median and the 80th percentile, because the generator uses the
percentile.
Second, $\delta$ is the cost of the shortest exception clause the machine can state,
which is the shortest way of writing ``and then something else''. This is the
only calibration constant in %
\Cref{cor:compressibility-cond}.
Third, 
$c_0$ and $\beta$ come from one fit. The listing of a sequence is written out
at prefix lengths 5, 10, 15, 20, 25 and 30, each is costed by the machine, and
a line $\lambda(L) = \beta L + c_0$ is fitted to those six points. $c_0$ is
the part that does not grow with the prefix, the wrapper and the framing.
$\beta$ is what one further term costs.

\begin{table}[t]
\centering\small\setlength{\tabcolsep}{5pt}
\caption{The calibration constants, measured on 379 fixed probes for six LLMs. The last three were measured for robustness, but not used as reference machines. $\gamma$ is bits per character, reported as the
median and the 80th percentile over the probes. $\delta$ is the cost of the
shortest exception clause. $c_0$ and $\beta$ come from fitting $\lambda(L) = \beta L +
c_0$ to the cost of listing a sequence at prefix lengths 5 to 30. $\beta$ is
reported as the median over the sequence families whose fit reaches
$r^2 \geq 0.75$, with the range over those families in brackets and the number
of families that qualify out of five. The pooled row is what the generator read: the most conservative value of each
constant over the four machines calibrated at the time (Qwen3-8B, gemma-3-4b,
OLMo-2-7B and Llama-3.1-8B), that is, the largest $\gamma_{80}$ and $\delta$,
and the smallest $c_0$ and $\beta$. Its $\beta$ is the smallest value of any
qualifying family, Llama-3.1-8B's on element names.}
\label{tab:calibration}
\begin{tabular}{@{}lrrrrrc@{}}
\toprule
 & $\gamma$ & $\gamma_{80}$ & $\delta$ & $c_0$ & $\beta$ & fits \\
 & \multicolumn{2}{c}{bits/char} & bits & bits & bits/term & \\
\midrule
\multicolumn{7}{@{}l}{\textit{reference machines}}\\
Qwen3-14B        & 1.57 & 2.21 & 20.6 & 121.2 & \phantom{0}4.11 \;[1.1, 10.6]  & 4/5 \\
Qwen3-8B         & 1.65 & 2.19 & 22.7 & 129.6 & 11.87 \;[6.0, 17.7]            & 2/5 \\
Phi-4-reasoning  & 1.79 & 2.16 & 16.3 & 129.8 & \phantom{0}9.93 \;[3.6, 16.3]  & 2/5 \\
\addlinespace[3pt]
\midrule
\multicolumn{7}{@{}l}{\textit{other language models to make the estimates more robust}}\\
gemma-3-4b-it    & 2.52 & 3.31 & 22.3 & 159.6 & \phantom{0}6.08 \;[3.4, 22.3]  & 4/5 \\
OLMo-2-7B        & 1.99 & 2.42 & 18.7 & 119.2 & 10.07 \;[6.0, 23.5]            & 4/5 \\
Llama-3.1-8B     & 1.40 & 2.04 & 17.1 & 116.6 & \phantom{0}4.88 \;[1.0, 17.4]  & 5/5 \\
\midrule
pooled, as used  & 3.31 & --- & 22.7 & 116.6 & \phantom{0}0.98 & --- \\
\bottomrule
\end{tabular}
\end{table}

The fit is repeated for nine families of probe sequences, so there are nine
values of $\beta$ per machine and they differ by more than an order of
magnitude. For instance, under Qwen3-14B, counting numbers gives $\beta = 0.55$ but interleaved
two-retrieval sequences give $10.62$. A term that is the next integer costs almost
nothing to write down while term drawn from two unrelated retrievals costs a great
deal.
 
Only five of the nine families are used, those that have at least forty different terms, because that is what typical items look like. Counting and weekdays are costed and reported but excluded, since a $\beta$
taken from them would be an order of magnitude low. And a family is used only
if its fit reaches $r^2 \geq 0.75$. The last column of
\Cref{tab:calibration} gives how many of the five qualify. Qwen3-8B and
Phi-4-reasoning have only two each, so their $\beta$ is a median of two values
that differ by a factor of three or four. %
 
The $\beta$ reported in the table is therefore a median over the qualifying
families, with the range over those families in brackets. A third value,
$\beta_{\mathrm{high}}$, is the largest of them. It is used as a stopping
rule and not as an estimate.

With this, the generator is ready to read the last row of \Cref{tab:calibration}, because the generator is independent from any reference machine in particular. This is $\delta_{\mathrm{gen}}$= 22.7. On the contrary, the specific $\delta_\M$ for each reference machine (20.6, 22.7, 16.3) are used in the unquestionable stage.

\subsection{The Generator}
\label{app:generator}

The generator produces candidate items, iteratively, by growing and consulting a pool of themes of several domains, and progressively adjusting the generation to the ($r,t$) grid filling the gaps. It is meant to be called in successive turns to make grow the battery at any time. The generator does not use the reference
machine $\M$ but a composer $\C$, which can be any commodity LLM for creating candidates.

At the start, with an empty battery, the generator asks the
composer for themes, which are two to five words naming a subject with no
rule and no verb. For instance, ``chess openings'' and ``Beaufort scale'' qualify
but ``take the planets in order'' does not. Themes are kept in a pool that
persists across runs, and grows when the existing themes have been used many times already. Themes are clustered into domains. With the current battery, the pool holds 350 themes over 61 domains, the most
common being cuisine, sport, law, farming and games. No domain accounts
for more than a tenth of it. See \Cref{tab:topics}.

Each theme is stored with three labels supplied by a judge: whether its terms
are words or numbers, whether it is elementary, meaning a reader is expected
to know it outright, and whether it is generative, meaning its members can be
derived instead of recalled. The pool currently holds 159 word-termed themes
against 191 number-termed, and 161 generative against 189 not. A theme is
drawn for a slot, its use is counted, and it is then retired when it has been used more than a number of times. This can be modified through a parameter through successive calls of the generator.

For the themes, well-known texts are requested explicitly (e.g., a song, the beginning of a book or a speech). These are clear cases of retrieval, which depend on whether the passage has been memorised by the examinee. Only texts first
published before 1900 and originally in English are admitted, which keeps the
tokenised surface stable across models and avoids translations and copyright issues. All this is indicated in the 
\texttt{THEMES\_PROMPT} (see
\Cref{app:prompts}).  Separately, numeric themes are skipped with probability \texttt{THEME\_NUMERIC\_SKIP} to avoid a high proportion of numeric sequences. 

Items are requested by cell in the grid, which is a target number of retrieval steps
$r^{0}$ and transformation steps $t^{0}$, together with a theme.
As the values of $r$ and $t$ are measured at later stages of the process, the desired and actual values may diverge. Because of that, the request is calibrated with a linear fit over the items already
admitted, mapping what is asked for to what the later measures. %
The correction on $t$ is
capped at two steps. Either or both corrections can be disabled at any time (especially for the first items).

Through a specific prompt that gets $r$, $t$ and the $theme$ (see  \texttt{GEN\_PROMPT} in
\Cref{app:prompts})  the composer $\C$ returns a seed rule $p^{0}$ in natural language together
with the sequence $x$ it produces. The verifier $\V$, which in our experiments was the same model
in a separate context, with no sight of the first generation, then runs the
rule cold as a standalone prompt. The item is valid when a second cold run
reproduces the sequence the first produced. Only at this point we have some guarantees that we have a potentially useful item.

The item is screened for internal consistency, for overlap with items
already in the battery, and for not sharing a prefix with another item. For instance, two items whose first thirteen terms agree cannot both be unquestionable at that length.
An item that fails a screen is returned to the composer with the fault explained for {\em repair}, and the composer tries again up to five times, before the item is discarded. 
Some faults cannot be repaired and the prompt includes some of them as worked negative examples. %

The generator composes many candidates for every slot it wants to fill, but only a few may have reached this point, when they are added to the battery. Being strict here is useful, as the generator is less expensive than other parts of the process. At the moment, about two thirds of candidates end up being valid, and between half and three quarters of those become unquestionable, depending on the reference machine.

The length $N$ of a sequence is fixed before anything is run. From the
intended cell $(r,t)$ we take the expected length of its rule in characters,
and multiplying by $\gamma$ gives an estimate $\widehat{K}$ of its cost in
bits. \Cref{cor:compressibility-cond}, solved for the prefix, then gives the number of terms
such a rule is likely to need before it can be unquestionable,
$\hat L = (\widehat{K} + \delta - c_0)/\beta$. $N$ is this prefix plus
\texttt{K\_ADMIN} $= 12$ further terms, so that at least twelve terms are left
to ask for once the prefix is shown. $N$ is then kept between 28 and 72. The
lower limit leaves room for the first two rungs of the ladder, 5 and 13, with
twelve terms after them. The upper limit comes from a test of how long a
sequence the verifier could run out, and it is why the ladder stops at 60.
With the constants in \Cref{tab:calibration}, a rule of 40 characters gives $\hat L = 40$
and $N = 52$, and any rule longer than about 46 characters reaches $N = 72$.

It is only at this point that the item is added to the battery, and the generator starts preparing things for the next stages. First, the composer is asked for eight {\em equivalents}, which are other statements of the same rule, and for six {\em rivals} at each prefix length on the `ladder' (a series of the numbers of terms that a retried sequentially, instead of trying one by one, which would be very expensive). Formally a ladder $\Lambda$ is a sequence of increasing natural numbers.  Two lengths are used for
every item, 5 and 13, and two more are placed from the estimate in
\Cref{cor:compressibility-cond}. 

 For each item, a line is fitted, $\beta L + c_0$ unconditionally, in the frame with $K^{\V}_{\M}(x)$ and $\delta$. $\beta$ is what one
further term costs under $\M$ in this item's vocabulary. $c_0$ is the part
that does not grow with $L$, the wrapper and the framing, and at the lengths
in use it is usually most of the cost. For \texttt{0030}, $\beta = 3.95$
and $c_0 = 105.7$, so $\lambda(27) = 212.4$ against a limit of $174.1$, and
the rival is excluded by 38 bits. The term $\beta L$ alone would have left it
inside.

The condition holds of this rival when $K^{\V}_{\M}(x) + \delta < \beta L +
c_0$, so
\[
  L \;>\; \frac{K^{\V}_{\M}(x) + \delta - c_0}{\beta} ,
\]

This gives a lower limit on the prefix, not its exact length. Below the limit
the item is certainly questionable, because the rival that lists the prefix
is cheaper than the rule. Above it that rival is excluded, but the item still
has to be checked against the composed rivals, and one of them may block it
until a longer prefix. The estimate places the
last two points of $\Lambda$, one just above it and one further out by a
factor. An item whose estimate falls at 24 is given $\Lambda = \{5, 13, 26,
45\}$.
As said, the ladder always contains 5 and 13. Two more rungs are added from the
estimate $\hat L$: one just above it, at $\hat L + 2$, and one further out, at
$1.8\,\hat L + 2$ rounded, in case the estimate is too low, both capped at 60.

So for various $L \in \Lambda$, we generate rivals. A rival at $L$ is a description $q$ with
\[
  [\,\V(q)\,]^{2@3}_{1:L} = x_{1:L}
  \qquad\text{and}\qquad
  [\,\V(q)\,]^{2@3}_{L+1} \neq x_{L+1},
\]
A rival reproduces the prefix exactly and then deviates from it. Both equivalents and rivals are verified by $\V$ in the same way as the seed. 

   As $\V$ is stochastic, the same description given twice need not
   return the same sequence, while $\M$ (for any of the reference machines we are using in this paper) decodes greedily and is run
   once.
Accordingly, for verification we repeat the execution again. Namely, 
$[\,\V(p)\,]^{2@3}$
denotes running the seed, the equivalent or rival candidates three times and taking the value returned by at least two of
them, undefined if all three differ. %

The parameter $\beta_{\mathrm{high}}$ is used for one other purpose, which is to stop
spending on an item. Rivals account for about 70\% of what an item
costs to generate, so there is no point playing with them for a rule that could not
certify at any prefix it will be shown at. The largest $\beta$ is optimistic as the cost of abandoning a rival wrongly
is an item lost while the cost of not abandoning one is a few wasted calls.
 
All this is convoluted but responds to optimising the generator for efficiency. It is important, though, to clarify that this does not affect any of the definitional concepts, and without these heuristics the code would be much cleaner. With many more resources, we could explore all prefixes up to 400 terms without making any estimates. With the current approximation, We may lose some candidates, but the generator is much more efficient to run.

Rivals are very important for unquestionability, and we always include the `structural' rival, which is the program that states the prefix and then
deviates from it as an exception. This rival exists at every length by construction.

\subsection{The three filters and $(r,t)$ measurement}
\label{sec:filters}

An item gets its $\Ktc$ difficulty only if it passes three stages: validity, unquestionability and certification, in order. 

After generation, the rivals have been tested as several $L$, and the goal of the unquestionability step is to determine the minimum $\Lu$, the
length at which the item becomes unquestionable. Two counts of terms follow
$\Lu$ and should not be confused. $\ku = \min(20,\, N - \Lu)$ is how many
terms a description must produce for the third filter, whatever the item has
left. $\kf = 5$ is how many terms an examinee is asked for. 

While the generator has executed the seed on the composer, we now need to determine if it is long enough for $N$. This involves two independent executions of the seed:
\[
  [\,\V(\pseed)\,]^{2@3}_{1:N} \;=\; [\,\V(\pseed)\,]^{2@3}_{1:N} \;=\; x_{1:N},
\]
with the second run carried to 100 terms and the appropriate prefix compared over the first $N$. This is the {\em validity} step. No
reference machine takes part. %

The next stage is {\em unquestionability}. The item is unquestionable at $L$ under $\M$ when every rival at $L$ satisfies
\[
  \cost_\M(q) \;>\; K^{\V}_{\M}(x) + \delta ,
  \qquad
  K^{\V}_{\M}(x) = \min\{\cost_\M(p) : [\,\V(p)\,]^{2@3} = x_{1:N}\},
\]
with $\delta$ the clause constant of $\M$. Both costs are unconditional, for
the reason given in Appendix~\ref{app:whyunconditional}.

In this phase, equivalent programs (and the seed) are run on $\M$ to produce an estimate of $\widehat{K}(x)$. Then we find the lowest $L$ that is unquestionable, but $L$ does not range over every length. As said before, rivals are composed to diverge at a
ladder $\Lambda$ fixed before the item is judged. %
Then
\[
  \Lu \;=\; \min \{\, L \in \Lambda \;:\;
  \cost_\M(q) > K^{\V}_{\M}(x) + \delta \text{ for every rival at } L \,\},
\]
undefined if no $L \in \Lambda$ qualifies, in which case the item is
questionable and goes no further. A length outside $\Lambda$ is never tried,
because a rival must have been composed to diverge for those specific steps in the ladder.

A rival blocks an item when $\cost_\M(q) \le \widehat{K}(x) + \delta$, so
$\delta$ sets how close a rival has to come. Phi-4 has the narrowest of the
three and its condition is therefore the strictest, while Qwen3-8B has the
widest. %

The retrieval and transformation values that were used for generation $(r_0,t_0)$ may not match the description (program) that is used to characterise the item. To remedy this, we measure them again by choosing $\pK$, the description
that achieved $K^{\V}_{\M}(x)$. If no equivalent $\pK$ exists then the bare statement (a simplified version of the seed) and the seed are used. Then an LLM judge splits that description into
components of three types: retrieval, meaning anything that needs something to be remembered (the alphabet and the natural numbers are excluded); operations, meaning anything done to the terms that
needs nothing remembered; and control, which is the scaffolding. $\hat r$ is
the number of distinct retrieval components and $\hat t$ the number of
distinct operations. The call to the judge is made five times at temperature $0.4$ and the
median reported with the spread kept. %
Nothing downstream
depends on this new measurement $(\hat{r},\hat{s})$: they place an item in the $(r,t)$ grid for reporting
coverage and for choosing what to generate next, but no definition uses them.

The final stage is certification, conducted by a module we call the annotator, where the evaluation and verification of programs is conditional. 

The
item is certified at $\Lu$ under $\M$ when some description reproduces the
continuation from it:
\[
  \exists\, p \;:\;
  \,\M(p \mid x_{1:\Lu})\, \;=\; x_{\Lu+1:\Lu+\ku},
  \qquad \ku = \min(20,\, N - \Lu).
\]
This tries to approximate $\Kco  = \cost_\M(\pKc \mid x_{1:\Lu}) = \min_p \cost_\M(p \mid x_{1:\Lu})$. 

The empty description is also valid, when $\M$ continues the sequence
unaided. In this case, it certifies like any other and the cost is zero.

For items where nothing certified, the listing $p^\lambda$ supplies a value for $\Kcc$ as
a fallback. It states $\ku$ terms and is correct by construction, so it is not
required to run, and $\ku$ is the length at which it is costed. That value is
an upper bound, so the item is considered not certified, and not used in the
analysis.

The two main filters, unquestionability and certification rest on different quantifiers and one is conditional
and the other is not. Unquestionability is universally quantified: one rival is enough to discard the item. Certification is existentially quantified: as soon as one description reproduces the continuation the item is certified. But in both cases, the more resources are put in the search the better, to be safer about unquestionability and to have a good estimate of the metrics.

\section{The Prompts}
\label{app:prompts}

Code prompts are fixed text with a few slots filled at call time that are used to govern the behaviour of the LLMs. Several modules in the pipeline use prompts for the LLMs involved in the process: the composer $\C$ and verifier $\V$ in the generation and validation stages, $\M$ during the unquestionability and certification stages, and $\E$ and $\J$ during the evaluation. The unquestionability process only makes forward pass for costing programs using $\M$, but still uses simple prompts for that. Prompts are crucial in the behaviour of the pipeline. Because of that, prompts are
hashed with  the hash being written into the run record, so the results can be traced back to the specific wording that produced it. \Cref{tab:prompts}
lists the prompt with the opening words of each. The full prompts can be found in the code.

\begin{table}[t]
\centering\small\setlength{\tabcolsep}{4pt}
\newcolumntype{P}[1]{>{\raggedright\arraybackslash}p{#1}}
\caption{The prompts, where they are used, and an excerpt. Slots in braces are filled at
call time.}
\label{tab:prompts}
\begin{tabular}{@{}l l P{7.6cm}@{}}
\toprule
prompt & notebook & opens \\
\midrule
\texttt{THEMES\_PROMPT} & Generator
  & List \{k\} THEMES for sequence puzzles. A theme is two to five words
    naming subject matter \ldots \\
\texttt{GEN\_PROMPT} & Generator
  & THEME: \{theme\} \quad Invent one sequence puzzle around it. The theme
    says what the item is about \ldots \\
\texttt{RUN\_PROMPT} & Generator
  & Follow this rule and list the sequence it produces, \{k\} terms. \\
\texttt{EQUIV\_PROMPT} & Generator
  & Here is a rule and the sequence it generates. \ldots \\
\texttt{RIVAL\_PROMPT} & Generator
  & Here are the first \{L\} terms of a sequence: \{prefix\} \ldots \\
\texttt{REPAIR\_PROMPT} & Generator
  & Here is a rule you wrote for a sequence puzzle. It was thrown away, and
    the fault was \ldots \\
\texttt{DOMAIN\_PROMPT} & Generator
  & For each subject below, name three of its members, then say whether
    those members \ldots \\
\addlinespace[2pt]
\texttt{PRICE\_TPL} & Unquest.
  & Here is the beginning of a sequence: \{terms\} / A rule that generates
    it: \quad (costing frame) \\
\texttt{PRINT\_TPL} & Unquest.
  & Print: \quad (costing frame, for the structural rival) \\
\addlinespace[2pt]
\texttt{F\_rule\_slot} & Annotator
  & Sequence: \{x\_pref\} / Rule: \quad (costing frame; not the same wording
    as \texttt{PRICE\_TPL}) \\
\texttt{HINT\_PROMPT} & Annotator
  & Here are the first \{n\} terms of a sequence: \{prefix\} \ldots asks the
    composer for a hint, without showing it the seed \\
\texttt{VERIFY\_PROMPT} & Annotator
  & A model was asked to continue a sequence. The correct next terms are:
    \{x\_post\} \ldots Did it actually produce those terms, in order, as its
    answer or in its working? Returns a count, and decides certification \\
\texttt{TAU\_PROMPT} & Annotator
  & A model was asked to continue a sequence. The correct next terms are:
    \{x\_post\} \ldots Quote the shortest exact phrase from its text that
    ENDS at the point where it had produced all of those terms for the first
    time. Returns a location, and fixes where $\tau$ stops \\
\texttt{JUDGE\_PROMPT} & Annotator
  & A small language model was asked to continue a sequence and ran out of
    room before \ldots grants or refuses a further token budget \\
\addlinespace[2pt]
\texttt{EXAM\_PROMPT} & Evaluator
  & Continue the sequence with the next \{k\} \ldots (see \Cref{fig:examprompt}) \\
\bottomrule
\end{tabular}
\end{table}

The composer is asked for themes and mechanisms in two separate calls.
\texttt{THEMES\_PROMPT} asks only for a subject matter, two to five words, with
no rule and no verb, and requires about a quarter of each list to be
well-known texts published before 1900 in English. \texttt{GEN\_PROMPT} then
asks for one sequence built around one of those themes, at a target number of
retrieval and transformation steps and within a character budget. It states
one hard requirement: a single rule must fix every term with ties broken
by a stated convention (to help the composer, it includes worked examples of common faults). One such fault is a rule that counts the letters of a recalled word, which asks the machine for something it cannot see (though thinking can help with that). Another is a counter with a fixed word appended, which decorates a count without adding a mechanism. The prompt also lists the themes the battery already uses, and asks the composer to refuse a theme if it cannot work with it.

\texttt{RUN\_PROMPT} gives the verifier a rule and asks for the sequence it
produces. The same prompt is used for the seed, the equivalent and the rivals,
which is what makes the three comparable. \texttt{EQUIV\_PROMPT} asks for
other statements of the same rule, \texttt{RIVAL\_PROMPT} for rules that
reproduce a given prefix and then depart from it, and
\texttt{REPAIR\_PROMPT} returns an item to the composer with the fault named.

The unquestionability and annotation modules include a series of prompts for costing mostly.

The remaining prompts are for marking. \texttt{JUDGE\_PROMPT} in the
evaluator shows the prefix, the intended continuation and a reply, and asks
how many terms were produced in order before the first mistake. The
annotator has a version of the same prompt that also sees the previous
attempt, and answers whether more terms were derived this time, which is how
a further token budget is granted. \texttt{TAU\_PROMPT} reads a reply and
reports where the answer begins, separating the tokens spent reasoning from
the tokens spent answering.

Finally, the examinee sees \texttt{EXAM\_PROMPT} and nothing else. It is shown in
full in \Cref{fig:examprompt}.

\begin{figure}[t]
\centering
\fbox{\begin{minipage}{0.92\linewidth}
\footnotesize\ttfamily
Continue the sequence with the next \{k\} terms only.\\
Sequence: \{seq\}\\
Think if you need to, but end your reply with the answer alone on the final
line, the \{k\} terms separated by commas and nothing else.\\
Next \{k\} terms:
\end{minipage}}
\caption{\texttt{EXAM\_PROMPT} is the prompt shown to examinees.
\texttt{\{seq\}} is the prefix $x_{1:n}$, comma separated, and
\texttt{\{k\}} is $\kf$, which is 5 throughout. }
\label{fig:examprompt}
\end{figure}

The examinee is not told the rule, the domain, or how many terms the item
holds. The instruction about the final line allows a reply to be marked
without parsing the model's reasoning.

\section{Code and Main Files}
\label{app:notebooks}

The code is organised into seven python notebooks that follow the pipeline of \Cref{fig:pipeline} and \Cref{sec:pipeline}. They are independent files that exchange information through files, containing items, programs, results, etc. \Cref{tab:notebooks} gives the inputs and outputs of each notebook.

Three of the notebooks take one reference machine per run and write in the same file under a tag derived from its name, leaving the entries of other machines untouched. This means that a fourth machine can be
added later without repeating any earlier work. Unquestionability generates independent files, but as it is writing on {\tt \small ctest\_battery.json}, they are not parallelisable.

The files are held as a dataset, refreshed at the start of each run and pushed at the end (sometimes at checkpoints), to be able to run on environments that are erased between sessions (e.g., Colab or Kaggle code).

\begin{table}[t]
\centering\small\setlength{\tabcolsep}{4pt}
\newcolumntype{P}[1]{>{\raggedright\arraybackslash}p{#1}}
\caption{What each notebook reads and writes. \texttt{<tag>} is the reference
machine: \texttt{qwen3-14b}, \texttt{qwen3-8b} or \texttt{phi-4-reasoning}.
Every notebook appends one row to \texttt{ctest\_runs.csv}, which records the GPU/CPU times, wall clock times, the 
build, the settings and the prompt hashes.}
\label{tab:notebooks}
\begin{tabular}{@{}l P{3.4cm} P{3.2cm} P{4.3cm}@{}}
\toprule
notebook & what it does & reads & writes \\
\midrule

\textsc{Calibrator}
  & measures 379 fixed probes under one machine to obtain $\delta$, $\beta$
    and $c_0$
  & ---
  & \texttt{ctest\_calibration.json} \\
\addlinespace[3pt]

\textsc{Generator}
  & composes items, verifies them, screens and repairs them, and asks for
    equivalents and rivals
  & \texttt{battery}, \texttt{themes}, \texttt{calibration}
  & \texttt{ctest\_battery.json}, \texttt{ctest\_programs.json},
    \texttt{ctest\_themes.json}, \texttt{ctest\_prompts.json} \\
\addlinespace[3pt]

\textsc{Unquestionability}
  & costs every candidate under one machine and decides $\Lu$ and
    $\widehat{K}$
  & \texttt{battery}, \texttt{programs}, \texttt{calibration}
  & \texttt{ctest\_pricing\_}\allowbreak\texttt{<tag>.json}, and $n_M$, $K$
    and slack onto \texttt{ctest\_battery.json} \\
\addlinespace[3pt]

\textsc{RT-Measure}
  & reads $\pK$ and classifies its retrieval and transformation steps
  & \texttt{battery}, \texttt{pricing\_<tag>}
  & $\hat r$ and $\hat t$ onto \texttt{ctest\_battery.json} \\
\addlinespace[3pt]

\textsc{Annotator}
  & runs candidate descriptions under the machine and records which of them
    certify
  & \texttt{battery}, \texttt{programs}, \texttt{pricing\_<tag>}
  & \texttt{ctest\_metrics.json} \\
\addlinespace[3pt]

\textsc{Evaluator}
  & administers the certified items to the examinees
  & \texttt{battery}, \texttt{metrics}
  & \texttt{ctest\_results.json} \\
\addlinespace[3pt]

\textsc{Results}
  & produces every table and figure
  & all of the above
  & \texttt{figs/} \\

\bottomrule
\end{tabular}
\end{table}

\section{Parameters}
\label{app:params}

\Cref{tab:params} lists the parameters that are used to manoeuvre the execution of the code. The generator
values control how many items are attempted and how much is spent on each.
The annotator values control how much work a candidate description is allowed
before it is recorded as having failed. These parameters allow for an adaptive series of runs to populate the battery, certifying as many items as possible while keeping the use of the resources under control.

\begin{table}[t]
\centering\small\setlength{\tabcolsep}{5pt}
\newcolumntype{P}[1]{>{\raggedright\arraybackslash}p{#1}}
\caption{The main settings of the generator, the annotator and the evaluator.}
\label{tab:params}
\begin{tabular}{@{}l r P{10.2cm}@{}}
\toprule
setting & value & what it does \\
\midrule
\multicolumn{3}{@{}l}{\textsc{Generator}}\\
\addlinespace[2pt]
\texttt{N\_PER\_CELL}   & 1--3  & items wanted in each $(r,t)$ cell \\
\texttt{OVERSAMPLE}     & 12    & candidates composed per item wanted \\
\texttt{TERMS\_MIN/MAX} & 28/72 & length of a sequence, by the load it carries \\
\texttt{K\_ADMIN}       & 12    & terms held back, so an item can be
                                  administered at any prefix up to 60 \\
\texttt{EXTEND\_REPS}   & 3     & cold runs of a rule, at temperatures
                                  0.0, 0.3 and 0.6 \\
\texttt{EXTEND\_WINS}   & 2     & of those runs that must agree on every term
                                  for the item to be valid \\
\texttt{REPAIR\_ROUNDS} & 5     & times a faulty item is returned to the
                                  composer before it is dropped \\
\texttt{N\_EQUIVALENTS} & 8     & other statements of the rule requested \\
\texttt{N\_RIVALS}      & 6     & rivals requested at each prefix \\
\texttt{RIVAL\_PREFIXES}& 5, 13 & prefixes rivals are always requested at, plus two from the estimate,
$\hat L + 2$ and $1.8\,\hat L + 2$, capped at 60                                  
                                  \\
\texttt{THEME\_REUSE}   & 1     & items allowed to use one theme \\
\texttt{CALIBRATE\_CAP\_R} & 0  & correction allowed on the retrieval target;
                                  0 means the target is left as it is \\
\texttt{CALIBRATE\_CAP\_T} & 2  & the same for transformation \\
\addlinespace[4pt]
\multicolumn{3}{@{}l}{\textsc{Annotator}}\\
\addlinespace[2pt]
\texttt{K\_MAX}         & 20    & terms the reference machine is asked to
                                  produce \\
\texttt{budgets}        & 1500, 3000, 6000 & token budgets for an item with a
                                  short prefix; 3000, 6000, 12000 for a long
                                  one \\
\texttt{hint\_budgets}  & 1500, 3000 & the same for a hint, which is tried
                                  before the full descriptions \\
\texttt{TOP\_N}         & 3     & cheapest candidates run per item \\
\texttt{RESCUE}         & on    & a description that nearly succeeded is
                                  retried with more room \\
\texttt{BUDGET\_JUDGE}  & gemini-3.5-flash-lite &
                                  decides whether to grant a further budget \\
\texttt{SCORING\_JUDGE} & gemini-3.6-flash & counts the
                                  terms produced before the first mistake \\
\addlinespace[4pt]
\multicolumn{3}{@{}l}{\textsc{Evaluator}}\\
\addlinespace[2pt]
$\kf$                   & 5     & terms an examinee is asked for \\
\texttt{REPS}           & 3     & attempts per item per examinee \\
\texttt{examinee budget} & 2048, 8000, 16000 &  token budget for an examinee's reply: 16000 for GPT-5.6-sol, Claude Opus 5,
  Sonnet 5 and Fable 5.1, 8000 for the other API models, and 2048 for
  OLMo-2-7B and Qwen2.5-1.5B
\\
\bottomrule
\end{tabular}
\end{table}

\FloatBarrier  %
\section{Disagreements between the Measure and the Scores}
\label{app:disagree}

In \Cref{tab:cert-metricsQwen14b} we saw $\Ktc$ reaches $-0.79$ Spearman correlation against accuracy over the 51 items certified under
Qwen3-14B. The items where it disagrees with the scores are informative,
because the kinds of disagreement have different causes. Let us inspect them.

\subsection{Items with high \texorpdfstring{$\Ktc$}{Kt} and good scores}\label{sec:highKtgoodscores}.

$\Ktc$ is a minimum over the candidate descriptions that were actually run.
The annotator tries a subset of conditional hints and unconditional equivalents from a finite set, and is also limited by budget constraints,  
so many candidates that would have cost less are sometimes never executed until the end. The first cause of a high $\Ktc$ and good scores can therefore originate from a search that stopped, and the
value we report for $\Ktc$ is a loose upper bound. This case can be minimised with more search for candidates, and some exploration with higher budget managed to reduce the measured $Kt$ of  \texttt{0076}, \texttt{0134}, \texttt{0112} and \texttt{0004}, which seem overestimated when looking at \Cref{fig:sample}. Note that a possible shift to lower values would likely make the slopes of the characteristic curves steeper and the abilities slightly lower, which could even make the supra windows larger, not smaller.

A second cause is a description the reference machine cannot derive from memory and must be named, i.e., a failed retrieval. The cost is long under the reference machine, and its length measures the
table, instead of the lower difficulty of actually retrieving it by other examinees that can derive it from memory. Both $c_\M(p)$ and
$\cost_\M(p \mid x_{1:n})$ come out high while the item is easy. This happens at the top of our range, but less often than it may seem. Of
the 10 items above 190 bits, two, 0004 and 0112, have descriptions that spell
out tables and are solved on most attempts by the six strongest examinees.
The other eight are answered correctly on at most 19\% of attempts overall.
Six of them have five or more operations, and in the remaining two the
description is long but the examinees fail as well. Accuracy also falls
through every band of $\widehat{Kt}^{|}$, including the top one: the six
strongest examinees average 98\%, 67\%, 65\%, 37\% and 19\% in the bands
0--50, 50--100, 100--150, 150--190 and above 190 bits. More details in \Cref{app:harditems}

The third cause is rare. Two certified items have $\tau$ equal to their budget,    their runs reached the budget with the answer already in their working, so
   $\tau$ is slightly too high, by 0.02 bits in the one we checked by hand. %
A short description may need a lot of computation, but runs out of tokens in the reference machine, forcing the certification at a much more expensive, but efficient description (e.g., one that enumerates some of the terms or intermediate steps). In other words, we pay description bits for execution time. 

\subsection{Items with low \texorpdfstring{$\Ktc$}{Kt} and poor scores}\label{app:lowktpoorscores}

This direction matters more, because it suggests the item is easy for the reference machine but difficult for the rest of the world.

One possible cause is the tokeniser. One item may have an advantageous tokenisation for the reference machine that does not work for other models. 

The serious case is a rule that names a one-way function. The first byte of a
standard hash, or a stream cipher keyed on a word, can be stated in a line. Consequently, 
$\Kco$ is small, while the terms will not reveal any information about the rule. Also, a good cipher is
inexpensive to evaluate forward, which is a design requirement, and the
difficulty is in inversion, which $\tau$ does not measure.

What keeps such items out of this battery is that the reference machine is a
language model with no interpreter. It cannot evaluate a hash forward either,
so no description of that kind certifies.

One way of keeping that situation at bay could be to say explicit that no cryptographic or pseudorandom number generators should be used. But this would still allow for similar behaviours by unexpected (almost) one-way functions. 

Interestingly, we calculate the conditional complexity given the prefix, $\Ktc$, and this can be compared with the unconditional $K$ or $Kt$. 
For a pseudorandom
sequence the prefix is by construction indistinguishable from noise and the two costs coincide. A description that is
inexpensive and gains nothing from the prefix therefore has the profile of a
one-way item, and this could be a signal to inspect or exclude the item.

Four of the 51 items certified for Qwen3-14B have that profile, with a conditional cost
below 80 bits and the two costs within 20 bits of each other. Three of the
four are easy, at
58\%, 83\% and 86\% success.

This comparison has been performed with $c_\M(p)$ for a $p$ that verifies with $\V$ and $c_\M(p|x_{1:n})$ of a $p$ that verifies with $\M$. Future work should measure both things with the same verification machine, minimising both metrics separately and then calculate the difference. In general, this is a traditional objection to Kolmogorov complexity as difficulty, as approximations of $K$ or $Kt$ are overestimations, but this case is an underestimation. But removing those easy instances no model can solve would boil down to surrender to the populational approach we aimed to solve in this paper. It may simply be that some stronger models start cracking the item with low $Kt$ eventually, reconciling $Kt$ with the observed difficulty. Or there is a new concept that renders the apparently chaotic sequences into a nice pattern, as happened in science with many phenomena that looked chaotic, from the movement of the planets to hereditary traits. Indeed, it is not known for certain that one-way functions really exist: this is ``arguably the most important problem in computer theory" \citep{levin2003tale}.   

\section{The upper end of the Qwen3-14B scale}
\label{app:harditems}

Appendix \ref{sec:highKtgoodscores} notes that $\widehat{Kt}^{|}$ can be high because the reference
machine cannot recall a table and has to be given it in the description. For the Qwen3-14B scale, the reference machine that has the widest supra-window, there may be concerns that the items there are idiosyncratic. Let us explore these items in more detail.

At the top of the scale there is the first question of whether
$\widehat{Kt}^{|}$ there measures the difficulty of the item or only what
Qwen3-14B does not know, especially for retrieval. If it were mostly the latter, the items at the top would be easy for
the examinees that have more general knowledge, partly because they have more parameters. This is not what we observe from the data.

\begin{table}[h]\centering\small
\caption{The 10 items above 190 bits under Qwen3-14B, with the measured
$\hat r$ and $\hat t$, the kind of description that certified them, and the
share of attempts answered correctly by all twelve examinees and by the six
strongest (gpt5sol, opus*, fable*, sonnet*, g3.8F and g3.6F).}
\label{tab:upper-items}
\begin{tabular}{@{}lrrrlrr@{}}\toprule
item & $\widehat{Kt}^{|}$ & $\hat r$ & $\hat t$ & certified by & all twelve & six strongest \\ \midrule
0146 & 380 & 0 & 3   & equivalent &  3\% &  6\% \\
0108 & 303 & 2 & 2   & equivalent &  0\% &  0\% \\
0084 & 303 & 1 & 5.2 & seed       &  0\% &  0\% \\
0139 & 255 & 0 & 5.8 & equivalent &  0\% &  0\% \\
0112 & 242 & 3 & 2.2 & equivalent & 31\% & 61\% \\
0004 & 237 & 2 & 5   & equivalent & 39\% & 78\% \\
0083 & 213 & 0 & 5.2 & equivalent &  0\% &  0\% \\
0054 & 202 & 1 & 5   & equivalent & 11\% & 11\% \\
0003 & 201 & 1 & 5   & equivalent & 19\% & 39\% \\
0109 & 192 & 0 & 5   & equivalent &  0\% &  0\% \\
\bottomrule\end{tabular}\end{table}

Table~\ref{tab:upper-items} lists the 10 items above 190 bits. Only two of
them, \texttt{0004} and \texttt{0112}, fit the concern. Their descriptions spell out tables,
the pig materials and the days of the week in one and the chess pieces and
the rainbow colours in the other, and the six strongest examinees solve them
on most attempts. These are two of the items that \Cref{sec:highKtgoodscores} already
identifies as overestimated, and for which further search lowered the
measured value. The other eight are answered correctly on at most 19\% of
attempts overall. Six of them have five or more operations, so their cost
comes from transformation and not from retrieval. In the remaining two the
description is long but the examinees fail as well. In \texttt{0146} it spells out
four short lists of words, and the examinees answer correctly on 1 of 36
attempts.

\begin{table}[h]\centering\small
\caption{Average accuracy by band of $\widehat{Kt}^{|}$ under Qwen3-14B,
over the 51 certified items.}
\label{tab:upper-bands}
\begin{tabular}{@{}lrrrrr@{}}\toprule
band (bits) & 0--50 & 50--100 & 100--150 & 150--190 & above 190 \\ \midrule
items           & 19    & 6     & 9     & 7     & 10    \\
all twelve      & 82\%  & 50\%  & 45\%  & 23\%  & 10\%  \\
six strongest   & 98\%  & 67\%  & 65\%  & 37\%  & 19\%  \\
\bottomrule\end{tabular}\end{table}

Table~\ref{tab:upper-bands} shows the same from the other side. Accuracy falls
through every band, and the top band is the hardest, for the strongest
examinees as for all of them. If the top of the scale mostly reflected what
the reference machine does not know, the top band would not be the one the
examinees fail most.

Three further points bear on the question. First, $\widehat{Kt}^{|}$ is
relative to its reference machine by design, as any measure of Kolmogorov
complexity is relative to its universal machine. What the machine does not
know costs bits to state, and this is the crystallised component that the
retrieval dimension keeps on purpose. An examinee that knows a table saves
those bits, as a person who knows the planets finds a sequence of planets
easy. Second, the choice of reference machine barely changes how the
examinees are ranked. Qwen3-8B and Phi-4-reasoning, which differ from
Qwen3-14B in size and in lineage and training data respectively, order the
twelve examinees almost as Qwen3-14B does, at Spearman $+0.99$ and $+0.92$
(Appendix~\ref{app:stats}). This would be unlikely if the scale reflected the particular
gaps of one machine. Third, as an estimate of the reference machine's own true $Kt$,
$\widehat{Kt}^{|}$ is an upper bound, and it can only err upwards. A
description the search did not find makes the value higher than it should
be, never lower. A longer search lowers such values, as further exploration
with a larger budget did for \texttt{0004} and \texttt{0112}.

Several reference machines can also be combined by taking, for each item,
the smallest $\widehat{Kt}^{|}$ among the machines that certify it, which
removes the gaps of any one machine. On the 62 items certified by at least
one of the three machines, this minimum comes from Qwen3-14B for 32 items,
from Phi-4-reasoning for 24 and from Qwen3-8B for 6. Its Spearman
correlation with accuracy is $-0.79$, as strong as that of Qwen3-14B alone
on its 51 items, and the abilities it gives correlate with ECI at Pearson
$+0.81$ and Spearman $+0.76$. At the top it changes nothing, since each of
the 10 items above 190 bits is certified by Qwen3-14B alone, so its value
there rests on that machine. Testing the top in the same way needs a larger
battery with more reference machines. This combination could be done in other ways, giving more weight to the reference machines in their areas of highest discrimination, or based on diversity to minimise the subjectivity or bias or choosing one single machine. The infra- and supra-intelligence windows were introduced to illustrate that we may need a staircase of reference machines to cover wider windows. 

The most expensive certified items also have conditional costs close to
their unconditional ones (\Cref{tab:items-machines}). \texttt{0146} costs 367 bits given the prefix and
364 without it, \texttt{0084} costs 290 and 303, and \texttt{0139} costs 243 and 263 (all these numbers without $\tau$). The
heuristic in Appendix~\ref{app:lowktpoorscores} flags items whose description is inexpensive and
gains nothing from the prefix, because for them a low $\widehat{Kt}^{|}$ would
hide a hard item. Here the descriptions are expensive, so
$\widehat{Kt}^{|}$ is already high, and the examinees fail these items, as it
predicts. Across the 51 certified items, the less the prefix saves, the
harder the item. The share $\widehat{K}^{|}/\widehat{K}$ correlates with
accuracy at Spearman $-0.61$, and the 16 items for which the prefix saves
less than a tenth of the description average 22\% accuracy, against 86\% for
the 15 continued with no description at all. A prefix that reveals little
about the rule is what makes an inverse problem hard, and $\widehat{Kt}^{|}$
records it. Of the four items that do have the profile of Appendix~\ref{app:lowktpoorscores}, with
both costs below 80 bits and within 20 bits of each other, the only one the
examinees fail is \texttt{0133}, whose short prefix admits a simpler
reading, the nursery rhyme (\Cref{tab:items-machines}). The
conditional cost can also exceed the unconditional one by a few bits, as in
\texttt{0146}, because the two are computed with different prompts and checked by
different machines, as Appendix~\ref{app:lowktpoorscores} notes.

\section{Unconditional Cost in the Unquestionability Filter}
\label{app:whyunconditional}

The previous discussion about one-way functions and the idea of comparing the conditional and unconditional costs, as well as the problem of an item that was certified at 5 because the unconditional program worked, suggests some tension in our pipeline, combining unconditional complexity for unquestionability and conditional complexity for certification. Why do not we just use the conditional version for all?

The difficulty is conditional. $K_\M^{\,\M}(\nxt \mid \shown)$
uses $\cost_\M(p \mid \shown)$, with the sequence first and the program after. The conditioning is the examinee's situation, and allows for partial hints that are sufficient when the prefix discloses much of the pattern already. A sequence starting with the planets does not need to code the planets as a hint, as they are obvious.

Unquestionability, on the other hand, uses $\cost_\M(p^\star) + \delta$, with both terms unconditional, as in
\Cref{cor:compressibility-cond}. Conditioning would destroy the comparison, for the following
reason.

Among the programs that reproduce $\shown$ and then diverge there is always
the listing rival, which prints $x_1,\dots,x_n$ and then prints anything other
than $x_{n+1}$. Costed unconditionally it comes to about $\beta n + c_0$,
which grows with $n$. That growth is what makes the necessary condition
\Cref{def:unq}. a statement about prefix length, and what allows a longer
prefix to exclude rivals.

Costed conditionally on $\shown$, the listing rival is nearly free, because a
language model can copy tokens it can already see, with a `listing' program such as ``{\tt repeat the input forever}'', which is consistent with the input by definition. Then
$\cost_\M(\text{listing} \mid \shown) \approx 0$ for every $n$, the budget
$\cost_\M(p^\star) + \delta$ is never small enough to exclude it, and every
item is questionable at every prefix whatever its rule. The quantity stops
depending on the item.

The two frames answer different questions. $\cost_\M(\cdot \mid \shown)$ asks
how hard the item is for a machine that has seen it, which is the difficulty.
$\cost_\M(\cdot)$ asks how short a program is in absolute terms, which is what
a claim about all inexpensive programs has to quantify over. An
unquestionability claim concerns programs written by someone who was never
shown the sequence.

The implementation follows this. The unquestionability notebook costs the
rivals, the least expensive program and the listing rival with
$\cost_\M(p)$, unconditionally. Only $\Kco$  and $\Ktc$, which are explicitly conditional, use the sequence as context. %

\section{Computing Resources Used and Needed for Reproduction}\label{sec:resources}

We report the resources used by the project and an estimate of what a clean
reproduction would need. Both are approximate, and we give each as a central
value with a range.

The GPU time is taken from the weekly quotas of the notebook services we
used, since each notebook's own log records a run only when it ends normally
and so misses every run that was stopped, timed out or failed. Over the
period of the logged runs the quotas give 355 GPU `T4' hours on one service, and
the other adds between 30 and 67 hours, depending on our estimates as this first service did not record usage per project. %
Constrained by our goal of making the test sustainable and reproducible, all GPU work ran on a single NVIDIA T4. Laptops in use for about ten hours a day per month in total, for orchestration and writing. 
Part of the code and the analysis were developed assisted by AI assistants,
OpenAI 5.6Sol, Gemini 3.1Pro, Claude Opus 5 (mostly) and 5.5, over about 6{,}500 model calls. We
count its use from the saved transcripts: about 10 million tokens produced,
between 7 and 13 million once the reasoning that is not saved is included,
and about 1.9 billion tokens of context read back across the calls, most of
it from the providers' caches when available, asking the models or estimating from our own logs and recollection.  
Remote (API) model calls are
counted in tokens, from the log for the composer, verifier, typer and judges
and from the results file for the examinees. Because some runs are missing
from the log, including those for earlier versions of the battery, we take
the recorded tokens as a lower bound, and twice that as an upper bound.

\begin{table}[t]\centering\small
\caption{Resources used by the project. Tokens are in millions and include
reasoning tokens. Energy and emissions are estimated under the assumptions in
the text.}
\label{tab:resources-project}
\begin{tabular}{@{}lrr@{}}\toprule
 & central & range \\ \midrule
GPU hours                         & 400  & 385 to 422 \\
laptop hours                      & 300  & \\
tokens sent, M                    & 30   & 20 to 40 \\
tokens received, M                & 180  & 120 to 240 \\
assistant, tokens produced, M     & 10   & 7 to 13 \\
assistant, context read, G        & 1.9  & 1.0 to 1.9 \\
energy, kWh                       & 122  & 66 to 342 \\
primary energy, GJ                & 1.2  & 0.7 to 3.4 \\
emissions, kg CO$_2$e             & 49   & 26 to 138 \\
\bottomrule\end{tabular}\end{table}

We follow the AI Energy Score initiative~\citep{aienergyscore}, which
reports the energy of inference per thousand queries and converts energy into
environmental impacts with factors from the ADEME Base Empreinte database.
Energy differs on the source. We assume 100\,W while the GPU is attached, which is
the T4's 70\,W limit and a share of the host, multiplied by 1.1 for the data
centre's cooling and power supply. We assume 45\,W for laptops, with a
range of 30 to 60\,W. The remote calls cannot be measured from outside, and
the models we called are proprietary, so they are not among those AI Energy
Score has measured. For them we assume 0.3\,Wh per thousand tokens received,
including the provider's overheads, with a range of 0.1 to 1.0\,Wh, and a
tenth of that for tokens sent.  
Over the 30{,}107 remote calls that were
recorded, this comes to about 1.2\,kWh per thousand queries, a high figure
because most of our queries asked for long reasoning. 
For the assistant we use the same assumption for the tokens it produced, and
a hundredth of it for context read from the cache, which is reused and not
recomputed. This adds about 9\,kWh, between 2 and 33\,kWh. 
The remote calls
account for most of the uncertainty, and at the central value for about half
of the energy.

For impacts we use the worldwide factors from AI Energy Score too, which
are 9.99\,MJ of primary energy and $7.378\times10^{-8}$\,kg Sb eq of abiotic
resource depletion per kWh. 
The project's central estimate of 122\,kWh then corresponds to 1.2\,GJ of
primary energy and 9\,mg Sb eq. 
   For emissions we use 402\,g of CO$_2$ equivalent per kWh, the factor AI Energy
   Score gives for combined-cycle natural gas turbines.

Table~\ref{tab:resources-clean} estimates the resources for building and
evaluating a battery of 100 valid items without failures or restarts. The
rates come from our measurement from runs that completed. Generating 100 valid items takes about
140 candidates, since 71 per cent of the saved items were valid. Each
reference machine then costs every item, types it, and annotates the items it
finds unquestionable, about 70 of the 100. Annotation dominates the GPU time,
at 25 to 50 minutes an item, most of it on items that no description
certifies and so run through the whole ladder of token budgets. The evaluation
covers the items unquestionable under any machine, about 80, for twelve
examinees reached through an API at three attempts each and two local
examinees at one attempt.

\begin{table}[t]\centering\small\setlength{\tabcolsep}{4pt}
\caption{Estimated resources for a clean reproduction on 100 valid items.
GPU hours are central values with ranges. Tokens are in millions. Other hours are CPU hours mostly. The last
two rows give the totals for one reference machine and for the three used in
this paper.}
\label{tab:resources-clean}
\begin{tabular}{@{}lrrrr@{}}\toprule
stage & GPU h & other h & tokens sent & tokens received \\ \midrule
generation, once              &            & 11 & 4.1 & 28 \\
per reference machine         & 43 (31--61) &  1 & 0.8 & 1.9 \\
evaluation, once              &  2          & 20 & 1.4 & 13 \\
\midrule
one reference machine         & 45 (32--63)  & 32 & 6 & 43 \\
three reference machines      & 130 (94--184) & 34 & 8 & 47 \\
\bottomrule\end{tabular}\end{table}

With three reference machines, the estimate is about 130 GPU hours, 55
million tokens, between 15 and 68\,kWh, between 0.15 and 0.68\,GJ of
primary energy, and between 6 and 27\,kg of CO$_2$ equivalent. The project used about three times as much GPU time and nearly
four times as many tokens, the rest going to development, failures and
restarts.
Two reasons account for much of the difference and can be turned into lessons learnt. A reference machine
that reasons at length needs a rule that stops its reply once the answer has
appeared, or every reply runs to the token budget. And a local examinee
decoded greedily gives the same reply each time, so one attempt is enough.

A reproduction needs one GPU with 16\,GB of memory, which is enough for the
reference machines in 4-bit quantisation, somewhere to keep the state files
between sessions, and API access to Gemini 3.6 Flash and 3.5 Flash-Lite for the composer, verifier, typer
and judges and to the provider of each examinee. At the rate reported by the
generator, the Gemini calls cost about 140 US dollars. The cost of the
examinees depends on which are chosen. The twelve used here received about 6
million tokens between them. On a single GPU the work takes about five and a
half days without interruption, but generation and evaluation can be parallelised, and different reference machines can be costed separately. %
A reproduction does not need the AI assistant, since the code is released.

In summary, the numbers in this section support responsible and sustainable research, performing the experiments that are necessary for our research questions, and for research questions that matter. Also, the whole point of the paper was to show that we can evaluate frontier models with a small number of items, and their generation has a reasonable cost, and energy consumption. 

\section{The Full Battery}\label{sec:battery}

We generated 151 items. This had to be, by design, higher than the effective items in the battery. While all the 151 items are kept, only the 107 valid items matter: the description of the remaining 44 were never consistently replicated. Then, for each particular reference machine, only a subset became unquestionable and from them, the actual battery for each $\M$, which only contains those that are also certifiable on that machine. Many reasons other than unquestionability explain why items fall in the process, and the comparison of the full 151-item list with the lists of certified items shows some biases. Apart from the tokens required for reproducing the sequence during inference (thinking) time, the tokeniser may also explain why many sequences that were expressed with non-alphanumerical patterns, such as `---*---" struggled to make the cut. More powerful reference machines could compensate for that, or in a future, tokenisers may change or even disappear, leading to a more natural notion of term as a finite string of characters rather than tokens.

Let us show first a summary of domains and operations in \Cref{tab:topics}. We see that cuisine, farming, sport, law and folklore top the retrieval list. Then, for operations, different kinds of sum and accumulations, modulus, multiplication, etc., top the transformations list, while string transformation such as reverse appear in mid table. Note that these are the mechanisms that can be named. There are some other specific transformations that do not fall in any of the categories.

\begin{table}[t]\centering\small\setlength{\tabcolsep}{4pt}
\caption{A summary of the 107 valid items in the battery (out of 151 in total) by domain of the theme (as used by the retrieval) and operation (as used by transformation). The left
column counts items by the domain of the theme they were built from. The column sums up to 107. The right column counts items whose rule contains a
given kind of operation, as found by the $r,t$-measurement procedure (i.e., according to $\hat{t}$ and the shortest unconditional program). An item may contain several, so the
right column sums to more than 107.}
\label{tab:topics}
\begin{tabular}{@{}lr@{\hskip 2em}lr@{}}\toprule
domain of the theme & items & operation in the rule & items \\ \midrule
cuisine & 11 & sum or add & 63 \\
farming & 10 & modulus & 50 \\
sport & 9 & digit sum & 36 \\
law & 9 & multiply & 31 \\
folklore & 9 & square or power & 30 \\
navigation & 6 & reverse & 26 \\
rhymes & 5 & cycle or rotate & 26 \\
music & 5 & concatenate & 20 \\
trades & 4 & index a list & 19 \\
crafts & 4 & count letters & 14 \\
astronomy & 3 & interleave & 11 \\
games & 3 & sort or order & 10 \\
textiles & 2 & first or last letter & 6 \\
liturgy & 2 &  &  \\
math & 2 &  &  \\
optics & 2 &  &  \\
poetry & 2 &  &  \\
geology & 2 &  &  \\
anatomy & 1 &  &  \\
geometry & 1 &  &  \\
forestry & 1 &  &  \\
chronology & 1 &  &  \\
drama & 1 &  &  \\
classics & 1 &  &  \\
telecom & 1 &  &  \\
calendar & 1 &  &  \\
mathematics & 1 &  &  \\
verse & 1 &  &  \\
history & 1 &  &  \\
nature & 1 &  &  \\
speeches & 1 &  &  \\
mythology & 1 &  &  \\
novels & 1 &  &  \\
engineering & 1 &  &  \\
civics & 1 &  &  \\
\midrule
35 domains & 107 & & \\
\bottomrule
\end{tabular}\end{table}

The next, massive \Cref{tab:items-all} includes the 107 valid items,
the id, their original $r^0$ and $t^0$, theme, first terms, and seed. We see a diversity in structure, content and mechanisms. Of course, some are small variations, such as item 0100 being a simple enumeration, or the theme being mere decoration, such as 0126. But a sequence such as ``o-1, o-2, ...'' may represent a distractor for very weak models, as compared to the trivial ``1, 2, ...", so a number of items of this kind are beneficial.

\begingroup\footnotesize\setlength{\tabcolsep}{3pt}
\setlength{\LTcapwidth}{\textwidth}
\begin{longtable}{@{}l c P{0.155\textwidth} P{0.295\textwidth} P{0.385\textwidth}@{}}
\caption{The 107 valid items. $r^{0}$ and $t^{0}$ are the retrieval and
transformation steps the cell asked for, not the measured ones. A value of $-1$ means the cell set no target for that component. The theme
was a prompt to the composer and does not always describe the result. We show the first few terms as an examinee would see them, and the seed is the rule
as the composer wrote it, both shortened to fit.}\label{tab:items-all}\\
\toprule id & $r^0 t^0$ & theme & first terms & seed \\ \midrule \endfirsthead
\multicolumn{5}{@{}l}{\footnotesize\itshape Table \thetable, continued}\\
\toprule id & $r^0 t^0$ & theme & first terms & seed \\ \midrule \endhead
\midrule \multicolumn{5}{r@{}}{\footnotesize continued} \\ \endfoot
\bottomrule \endlastfoot
\texttt{0002} & 0\,1 & embroidery thread strand counts & 0 skeins 1 strands, 0 skeins 2 strands, 0 skeins 3 strands, 0 skeins 4 strands, 0 skeins 5 strands, 1 skeins... & term n from 1 is n divided by 6 as 'a skeins b strands' \\
\texttt{0003} & 1\,4 & bovine milk fat percentages & h pct fat, he pct fat, li pct fat, be pct fat, c pct fat, o pct fat & A ledger starts with fat percentages [1, 3]. Pop the lowest, emit its element symbol and ' pct fat'. Add double the percentage modulo 118 as a new debt, filter out... \\
\texttt{0004} & 2\,5 & three little pigs building materials & rtsy11, itsy24, irby38, rtsy51, itsy63, irby76 & Start total at 0. For step n, take the n-th pig material (straw, sticks, brick) and n-th day (monday to sunday). Add their combined lengths to total. Each term is t... \\
\texttt{0005} & 1\,2 & baseball field position numbers & pitcher, catcher, first base, second base, third base, shortstop & count from 1 skipping numbers with 0; replace each digit with its baseball field position \\
\texttt{0006} & 0\,1 & islamic daily prayer sequence & 2, 6, 10, 13, 17, 19 & running total of raka'at for the daily prayers: 2, 4, 4, 3, 4 repeating \\
\texttt{0007} & 0\,2 & airport runway magnetic headings & 02, 04, 06, 08, 10, 12 & from n=1, add n to its digit sum, take modulo 36 (0 as 36), formatted as 2 digits \\
\texttt{0009} & 2\,2 & finger names on hand & bmuht alf, xedni bra, elddim cha, gnir del, yknip ech, bmuht fox & Pair fingers thumb to pinky with NATO alphabet; reverse the finger name and truncate the NATO word to 3 letters. \\
\texttt{0010} & 1\,2 & decathlon event competition schedule & 100 metres, long jump, shot put, high jump, 400 metres, 110 metres hurdles & queue of the decathlon events in order. emit front event; if day 1 (first 5) requeue once, else twice. \\
\texttt{0011} & 1\,0 & crop rotation plot years & hydrogen, helium, lithium, beryllium, boron, carbon & the chemical elements in order of atomic number \\
\texttt{0012} & 1\,2 & american wire gauge size numbers & 4, 7, 9, 10, 11, 13 & american wire gauge sizes in order cycling from 4/0 to 40; each term is the running total of their digit sums \\
\texttt{0014} & 0\,4 & circle of fifths keynotes & g1, e1, c1, a1, f1, d1 & Queue of notes c,d,e,f,g,a,b and k=1. Split queue into ceil(len/2) and rest, swap halves, pop front note: emit with k, shift letter forward by 4 notes, append to ba... \\
\texttt{0016} & 0\,0 & espresso roast level darkness & agtron 99, agtron 98, agtron 97, agtron 96, agtron 95, agtron 94 & count from 1; each term is agtron followed by a space and 100 minus n \\
\texttt{0020} & 2\,1 & canonical hours of prayer & matins h, lauds eh, prime il, terce eb, sext b, nones c & canonical hours of prayer in order, each followed by a space and the reversed element symbol by atomic number \\
\texttt{0021} & 0\,5 & standard egg weight gradings & p, s, s p, m, m p, m s & Egg grades p,s,m,l,x,j. Start empty. Each step: 1. Push p to track A. 2. Merge adjacent equal wagons (p+p-\texttt{>}s, s+s-\texttt{>}m, m+m-\texttt{>}l, l+l-\texttt{>}x, x+x-\texttt{>}j). 3. Shunt heaviest to... \\
\texttt{0023} & 0\,0 & natural counting numbers & 1, 2, 3, 4, 5, 6 & count from 1 \\
\texttt{0024} & 1\,3 & stellar spectral classification\linebreak[1]\hspace{0pt}{} letters & b, g, a, k, f, o & from n=1; square n, reverse digits, take remainder mod 7, and take that 0-based item from stellar spectral types \\
\texttt{0025} & 0\,6 & us constitutional\linebreak[1]\hspace{0pt}{} amendment numbers & amendment 2, amendment 5, amendment 10, amendment 8, amendment 8, amendment 10 & count n from 1; square n, sum its digits, reverse the digits, take remainder modulo 27, add 1, prepend 'amendment ' \\
\texttt{0026} & 0\,4 & candy cooking temperature stages & 0-0-1, 0-0-0, 0-0-0, 0-1-0, 1-2-0, 4-2-0 & Vessels Thread (cap $\infty$), Ball (5), Crack (3) start at 0. At step n: 1) Add n to Thread; 2) Fill Ball from Thread; 3) Fill Crack from Ball, then empty Crack if full;... \\
\texttt{0027} & 1\,1 & circle of fifths keys & c 1, f 1, bb 1, eb 1, ab 1, db 1 & circle of fifths major keys from c using flats, repeated; each followed by a space and cycle number \\
\texttt{0029} & 0\,6 & circle of fifths key signatures & c, g, d, a, e, b & From n=0, fifths step 4n mod 7 gives letter from cdefgab, appended with floor(n/7) plus signs + \\
\texttt{0030} & 0\,3 & crochet hook letter designations & b hook, e hook, j hook, h hook, h hook, j hook & count n from 1; square n, sum its digits, and emit that letter of the alphabet starting at b with ' hook' \\
\texttt{0031} & 0\,3 & plimsoll line draft marks & 10m w, 12m tf, 16m w, 24m tf, 31m f, 36m tf & draft starts at 10; next is draft + digit sum + 1. Term is draft, 'm ', and mark from tf, f, t, s, w, wna by draft mod 6 \\
\texttt{0032} & 0\,1 & crochet stitch height order & 12345, 12360, 12372, 12387, 12408, 12423 & a(1) = 12345; a(n) = a(n-1) + the digit sum of a(n-1) \\
\texttt{0034} & 0\,5 & rainbow light spectrum colors & orange 1, orange 8, violet 27, red 28, red 35, indigo 54 & From n=1: square n, sum its digits, multiply by n; emit the spectrum color (0=red, 1=orange, 2=yellow, 3=green, 4=blue, 5=indigo, 6=violet) for product mod 7 and th... \\
\texttt{0041} & 0\,6 & polygon side counts & 3, 4-3, 5-3-4, 6-4-3, 7-3-4, 8-4-3 & Main track and siding start empty. At step n (n=1,2...), wagon with polygon side count n+2 arrives: 1. Odd wagon goes to siding, even to main track. 2. Sort siding... \\
\texttt{0042} & 1\,4 & tree ring annual counts & 1 mm, 4 mm, 8 mm, 10 mm, 12 mm, 15 mm & a tree trunk starts at 0 mm; each year add (letter count of the nth chemical element mod 4 + 1) mm; emit total with ' mm' \\
\texttt{0043} & 2\,4 & planets distance from sun & mercury delta, venus echo, earth foxtrot, mercury golf, venus hotel, earth india & mercury, venus, earth start at track cells 1, 2, 3. Each step, advance the lowest token by its planet rank to the next empty cell, emitting planet and cell's NATO w... \\
\texttt{0045} & 1\,0 & ten in the bed countdown & duke, marquess, marquess, earl, earl, earl & british peerage ranks in descending order with growing run lengths \\
\texttt{0046} & 1\,1 & leather thickness in ounces & hydrogen 1 oz, helium 2 oz, lithium 3 oz, beryllium 4 oz, boron 5 oz, carbon 6 oz & the chemical elements in order, each followed by a space and its atomic number in oz \\
\texttt{0047} & 0\,4 & standard poker hand ranks & 1 one pair, 2 straight, 3 royal flush, 4 four of a kind, 5 four of a kind, 6 royal flush & from n=1: square n, sum its digits, take remainder mod 10, add 1, and emit n with that poker hand rank \\
\texttt{0049} & 1\,3 & months of the year & yra1, yra2, hcr3, lir4, yam5, enu6 & from n=1; take the nth month name in order, reverse it, keep the first 3 letters, and append n \\
\texttt{0050} & 2\,0 & british peerage rank order & duke, red, marquess, orange, earl, yellow & alternate between british peerage ranks in descending order and colors of the rainbow in order \\
\texttt{0054} & 1\,5 & united kingdom peerage rank order & kx1, sxs2, l3, nt4, xrn5, x6 & For step n, cycle UK peerage ranks: reverse, replace vowels with x, slice first (n\%3+1) chars, rotate left 1, append n. \\
\texttt{0058} & 2\,3 & us presidential election years & bbdf, qodf, jodf, ssbf, csbf, bsbf & us presidential election years in order; substitute digits using 'republican' (0=r, 1=e, 2=p, 3=u, 4=b, 5=l, 6=i, 7=c, 8=a, 9=n), reverse the string, and shift each... \\
\texttt{0061} & 1\,6 & rebar diameter designation numbers & 1 00100, 2 01110, 3 11001, 4 00111, 5 11100, 6 10011 & Rule-30 cellular automaton on a 5-cell ring from 00100; each term is term number and binary state. \\
\texttt{0064} & 1\,6 & norse myth nine realms & 1:1\texttt{|}, 2:\texttt{|}21, 3:3\texttt{|}21, 4:\texttt{|}2143, 5:5\texttt{|}2143, 6:21\texttt{|}4365 & A,B="","". s=str(1+(n-1)\%\linebreak[1]\hspace{0pt}{}6). Even len(A+B): A+=s; odd: B+=s+A[:1]; A=A[1:]. While len(B)\texttt{>}4+n//10\linebreak[1]\hspace{0pt}{}: A+=B[0]; B=B[1:]. A=A[-5:]. Output n:A\texttt{|}B \\
\texttt{0066} & 2\,1 & first verse of jabberwocky & brillig7, aries5, slithy6, taurus6, toves5, gemini6 & alternate the nonsense words of Jabberwocky's first verse with the zodiac signs from aries; each term is the word with its letter count appended \\
\texttt{0067} & 2\,5 & the opening of hamlet & bernardo\texttt{>}franc\linebreak[1]\hspace{0pt}{}isco:6, horatio\texttt{>}marcel\linebreak[1]\hspace{0pt}{}lus:4, bernardo\texttt{>}ghost\linebreak[1]\hspace{0pt}{}:7, francisco\texttt{>}hora\linebreak[1]\hspace{0pt}{}tio:5, bernardo\texttt{>}marce\linebreak[1]\hspace{0pt}{}llus:8, franc... & 5 Hamlet Act 1 Scene 1 characters in appearance order start in a queue with initial scores 5 to 1 from 5 bridge suits in descending order. Each step, dequeue the fr... \\
\texttt{0072} & 1\,0 & united states court hierarchy & jay, rutledge, cushing, wilson, blair, iredell & the us supreme court justices by surname in order of appointment \\
\texttt{0073} & 1\,2 & beer bitterness ibu ratings & 6 ibu, 12 ibu, 18 ibu, 24 ibu, 30 ibu, 36 ibu & the chemical elements in order; each term is 5 times the atomic number plus its digit sum, followed by ' ibu' \\
\texttt{0074} & 2\,1 & soil horizon layer letters & o-h, a-he, e-li, b-be, c-b, r-c & cycling master soil horizon letters (o, a, e, b, c, r) joined by hyphen to chemical element symbol by atomic number \\
\texttt{0076} & 2\,5 & three billy goats gruff sizes & red march, orange march, yellow june, green april, blue june, indigo april & 9-cell ring ooooxoooo: step Rule 30 CA. Term k emits k-th rainbow color and lit-count-th month. \\
\texttt{0077} & 1\,3 & prologue to canterbury tales & or1, wor2, r3, wr4, wyor5, or6 & 4 colors (W=1\texttt{<}Y=2\texttt{<}O=3\texttt{<}R\linebreak[1]\hspace{0pt}{}=4) start left; boat moves 2 coolest right, 1 coolest left; reset when empty; left+index \\
\texttt{0082} & 0\,7 & sumo wrestling division ranks & **\#\#, *\#\#\#, **\#\#, *\#\#\#, *\#\#\#, \#\#\#\# & for step n (from 1): 1. select sumo division by (n-1)\%6 + 1; 2. append n; 3. reverse string; 4. replace vowels with 1-based alphabet positions; 5. sum all numeric d... \\
\texttt{0083} & 0\,7 & cardinal direction points & w -1 0, n -1 1, w -2 1, e -1 1, e 0 1, w -1 1 & Start at (0,0). For n=1,2...: 1. square n, 2. sum digits, 3. multiply by 3, 4. mod 4, 5. map 0:n, 1:e, 2:s, 3:w, 6. step 1 unit to (x,y), 7. emit '\texttt{<}dir\texttt{>} \texttt{<}x\texttt{>} \texttt{<}y\texttt{>}'. \\
\texttt{0084} & 1\,7 & snooker ball point values & red 1, yellow 9, green 21, brown 71, blue 81, pink 21 & For term n from 1, take the n-th snooker ball in point-value order (1: red, 2: yellow, 3: green, 4: brown, 5: blue, 6: pink, 7: black, cycling every 7). Multiply n... \\
\texttt{0086} & 1\,7 & thirty two compass points & wwww-2, tttttt-4, wwwwww-6, tttt-8, --10, tttt-12 & Cycle wine table. Add vowels to running sum (start 0). Repeat 1st letter sum\textasciicircum{}2 mod 10 times (- if 0), then -sum. \\
\texttt{0089} & 1\,7 & coffee roast level gradations & ...*..., ..*...., ..*...., ....*.., *......, .*..... & for term n from 1: lookup roast level at n mod 8 (1: green, 2: cinnamon, 3: new england, 4: american, 5: city, 6: full city, 7: vienna, 8: french); remove vowels; c... \\
\texttt{0090} & 0\,7 & primal beef cut locations & *chuck, obrisket, .shank, \#rib, *plate, \#loin & from n=1, select the n-th cut from [chuck, brisket, shank, rib, plate, loin, flank, round] cycling every 8; take its length; square it; add n; sum the digits; take... \\
\texttt{0091} & 1\,7 & marathon kilometer milestone markers & km1 \texttt{>}, km2 v, km3 \texttt{>}, km4 \textasciicircum{}, km5 \texttt{>}, km6 \texttt{>} & For kilometer index n from 1, take the n-th chemical element by atomic number; count the letters in its name; add n; square the result; sum the digits of the square... \\
\texttt{0095} & 0\,0 & olympic medal rank order & 1 gold, 2 silver, 3 bronze, 4 gold, 5 silver, 6 bronze & count n from 1; each term is n, a space, and gold, silver, bronze in repeating medal rank order \\
\texttt{0096} & 0\,0 & solomon grundy weekday events & born 1, christened 2, married 3, took ill 4, grew worse 5, died 6 & solomon grundy events in order repeating, each with day count n \\
\texttt{0097} & 0\,0 & plywood veneer quality grades & a/a/a, a/a/b, a/a/c, a/a/d, a/b/a, a/b/b & 3-ply veneer grades using letters a to d in alphabetical order, slash-separate\linebreak[1]\hspace{0pt}{}d \\
\texttt{0098} & 1\,0 & mohs hardness scale benchmark minerals & talc a, gypsum b, calcite c, fluorite d, apatite e, orthoclase f & the Mohs hardness scale benchmark minerals in order repeating, each paired with the alphabet in order \\
\texttt{0099} & 0\,-1 & the chemical elements by atomic number, as symbols, sta... & na, mg, al, si, p, s & the chemical element symbols in lowercase by atomic number, starting from sodium \\
\texttt{0100} & -1\,-1 & the years, counting up from some year in the last centu... & 1900, 1901, 1902, 1903, 1904, 1905 & the years in order, counting up from 1900 \\
\texttt{0101} & 0\,-1 & the letters of a very well known passage or title, one... & t, o, b, e, o, r & the letters of 'to be or not to be that is the question' in order \\
\texttt{0102} & 2\,9 & international maritime signal flags & [dn*8], [hc*4], [r*n9], [**s5], [l*k10], [l*g6] & For term n starting at 1: multiply n by 5; add 3; sum the digits; if the digit sum is even select from 26 ICS flag names (alfa to zulu, 0-indexed) at digit sum mod... \\
\texttt{0103} & 1\,7 & wheels on the bus actions & wednesday beep, wednesday beep, tuesday swish, friday move, sunday shh, tuesday swish & s starts at 1. For step n from 1: square s, add n, sum the digits, reverse those digits, take remainder mod 7 to get index i. Retrieve the day at index i from the d... \\
\texttt{0104} & 2\,8 & climbing route difficulty grades & f61, f63, f06, f08, h501, f09 & At step n (1-indexed), retrieve the n-th French grade (cycling 15 items: 5a, 5b, 5c, 6a, 6a+, 6b, 6b+, 6c, 6c+, 7a, 7a+, 7b, 7b+, 7c, 7c+) and n-th YDS grade (cycli... \\
\texttt{0105} & 0\,7 & geological epoch chronological order & hol, mio, ple, hol, ple, mio & From n=1: square n, add n, sum digits, multiply by 7, reverse digits, take modulo 7, map 0..6 to pal, eoc, oli, mio, pli, ple, hol. \\
\texttt{0106} & 3\,9 & olympic podium place medals & \#\#, \#.., \#.., ., ., \#. & At step n (1-indexed), cycle through Olympic medals (gold, silver, bronze), motto (citius, altius, fortius), and ring colors (blue, yellow, black, green, red) to ge... \\
\texttt{0107} & 3\,9 & hungry caterpillar fruit sequence & [\texttt{>}], [\texttt{<}], [\textasciicircum{}], [\textasciicircum{}], [\texttt{<}], [\texttt{<}] & For step n from 1, take nth season (spring..winte\linebreak[1]\hspace{0pt}{}r), month (january..dece\linebreak[1]\hspace{0pt}{}mber), zodiac (aries..pisces\linebreak[1]\hspace{0pt}{}) mod table lengths. Concatenate lowercase, drop vowels a,e,i,o... \\
\texttt{0108} & 2\,9 & usda soil texture classes & beer-14 ibu, beer-14 ibu, beer-16 ibu, beer-13 ibu, beer-12 ibu, beer-19 ibu & Step n: 0-based floor((n-1)/2)\linebreak[1]\hspace{0pt}{}\%12 in T1(Jan-Dec) if n odd else T2(Aries-Pisce\linebreak[1]\hspace{0pt}{}s); drop aeiou; count chars; *n; sum digits; \%50; +10; prepend 'beer-'; append ' ibu'. \\
\texttt{0109} & -1\,8 & seasons of the year & 0-1-0-0, 0-1-0-3, 0-1-0-54, 0-2-0-54, 0-2-0-105, 0-2-0-108 & four seasonal counters (spring, summer, autumn, winter) start at 0; for step n from 1: square n, add n, sum digits, multiply by 5, reverse digits, mod 4 picks seaso... \\
\texttt{0110} & 2\,2 & fibonacci sequence term values & 1 lph, 1 bt, 2 gmm, 3 dlt, 5 psln, 8 zt & for step n, join the n-th fibonacci number mod 10, a space, and the n-th greek letter with vowels dropped \\
\texttt{0111} & 3\,1 & this old man rhyming targets & thumb-do-mercu\linebreak[1]\hspace{0pt}{}ry, shoe-re-venus,\linebreak[1]\hspace{0pt}{} knee-mi-earth,\linebreak[1]\hspace{0pt}{} door-fa-mars, hive-sol-jupit\linebreak[1]\hspace{0pt}{}er, sticks-la-satu\linebreak[1]\hspace{0pt}{}rn & join with hyphens the cycling terms of: This Old Man targets (thumb to home), solfege (do to ti), and planets outward \\
\texttt{0112} & 3\,2 & chess move ply counts & 15 plies red rook, 36 plies orange knight, 54 plies yellow bishop, 180 plies green queen, 0 plies blue king,... & For term n from 1, take the nth back-rank chess piece, its point value, and the nth rainbow color. Multiply the value by the color length and n, emitting '\texttt{<}product\texttt{>}... \\
\texttt{0113} & 1\,6 & rainbow color spectrum order & 1, 22, 63, 44, 51, 63 & For term n (from 1), take the n-th color from the rainbow colors in spectrum order (red, orange, yellow, green, blue, indigo, violet, cycling every 7): 1. count its... \\
\texttt{0114} & 1\,3 & glass bead millimeter diameters & 3mm, 5mm, 7mm, 10mm, 13mm, 16mm & from n=1, take the nth standard round glass bead diameter in millimeters cycling through 2, 3, 4, 6, 8, 10, 12, 14, 16, 18, 20; add n; take remainder modulo 19; app... \\
\texttt{0115} & 1\,4 & electrical wire gauge numbers & 160000 ga, 360000 ga, 360000 ga, 160000 ga, 25 ga, 144 ga & for step n from 1, take the nth gauge from standard AWG wire gauges in slash notation (4/0, 3/0, 2/0, 1/0, 1, 2...); replace '/' with '0' to get an integer; multipl... \\
\texttt{0116} & 2\,7 & usda beef quality grades & pc-04, cr-25, sb-61, sr-96, cf-01, us-00 & S starts at 0. For step n=1,2...: 1. multiply n-th USDA quality grade position (1-8) by (3n-2)-th beef primal cut position (1-8); 2. add n; 3. sum digits; 4. add to... \\
\texttt{0117} & 1\,6 & nautical knot speed increments & o-o-o-o, none, o-o, o-o-o-o-o, o-o, o-o-o-o-o-o & For step n from 1, take the n-th zodiac sign (cycling 12). Count its consonants, add n, multiply by 3, floor divide by 2, take mod 7, and print that many 'o's joine... \\
\texttt{0118} & 1\,2 & baking flour extraction percentages & 46\%, 56\%, 66\%, 81\%, 111\%, 151\% & the standard French flour types in ascending order, repeating; each term adds the 1-based cycle count to the type number and appends '\%' \\
\texttt{0119} & 3\,4 & us felony class levels & \#, \#\#, \#\#\#, \#\#\#\#\#\#\#\#, \#, \#\#\# & Alternate in order through three tables (cycling each): US federal felony classes, US federal misdemeanor classes, and US federal prison security levels. At term n... \\
\texttt{0120} & 3\,4 & meat doneness internal temperatures & a s r*d 5, b w *r*ng* 1, c n y*ll*w 7, d e gr**n 5, e s bl** 4, a w *nd*g* 8 & Cycle A..E (ranks 5..1), N,E,S,W (opposites S,W,N,E), 7 rainbow colors. Stock=10. Step: stock=(stock+9\linebreak[1]\hspace{0pt}{}-rank)\%9. Emit class, opposite dir, color (vowels *), stock. \\
\texttt{0121} & 3\,4 & bovine permanent teeth emergence & 1\#\#, 1\#\#\#, 1\#\#\#\#, 2\#\#\#\#\#\#\#\#, 2\#\#\#\#\#\#\#\#\#, 2 & Concatenate tables T1, T2, T3 (len 30). For n=1..72: v=L[((ceil(n/3\linebreak[1]\hspace{0pt}{})-1) mod 30)+1]. Ignore non-digits in v. Append ((digit\_sum(v)\linebreak[1]\hspace{0pt}{}\textasciicircum{}2+n) mod 10) '\#' chars to v. \\
\texttt{0122} & 1\,2 & sheep shearing pattern steps & 1y, 1h, 1l, 1k, 1d, 1k & sheep shearing steps (belly, crutch, tail, neck, head, flank, side); term is pass number plus reversed step truncated to pass \\
\texttt{0123} & 3\,0 & michelin star tier levels & 1 star, sweet, monday, 2 stars, sour, tuesday & alternating between michelin star tiers, basic taste modalities, and days of the week in order \\
\texttt{0124} & 1\,-1 & the whole numbers, counting up in steps of the same siz... & delaware, pennsylvania, new jersey, georgia, connecticut, massachusetts & the us states in order of admission to the union \\
\texttt{0125} & 3\,0 & tractor power take off speeds & 540 rpm, type 1, mercury, 750 rpm, type 2, venus & alternate: pto speeds (540 rpm, 750 rpm, 1000 rpm), iso pto categories (type 1, type 2, type 3, type 4), planets outward \\
\texttt{0126} & -1\,0 & us military officer ranks & o-1, o-2, o-3, o-4, o-5, o-6 & the letter o followed by a dash and n from 1 \\
\texttt{0127} & 2\,0 & numbers from one to ten & one, triangle, two, quadrilateral,\linebreak[1]\hspace{0pt}{} three, pentagon & alternate between English numbers from one to ten and polygons with 3 to 10 sides in order \\
\texttt{0128} & 1\,-1 & the whole numbers, counting up from any start & genesis, exodus, leviticus, numbers, deuteronomy, joshua & the books of the protestant bible in order \\
\texttt{0129} & 3\,1 & opening stanzas of the raven & wea, mer, jan, lor, ven, feb & alternate in order through the line-ending words of stanza 1 of The Raven, the planets outward, and the months of the year, truncating each word to its first three... \\
\texttt{0130} & 1\,-1 & the letters of a very well known passage or title, one... & c, a, l, l, m, e & the letters of the opening of Moby Dick in order, one per term \\
\texttt{0131} & 3\,1 & archery target score rings & we, pn, as, bk, kt, ts & alternate through archery ring colors by score, chess pieces by value, and zodiac signs in order; emit first and last letters \\
\texttt{0132} & 3\,-1 & the chemical elements by atomic number, as names, start... & sodium, mercury, january, magnesium, venus, february & Alternate in order: element from Sodium, 8 planets Mercury-Neptun\linebreak[1]\hspace{0pt}{}e, month January-Decemb\linebreak[1]\hspace{0pt}{}er. \\
\texttt{0133} & -1\,-1 & the words of a very well known song or nursery rhyme, o... & 1 mary, 2 had, 3 a, 4 little, 5 lamb, 6 mary & step number and repeating words of 'mary had a little lamb' \\
\texttt{0134} & 3\,0 & tailor seam allowance widths & 1/8 in, mercury, pacific, 1/4 in, venus, atlantic & Cycle 3 lists: 7 seam allowances (1/8 in..1 in), 8 planets (Mercury..Nept\linebreak[1]\hspace{0pt}{}une), 5 oceans (Pacific..Arct\linebreak[1]\hspace{0pt}{}ic). \\
\texttt{0135} & 4\,2 & decathlon event order & 1, s, a, m, ll, aa & Cycle through decathlon events, summer Olympic host cities since 2000, Greek alphabet letters, and planets in order; each term takes the first character of the item... \\
\texttt{0136} & 3\,1 & snow white dwarf names & bas, red, mer, doc, ora, ven & Alternate in order through the seven dwarfs alphabetically\linebreak[1]\hspace{0pt}{}, rainbow colors in order, and solar system planets outward from the Sun; truncate each word to three lett... \\
\texttt{0137} & -1\,7 & espresso shot extraction times & 120s, 287s, 22s, 117s, 284s, 19s & v starts at 25; for term n from 1: add n to v, multiply by 7, sum digits, square, subtract n, take absolute value, set v to the result, emit v with suffix 's' \\
\texttt{0138} & 1\,7 & beekeeping frame super depths & [======], [======], [=======], [=======], [======], [=======] & For step n (1-indexed), take the n-th planet from the sun (cycling). Count its letters, square the count, add n, sum the digits, take mod 4, add 5, and render that... \\
\texttt{0139} & 0\,7 & goldilocks porridge temperatures & 69c: just right, 64c: just right, 64c: just right, 89c: too hot, 89c: too hot, 64c: just right & t starts at 70; at step n (from 1): 1) multiply t by n, 2) sum digits, 3) square, 4) take modulo 100, 5) add 20 to set new t, 6) classify t (\texttt{<}55: too cold, \texttt{>}85: too... \\
\texttt{0140} & 3\,2 & us state admission order & jhd, fhp, mln, abg, mbc, jcm & for term n from 1 to 50, take the nth US state in admission order, the nth chemical element by atomic number, and the nth month of the year (cycling); extract the i... \\
\texttt{0141} & 0\,7 & hokey pokey body parts & head, left foot, right hip, head, head, right hip & count n from 1; square n; add 7; sum digits; multiply by 5; reverse digits; take remainder mod 8 to index (0: right foot, 1: left foot, 2: right hand, 3: left hand,... \\
\texttt{0142} & 1\,7 & the four seasons order & gn*rps1, r*tn*w2, gn*rps3, mn*t**4, mn*t**5, gn*rps6 & At step n (1-indexed): square n, sum its digits, add 3, take mod 4 to pick the season (0=spring, 1=summer, 2=autumn, 3=winter), reverse the season name, replace vow... \\
\texttt{0143} & 4\,2 & the gettysburg address & uof, saw, led, naj, ocs, ada & Cycle in turn through four tables: gettysburg address words in order (no punctuation), us presidents by surname, us states by admission order (no spaces), and month... \\
\texttt{0144} & 3\,2 & jack and jill rhyme & jader, anegnaro, jiwolley, weneerg, upeulb, thogidni & Cycle 25 words of Jack and Jill stanza 1, 7 solfege notes, and 7 rainbow colors. Join rhyme word (truncated to note length) with reversed color. \\
\texttt{0145} & 2\,7 & twelve labors of hercules & \#\#\#\#\#\#\#\#\#, \#, \#, \#\#\#\#\#\#\#\#\#, \#\#\#\#\#\#\#\#\#, \#\#\#\#\#\#\#\#\#\#\#\#\#\#\linebreak[1]\hspace{0pt}{}\#\#\#\# & Step n: append lowercase 3-letter month ((n-1) mod 12 + 1) to lowercase element symbol n; reverse; pick 1-based even chars; sum letter values (a=1); square; sum dig... \\
\texttt{0146} & 4\,2 & opening of a christmas carol & namygrelc1, yelram2, egoorcs3, tnelis4, krelc5, tsap6 & Cycle 4 tables (clergyman,cle\linebreak[1]\hspace{0pt}{}rk,undertaker,\linebreak[1]\hspace{0pt}{}mourner; marley,past,pr\linebreak[1]\hspace{0pt}{}esent,future; scrooge,marley\linebreak[1]\hspace{0pt}{},fred,cratchit\linebreak[1]\hspace{0pt}{}; silent,jingle,\linebreak[1]\hspace{0pt}{}deck,joy) round-robin; reverse word, app... \\
\texttt{0147} & 2\,-1 & the letters of a very well known passage or title, one... & ..-., ---, ..-, .-., ..., -.-. & for each letter of the gettysburg address in order, its international morse code \\
\texttt{0148} & 1\,-1 & the whole numbers, counting up in steps of the same siz... & i, ii, iii, iv, v, vi & the Roman numerals in ascending order from 1 \\
\texttt{0149} & 0\,0 & united states pledge of allegiance & i, pledge, allegiance, to, the, flag & the words of the us pledge of allegiance in order, lowercase without punctuation \\
\texttt{0150} & 2\,0 & musical tempo marking names & grave, do, largo, re, adagio, mi & alternate between the 8 main tempo markings in order of speed and solfege syllables in scale order \\
\texttt{0151} & 0\,0 & the poem the raven & do1, re1, mi1, fa1, sol1, la1 & solfege syllables do to ti in order with an incrementing octave number \\
\end{longtable}\endgroup

Finally, \Cref{tab:items-machines} refers to the previous table through id, and for the three reference machines it shows V, U or C representing if the item has reached valid, unquestionable or certified status, and other columns with several complexity metrics,  the estimated $r$ and $t$, the split point for unquestionability ($L_U$), etc. We see that the shortest description found become cryptic sometimes, often expressed in pseudocode. This is on purpose, as this is the conditional version, and explains why ``{\tt British peerage and ROYGBIV colors}" works. It is not sufficiently unambiguous to generate the whole sequence from scratch but a sufficient hint for the continuation. This makes the difference between unconditional and conditional programs more clear.

We checked all the items externally as a postprocess, once all the results were processed, to avoid cherry-picking or introducing a manual step in the construction of the battery. This post-mortem check serves as validation, as we found very few errors. Namely, just a few items admit a simpler reading of their prefix. %
They also show up during evaluation, since the
examinees then agree on the same wrong continuation, and the ranking is not altered. Under Qwen3-14B there
are three of the 51 certified items (\texttt{0133}, \texttt{0110} and \texttt{0127}), under Qwen3-8B
three of 32, and under Phi-4-reasoning four of 38. In \texttt{0133} the examinees
continue the rhyme after ``Mary had a little lamb'', and in \texttt{0110} the
Fibonacci numbers are taken modulo 10 but the five terms shown are all below
10. All of them passed at a prefix of 5 or 13 terms. In addition, one item
certified only under Phi-4-reasoning, 0142, repeats an arithmetic slip of the
generator in 15 of its terms. We keep all of them in the battery. Removing
them raises every correlation between $\widehat{Kt}^{|}$ and accuracy, to
$-0.84$ under Qwen3-14B. Such items could be caught before evaluation by
taking the reference machine's own unaided continuation as a rival, and by
recomputing formal rules with a program.
It is important to highlight that these items lie between 40 and 146 bits of $Kt$ for Qwen3-14B, below the upper end of the scale, where the supra window is located. %
Also, the previous check could be automated without relying on the examinees, so increasing the robustness of the battery would only require some extra compute.

\begingroup\scriptsize\setlength{\tabcolsep}{2.2pt}
\setlength{\LTcapwidth}{\textwidth}
\makeatletter
\def\LT@makecaption#1#2#3{%
  \LT@mcol\LT@cols c{\hbox to\z@{\hss\parbox[t]\LTcapwidth{\normalsize
    \sbox\@tempboxa{#1{#2: }#3}%
    \ifdim\wd\@tempboxa>\hsize
      #1{#2: }#3%
    \else
      \hbox to\hsize{\hfil\box\@tempboxa\hfil}%
    \fi
    \endgraf\vskip\baselineskip}%
  \hss}}}
\makeatother
\begin{longtable}{@{}l ccc rrr rrr rr P{0.605\textwidth}@{}}
\caption{Each of the 107 valid items of the battery. The stage is the furthest each reached for each reference machine: V valid, U unquestionable, C certified. $K$ is $\widehat{K}^{\V}_{\M}(x)$, the cost of the least expensive description of the whole sequence; $\Lu$ the prefix at which no rival survives; $\Kco$ the cost of the description that certified, zero where the machine continued the sequence unaided. $\hat r$ and $\hat t$ are the measured steps under Qwen3-14B. The last column is the description that certified under Qwen3-14B, shortened.}\label{tab:items-machines}\\
\toprule
& \multicolumn{3}{c}{stage} & \multicolumn{3}{c}{$K$} & \multicolumn{3}{c}{$\Lu$} & & & \\
\cmidrule(lr){2-4}\cmidrule(lr){5-7}\cmidrule(lr){8-10}
id & 14B & 8B & Phi & 14B & 8B & Phi & 14B & 8B & Phi & $\hat r$ & $\hat t$ & $\Kco$, and the description \\
\midrule \endfirsthead
\multicolumn{13}{@{}l}{\scriptsize\itshape Table \thetable, continued}\\ \toprule
id & 14B & 8B & Phi & 14B & 8B & Phi & 14B & 8B & Phi & $\hat r$ & $\hat t$ & $\Kco$, and the shortest {\em conditional} description \\
\midrule \endhead
\midrule \multicolumn{13}{r@{}}{\scriptsize continued} \\ \endfoot
\bottomrule \endlastfoot
\texttt{0002} & C & C & C & 108 & 104 & 104 & 13 & 5 & 5 & 0.0 & 2.0 & 0\; (none needed) \\
\texttt{0003} & C & U & U & 272 & 271 & 234 & 60 & 60 & 60 & 1.0 & 5.0 & 187\; L = [1, 3]; loop: x = min(L); L.remove(x); L.append(2*x \% 118); output lowercase period... \\
\texttt{0004} & C & U & U & 234 & 252 & 261 & 60 & 60 & 60 & 2.0 & 5.0 & 225\; t=0; m=['straw','sticks','brick']; d=['monday','tuesday','wednesday','thursday','friday',... \\
\texttt{0005} & C & C & C & 114 & 122 & 109 & 13 & 13 & 42 & 1.0 & 1.0 & 0\; (none needed) \\
\texttt{0006} & C & C & C & 70 & 74 & 71 & 13 & 13 & 13 & 1.0 & 1.0 & 0\; (none needed) \\
\texttt{0007} & V & V & V & 116 & 116 & 107 & --- & --- & --- & 0.0 & 2.2 & --- \\
\texttt{0009} & C & U & U & 143 & 135 & 120 & 5 & 5 & 5 & 2.0 & 2.2 & 141\; nth term: reverse of (thumb, index, middle, ring, pinky)[(n-1)\%5] + ' ' + 3-letter prefix of ... \\
\texttt{0010} & V & U & U & 157 & 167 & 147 & --- & 35 & 35 & 1.0 & 2.2 & --- \\
\texttt{0011} & C & C & C & 28 & 28 & 33 & 5 & 5 & 5 & 1.0 & 0.0 & 0\; (none needed) \\
\texttt{0012} & V & U & U & 141 & 142 & 140 & --- & 27 & 27 & 1.0 & 3.0 & --- \\
\texttt{0014} & V & U & V & 113 & 126 & 129 & --- & 18 & --- & 0.0 & 2.2 & --- \\
\texttt{0016} & C & C & C & 78 & 87 & 84 & 5 & 5 & 5 & 1.0 & 1.0 & 0\; (none needed) \\
\texttt{0020} & U & U & U & 133 & 141 & 128 & 5 & 5 & 5 & 2.0 & 1.0 & --- \\
\texttt{0021} & V & V & V & 245 & 232 & 214 & --- & --- & --- & 0.0 & 3.2 & --- \\
\texttt{0023} & C & C & C & 22 & 17 & 20 & 5 & 5 & 5 & 0.0 & 0.0 & 0\; (none needed) \\
\texttt{0024} & V & U & C & 158 & 156 & 147 & --- & 27 & 27 & 1.0 & 4.0 & --- \\
\texttt{0025} & V & V & U & 161 & 169 & 146 & --- & --- & 23 & 0.0 & 4.4 & --- \\
\texttt{0026} & V & V & V & 166 & 167 & 164 & --- & --- & --- & 0.0 & 1.0 & --- \\
\texttt{0027} & V & V & V & 148 & 179 & 161 & --- & --- & --- & 1.0 & 2.2 & --- \\
\texttt{0029} & V & V & V & 131 & 138 & 135 & --- & --- & --- & 0.0 & 2.4 & --- \\
\texttt{0030} & C & C & U & 154 & 160 & 146 & 27 & 27 & 27 & 0.0 & 4.0 & 115\; n ? (1 + sum of digits of n2)th letter of the alphabet + ' hook' (n = 1, 2, ...) \\
\texttt{0031} & C & C & U & 230 & 225 & 217 & 36 & 36 & 36 & 0.0 & 3.0 & 169\; d1=10; dn+1 = dn + sum\_digits(dn) + 1; termn = dn + "m " + (tf, f, t, s, w, wna)[dn mod 6] \\
\texttt{0032} & C & C & C & 41 & 56 & 50 & 5 & 5 & 5 & 0.0 & 2.0 & 0\; (none needed) \\
\texttt{0034} & C & C & C & 193 & 204 & 182 & 13 & 13 & 13 & 1.0 & 4.4 & 128\; n=1; repeat: p = n * sum(digits of n\textasciicircum{}2); output ROYGBIV[p \% 7] + " " + p; n += 1 \\
\texttt{0041} & V & V & V & 134 & 126 & 127 & --- & --- & --- & 0.0 & 1.8 & --- \\
\texttt{0042} & U & V & U & 126 & 145 & 129 & 23 & --- & 23 & 1.0 & 3.8 & --- \\
\texttt{0043} & C & C & C & 126 & 135 & 131 & 13 & 5 & 35 & 1.0 & 3.0 & 0\; (none needed) \\
\texttt{0045} & U & U & C & 98 & 103 & 94 & 13 & 13 & 5 & 1.0 & 1.0 & --- \\
\texttt{0046} & C & C & C & 99 & 102 & 93 & 13 & 5 & 13 & 1.8 & 1.2 & 0\; (none needed) \\
\texttt{0047} & V & V & V & 176 & 174 & 171 & --- & --- & --- & 1.0 & 4.0 & --- \\
\texttt{0049} & C & C & C & 128 & 115 & 111 & 5 & 5 & 5 & 1.0 & 3.0 & 39\; Month names reversed \\
\texttt{0050} & C & C & C & 87 & 92 & 90 & 5 & 5 & 5 & 2.0 & 0.0 & 54\; British peerage and ROYGBIV colors \\
\texttt{0054} & C & U & U & 203 & 201 & 180 & 13 & 13 & 13 & 1.0 & 5.0 & 189\; term(n) = rotate\_left(1, vowel\_x(reverse(UK\_peerage\_rank(n)))[:n\%3+1]) + n \\
\texttt{0058} & C & U & U & 180 & 169 & 162 & 13 & 13 & 13 & 1.0 & 2.0 & 168\; US presidential election years in order \texttt{|} reverse digits \texttt{|} 0123456789 -\texttt{>} sfqvcmjdbo \\
\texttt{0061} & V & V & V & 151 & 138 & 140 & --- & --- & --- & 0.0 & 1.8 & --- \\
\texttt{0064} & V & V & V & 377 & 406 & 385 & --- & --- & --- & 0.0 & 6.2 & --- \\
\texttt{0066} & U & U & C & 163 & 173 & 163 & 5 & 13 & 13 & 2.0 & 1.8 & --- \\
\texttt{0067} & V & V & V & 277 & 290 & 316 & --- & --- & --- & 0.0 & 3.4 & --- \\
\texttt{0072} & U & U & U & 62 & 68 & 64 & 5 & 5 & 5 & 1.0 & 0.0 & --- \\
\texttt{0073} & V & V & C & 127 & 122 & 110 & --- & --- & 13 & 0.0 & 3.6 & --- \\
\texttt{0074} & C & C & C & 155 & 157 & 145 & 13 & 13 & 13 & 1.0 & 2.8 & 47\; soil horizon layer letters \\
\texttt{0076} & C & U & U & 203 & 210 & 209 & 50 & 50 & 50 & 2.0 & 2.2 & 177\; k -\texttt{>} (k-th rainbow color) + " " + (lowercase month number M[(k-1) mod 8]), where M = ... \\
\texttt{0077} & V & V & C & 147 & 145 & 138 & --- & --- & 13 & 0.0 & 2.0 & --- \\
\texttt{0082} & V & V & V & 284 & 298 & 287 & --- & --- & --- & 1.0 & 6.0 & --- \\
\texttt{0083} & C & U & U & 216 & 208 & 214 & 60 & 60 & 60 & 0.0 & 5.2 & 201\; x, y = 0, 0 for n = 1, 2, ... d = "nesw"[3 * sum\_digits(n\textasciicircum{}2) \% 4] x += (d=='e') - (d=='w') ... \\
\texttt{0084} & C & U & U & 303 & 311 & 288 & 60 & 60 & 60 & 1.0 & 5.2 & 290\; For term n from 1, take the n-th snooker ball in point-value order (1: red, 2: yellow, 3: green... \\
\texttt{0086} & V & U & C & 178 & 184 & 169 & --- & 13 & 13 & 0.0 & 3.4 & --- \\
\texttt{0089} & U & U & U & 163 & 168 & 180 & 38 & 38 & 38 & 0.0 & 3.8 & --- \\
\texttt{0090} & U & U & U & 297 & 289 & 310 & 60 & 60 & 60 & 0.0 & 7.0 & --- \\
\texttt{0091} & U & U & C & 201 & 213 & 211 & 50 & 50 & 50 & 1.0 & 5.2 & --- \\
\texttt{0095} & V & V & V & 104 & 98 & 120 & --- & --- & --- & 0.0 & 1.8 & --- \\
\texttt{0096} & C & C & C & 125 & 127 & 132 & 13 & 13 & 13 & 1.0 & 1.8 & 0\; (none needed) \\
\texttt{0097} & C & C & C & 75 & 88 & 88 & 5 & 13 & 5 & 0.0 & 0.8 & 0\; (none needed) \\
\texttt{0098} & C & U & C & 105 & 114 & 106 & 13 & 13 & 5 & 1.0 & 1.0 & 91\; nth term: (nth Mohs scale benchmark mineral) + " " + (nth letter of the alphabet) \\
\texttt{0099} & C & C & C & 84 & 78 & 62 & 5 & 5 & 5 & 1.0 & 1.0 & 0\; (none needed) \\
\texttt{0100} & C & C & C & 30 & 37 & 30 & 5 & 5 & 5 & 0.0 & 0.0 & 0\; (none needed) \\
\texttt{0101} & C & C & C & 44 & 50 & 46 & 5 & 13 & 5 & 0.0 & 1.0 & 37\; Hamlet soliloquy \\
\texttt{0102} & U & U & U & 192 & 191 & 169 & 13 & 13 & 5 & 0.0 & 4.2 & --- \\
\texttt{0103} & U & U & U & 215 & 228 & 194 & 13 & 13 & 13 & 0.0 & 6.6 & --- \\
\texttt{0104} & U & U & U & 335 & 354 & 324 & 60 & 60 & 60 & 1.0 & 5.8 & --- \\
\texttt{0105} & V & V & V & 215 & 211 & 217 & --- & --- & --- & 0.0 & 3.0 & --- \\
\texttt{0106} & V & V & V & 201 & 198 & 194 & --- & --- & --- & 0.0 & 3.2 & --- \\
\texttt{0107} & V & V & V & 311 & 322 & 302 & --- & --- & --- & 3.0 & 7.6 & --- \\
\texttt{0108} & C & U & U & 320 & 304 & 298 & 59 & 59 & 59 & 2.0 & 2.0 & 290\; 1. Set k = floor((n-1)/2) mod 12. 2. Pick k-th month of the year if n is odd, else k-th zod... \\
\texttt{0109} & C & U & U & 252 & 245 & 235 & 40 & 40 & 40 & 0.0 & 5.0 & 180\; c = [0, 0, 0, 0]; for n = 1, 2, ...: r = rev(5 * digit\_sum(n\textasciicircum{}2 + n)); c[r mod 4] += r; output c[0]-... \\
\texttt{0110} & C & U & U & 117 & 119 & 104 & 5 & 5 & 5 & 1.2 & 2.8 & 118\; f(n) = (F\_n mod 10) + ' ' + (n-th Greek letter minus vowels) \\
\texttt{0111} & C & C & C & 232 & 231 & 230 & 13 & 13 & 13 & 3.0 & 2.0 & 92\; Cycles: 10 number rhymes, 7 solfege, 8 planets \\
\texttt{0112} & C & U & U & 236 & 231 & 226 & 13 & 13 & 13 & 3.0 & 2.2 & 230\; Each term n (starting at 1) is n $\times$ len(c) $\times$ v "plies" c p, where c is the nth rainbow color, p is... \\
\texttt{0113} & C & U & U & 144 & 138 & 143 & 13 & 13 & 13 & 1.0 & 6.0 & 127\; n -\texttt{>} reverse of (n * digit\_sum((length of n-th rainbow color)\textasciicircum{}2 + n)) \\
\texttt{0114} & V & V & V & 196 & 188 & 195 & --- & --- & --- & 1.0 & 3.6 & --- \\
\texttt{0115} & C & C & C & 144 & 160 & 145 & 13 & 13 & 13 & 0.0 & 3.4 & 119\; ((n \texttt{<} 5 ? 100*(5-n) : n-4) * n)\textasciicircum{}2 + " ga" for n = 1, 2, ... \\
\texttt{0116} & V & V & V & 402 & 412 & 406 & --- & --- & --- & 2.0 & 8.4 & --- \\
\texttt{0117} & V & V & V & 256 & 264 & 250 & --- & --- & --- & 1.0 & 3.4 & --- \\
\texttt{0118} & V & V & V & 168 & 170 & 154 & --- & --- & --- & 1.0 & 3.4 & --- \\
\texttt{0119} & V & V & V & 317 & 334 & 276 & --- & --- & --- & 3.0 & 3.0 & --- \\
\texttt{0120} & U & U & U & 288 & 284 & 271 & 13 & 13 & 13 & 1.0 & 4.0 & --- \\
\texttt{0121} & V & C & V & 171 & 166 & 170 & --- & 22 & --- & 0.0 & 3.2 & --- \\
\texttt{0122} & V & V & V & 233 & 249 & 225 & --- & --- & --- & 0.0 & 4.0 & --- \\
\texttt{0123} & C & C & U & 132 & 128 & 140 & 13 & 13 & 47 & 3.0 & 1.0 & 32\; 5 basic tastes \\
\texttt{0124} & C & C & C & 34 & 47 & 51 & 5 & 5 & 5 & 1.0 & 0.0 & 36\; US states in order of admission to the Union, in lowercase \\
\texttt{0125} & V & V & V & 173 & 190 & 179 & --- & --- & --- & 1.0 & 1.0 & --- \\
\texttt{0126} & C & C & C & 42 & 45 & 43 & 5 & 5 & 5 & 0.0 & 1.0 & 55\; US officer pay grades \\
\texttt{0127} & C & U & C & 90 & 94 & 100 & 13 & 13 & 39 & 1.0 & 0.0 & 84\; Alternate endlessly: English numbers 1 to 10 and polygon names for 3 to 10 sides. \\
\texttt{0128} & C & C & C & 45 & 42 & 52 & 5 & 5 & 5 & 1.0 & 0.0 & 0\; (none needed) \\
\texttt{0129} & C & U & U & 186 & 194 & 186 & 13 & 13 & 13 & 3.0 & 1.0 & 147\; Interleave first 3 letters of: line-ending words of stanza 1 of The Raven, planets outward from... \\
\texttt{0130} & U & C & C & 51 & 46 & 47 & 5 & 5 & 5 & 1.0 & 1.0 & --- \\
\texttt{0131} & U & U & U & 148 & 141 & 132 & 13 & 13 & 13 & 3.0 & 1.0 & --- \\
\texttt{0132} & C & U & C & 134 & 126 & 133 & 13 & 13 & 13 & 3.0 & 1.0 & 89\; Cycle through lowercase lists: chemical elements from sodium, planets from mercury, months... \\
\texttt{0133} & C & C & C & 81 & 83 & 80 & 5 & 5 & 5 & 1.0 & 1.0 & 77\; monotone counter paired with cyclic word list \\
\texttt{0134} & C & U & U & 152 & 145 & 146 & 13 & 13 & 13 & 2.0 & 1.0 & 149\; Round-robin emission from 3 lists: [1/8 in, 1/4 in, 3/8 in, 1/2 in, 5/8 in, 3/4 in, 1 in], 8 plan... \\
\texttt{0135} & V & V & V & 276 & 281 & 247 & --- & --- & --- & 4.0 & 2.0 & --- \\
\texttt{0136} & C & U & U & 152 & 163 & 157 & 13 & 13 & 13 & 3.0 & 1.2 & 135\; Cycle in order through 7 dwarfs alphabetically, ROYGBIV rainbow colors, and planets from... \\
\texttt{0137} & V & V & V & 217 & 238 & 243 & --- & --- & --- & 0.0 & 7.0 & --- \\
\texttt{0138} & V & V & U & 211 & 216 & 198 & --- & --- & 43 & 1.0 & 5.4 & --- \\
\texttt{0139} & C & U & U & 263 & 265 & 257 & 50 & 50 & 50 & 0.0 & 5.8 & 243\; t:=70. At step n=1,2,...: t := 20 + (digit\_sum(n*t)\textasciicircum{}2 \% 100). Emit `\$\{t\}c: \$\{t\texttt{<}55 ? 'too cold'... \\
\texttt{0140} & U & U & U & 150 & 151 & 157 & 13 & 13 & 13 & 3.0 & 2.8 & --- \\
\texttt{0141} & C & U & U & 225 & 236 & 210 & 55 & 55 & 55 & 0.0 & 5.2 & 160\; n ? \{1, 2, 3, ...\} ? (right foot, left foot, right hand, left hand, head, right hip, left hip, whole... \\
\texttt{0142} & V & V & C & 238 & 241 & 208 & --- & --- & 13 & 0.0 & 7.6 & --- \\
\texttt{0143} & U & U & U & 205 & 230 & 208 & 13 & 13 & 13 & 4.0 & 3.2 & --- \\
\texttt{0144} & U & U & U & 248 & 254 & 248 & 13 & 13 & 13 & 3.0 & 4.2 & --- \\
\texttt{0145} & U & U & U & 307 & 299 & 292 & 60 & 60 & 60 & 2.0 & 8.0 & --- \\
\texttt{0146} & C & U & U & 364 & 391 & 358 & 13 & 13 & 13 & 0.0 & 3.0 & 367\; f(n) = reverse of W[(n-1)\%4][((n-1)/4)\%4] + n for n=1,2,... with W=[[clergyman,clerk... \\
\texttt{0147} & U & C & C & 64 & 58 & 61 & 5 & 5 & 5 & 2.0 & 0.8 & --- \\
\texttt{0148} & C & C & C & 51 & 49 & 51 & 5 & 5 & 5 & 1.0 & 0.0 & 0\; (none needed) \\
\texttt{0149} & C & C & U & 64 & 68 & 71 & 5 & 13 & 5 & 1.0 & 2.0 & 38\; natural text order \\
\texttt{0150} & U & U & U & 113 & 116 & 122 & 13 & 5 & 13 & 2.0 & 0.0 & --- \\
\texttt{0151} & V & V & V & 104 & 109 & 120 & --- & --- & --- & 0.0 & 2.2 & --- \\
\end{longtable}\endgroup

\section{Proof of \Cref{cor:compressibility-cond}}\label{app:proof}

Despite being trivial, we include the proof here for completeness:

\begin{proof}
Let $\beta L$ be the cost of the program $p^{\lambda}$ ``print $x_{1:L}$,
then $x_{L+1:L+k}$'', where $c_{0}$
is the fixed overhead of the print template and is independent of the intended
term. Let $z'\ne x_{L+1:L+k}$ be an alternative continuation. This gives a rival program $q$ of the same cost $\beta L + c_{0}$. If
$\beta L + c_{0} < \cost(p^{\star}) + \delta$, this program is a rival inside
the window that continues differently, contradicting $\delta$-unquestionability.
Hence $\beta L + c_{0} \geq \cost(p^{\star}) + \delta$, i.e.
$\cost(p^{\star}) \leq \beta L - \delta + c_{0}$.
\end{proof}

\section{Acknowledging Models, Data and Libraries Used}

We used the following models, data and libraries, and we acknowledge their use. 

The
reference machines were Qwen3-14B and Qwen3-8B~\citep{qwen3tech2025} and
Phi-4-reasoning~\citep{abdin2025phi4reasoning}, all loaded in 4-bit NF4
quantisation~\citep{dettmers2023qlora}. Gemini 3.6
Flash~\citep{google2026gemini36flash} was the composer, the verifier, the typer
and the scoring judge, and Gemini 3.5 Flash-Lite~\citep{google2026gemini35flashlite}
the budget judge. The examinees were GPT-5.6 Sol~\citep{openai2026gpt56},
GPT-5 nano~\citep{openai2025gpt5}, Claude Opus 5~\citep{anthropic2026opus5},
Claude Sonnet 5~\citep{anthropic2026sonnet5}, Claude Fable
5.1~\citep{anthropic2026fable51}, Claude Haiku 4.5~\citep{anthropic2025haiku45},
Gemini 3.8 Flash, 3.6 Flash, 3.5 Flash-Lite and 3.1
Flash-Lite~\citep{google2026gemini38flash,google2026gemini36flash,google2026gemini35flashlite,google2026gemini31flashlite},
OLMo-2-7B~\citep{olmo2024olmo2} and Qwen2.5-1.5B~\citep{qwen25tech2024}.
Gemma-3-4B~\citep{gemma2025gemma3}, Llama-3.1-8B~\citep{grattafiori2024llama3}
and OLMo-2-7B were also calibrated, as a check on the reference machines'
constants. 

Abilities were compared with Epoch AI's Capabilities
Index~\citep{epoch2025eci}, whose data are released under CC BY 4.0. The local
models ran with PyTorch~\citep{paszke2019pytorch},
Transformers~\citep{wolf2020transformers}, Accelerate~\citep{accelerate2022} and
bitsandbytes, with weights from the Hugging Face Hub, and the API models were
called through the Google Gen AI, OpenAI and Anthropic Python libraries. The
analyses used NumPy~\citep{harris2020numpy}, SciPy~\citep{virtanen2020scipy} and
Matplotlib~\citep{hunter2007matplotlib}, in Jupyter
notebooks~\citep{kluyver2016jupyter} run on Kaggle Code and Google Colab. 

\section{Extended Related Work}\label{sec:extendedrelated}

\paragraph{Inverse problems, inversion problems and one-way functions.}
Inverse problems have always been quintessential to (artificial) intelligence. \citet{mccarthy2006challenges} said that ``getting reality from appearance is an inverse problem. Geologists, oil companies, and astronomers are faced with inverse problems. Their
solution is intellectually difficult and computationally intensive. Human-level AI
systems will also have to be able to infer reality from appearances related to
them in complex ways''. 
The term `inverse problems' was traditionally understood in the context of education and scientific discovery \citep{groetsch1999inverse}. In machine learning, it is now commonly associated with the restoration or recovering of a part of an image or other modality 
\citep{zirvi2025diffusion}
but its original conception as general inversion problems in science is still preserved with some benchmarks for LLMs \citep{dou2025hybrid,zheng2025inversebench}.
The term `inversion problem' usually has a more restricted perspective in computer science and mathematics, as the specific goal of inverting input-outputs of a given function or machine \citep{levin1973,levin2003tale,levin2013universal}. Some of these problems, the so-called one-way functions, are problems that are very easy to compute in one direction, but extremely difficult and time-consuming to reverse. Its existence, not proven yet in its most precise formulation \citep{levin2003tale},  would imply that NP $\neq$ P, which would be reassuring to the notion of hard-to-solve easy-to-verify in which future AI evaluation may depend. 

\paragraph{Solutional complexity.} 
Inverse problems are different for functions, sequences and worlds. One goal is to find a short description of the world, and another to find a policy for that world that allows us to success or explain part of it. 
In the C-test the \emph{instance generator} and the
\emph{acceptable policy} (the solution) for the `problem' coincide; in interactive settings this should be different. \citet{hernandez2015ctests} shows that aggregating tasks
weighted by such a \emph{difficulty function on policies}, rather than by a
universal distribution over tasks, yields decompositions of which the C-test
is the natural non-interactive special case. This solutional view is
developed at length by \citet{hernandez2017book}, where task difficulty is
characterised by properties (size, running time and space, etc.) of the set of acceptable solutions (given some tolerance). Even if we are using sequences, the
present paper adopts this view, made clear by the use of the conditional version of Levin's $Kt$. If we were to generate
environments, we should look for the shortest description of an acceptable
policy (solution) for the environment, not the shortest description of the
environment, as in the Universal Intelligence metric from \citet{legg2007universal}, or its approximation \citep{legg2013approximation}, where
simple uninteresting environments  dominate the measure, or make the
concept of optimality weak \citep{leike2015bad}.

\paragraph{Compression tests, LLMs as compressors and distilling
Occam's razor.}
Many `intelligence as compression' tests exist \citep{mahoney1999text,Dowe-Hajek,hutter2006human}, asking the examinee to compress the data. The idea of an LLM compressing a distribution comes back to the very definition of language model given by  \citet{shannon1949communicationB,shannon1951prediction}. \citet{jaimovitch2021think} asked whether LLMs identify
simple algorithmic concepts from small witness sets. In other words, they found that Occam's razor can be \emph{distilled} from data into the model's
prior. They measured how GPT architectures, program-induction systems, and
humans perform as a joint function of concept complexity and number of
examples. %
In general, it is well recognised today that LLMs are good compressors 
\citep{deletang2024language}, but the analysis of their power for small sequences for testing and the calibration of difficulty is only fully achieved in C-test ONE.

\paragraph{Item response theory, characteristic curves and Epoch's ECI.} 
Our derivation of abilities from a logistic fit follows the tradition of Item Response Theory (IRT) \citep{embretson2013item,bock2021item}, where the notion of difficulty plays a central stage \citep{rust2014modern}. IRT contrasts difficulty and ability from the start, and sets a common scale for them. 
Given a matrix of results of subjects and items, IRT infers the ability of the subjects and the difficulty of items at the same time.
Many traditional IRT models represent the probability of success as a logistic function of the  difference between ability and difficulty. 
The simplest of these models, the 1-PL or Rasch model, only estimates ability and difficulty, assuming the logistic curve has slope 1. The 2-PL model extends this by allowing a second parameter per item, the discrimination, represented by the slope of the logistic curve. These logistic curves are called item characteristic curves, with ability on the \xaxis and response on the \yaxis for a population of subjects. The dual curve is called the person characteristic curve \citep{vale1975study,
trabin1983person}, in this paper referred to as  (agent) characteristic curve, which maps response on the \yaxis for a population of items as a function of difficulty (in our case $Kt$) on the \xaxis. 
An important thing to clarify is that IRT, unlike this paper, is populational on the subjects, and cannot predict performance for items that have not been characterised during the estimation phase, as the parameters depend on the population of subjects. If the subjects or the items are changed (very different groups of humans), all the parameters may change.
IRT has been adapted to machine learning and NLP \citep{martinez2019item,lalor2016building,kipnis2024metabench,polo2024tinybenchmarks,vania-etal-2021-comparing,lalor2024item}, and more recently with the famous ECI index \citep{ho2025rosetta}, a standard IRT application of a large and well-curated matrix of AI models and benchmarks. This IRT application is performed at the benchmark level, rather than the item level. In our paper, difficulty is not inferred from a population but estimated from the items directly. That is why we can anticipate difficulty even before no human or systems has yet solved the problem.

\paragraph{Procedural generation and evaluating with video games.} 
Procedural generation \citep{cobbe2020leveraging,jin2023mini,mirzadeh2024gsm,voudouris2025animal} 
is common in assessment, and it is becoming more frequent in AI because of contamination. Our use of $r,t$ pairs is a kind of parametrised procedural generation, but the actual difficulty metric is then assigned by the program search. %
Sequences have many properties that make it a right choice for procedural generation, but more complex mini-worlds, where the solution is a policy, are likely to be the next step of this programme of evaluating AI using algorithmic information theory.
As mentioned in \Cref{sec:background}, there is a revival of evaluating AI with video games or other small worlds
\citep{mnih2015human,hu2025gamearena,paglieri2025balrog,ying2507assessing}, but the major links are with those that are procedurally-generated \citep{liebana2022general,voudouris2025animal,jin2023mini,ying2026ai}. 

\paragraph{Measurement theory and AI scales} 
Measurement theory \citep{hand2016measurement} deals, among other things, with the type of {\em scale} and the measurement {\em units}. Regarding the scales, Steven's typology of measurement \citep{stevens1946theory} includes `nominal', `ordinal', `interval' and `ratio' scales. %
For quantitative measurement, interval scales make the comparisons of differences meaningful, and calculating a mean is well justified. Ratio scales set an absolute value, and hence ratios become meaningful.  
For a latent factor, the choice of a scale is arbitrary. In IRT, it is somewhat controversial (e.g., \citep{michell1999measurement,lord1975ability}). Although some researchers advocate for a ratio or interval-scale basis for test scores (e.g., `the odds (or log odds) of student 1 answering a class of items correctly are twice the odds (or log odds) of student 2 doing so' is a meaningful assertion \citep{freund2019rasch}), others point out that a latent trait scale of a fitted IRT model is convenient but not inherently meaningful because any monotonic transformation of the estimated latent trait scores produces an equally valid model \citep{lord1975ability,wallmark2024introducing}. 
In contrast to this, our magnitude is just bits, which is a ratio scale, with an absolute zero and fully additive. Beyond algorithmic information theory \citep{hernandez2017book}, it is unusual to use bits as a unit for intelligence evaluation. A major exception is the concept of perplexity (or cross-entropy) in LLMs, inherited from \citet{shannon1948mathematical}. While perplexity was a good proxy of capabilities in the early days of LLMs, the extensive use of post-training stages and more recently reasoning makes it unfit for general capability evaluation. And this will be more so in agential situation. This paper reconnects evaluation, large language models and information coding in a different, yet similarly fundamental way. 

\end{document}